\documentclass{article}
\usepackage{hyperref}
\usepackage{tabularx} 
\pdfoutput=1

% if you need to pass options to natbib, use, e.g.:
%     \PassOptionsToPackage{numbers, compress}{natbib}
% before loading neurips_2025

% ready for submission
\usepackage[final]{neurips_2025}

% to compile a preprint version, e.g., for submission to arXiv, add add the
% [preprint] option:
%     \usepackage[preprint]{neurips_2025}

% to compile a camera-ready version, add the [final] option, e.g.:
%     \usepackage[final]{neurips_2025}

% to avoid loading the natbib package, add option nonatbib:
%    \usepackage[nonatbib]{neurips_2025}

\usepackage[utf8]{inputenc} % allow utf-8 input
\usepackage[T1]{fontenc}    % use 8-bit T1 fonts
\usepackage{hyperref}       % hyperlinks
\usepackage{url}            % simvple URL typesetting
\usepackage{booktabs}       % professional-quality tables
\usepackage{amsfonts}       % blackboard math symbols
\usepackage{nicefrac}       % compact symbols for 1/2, etc.
\usepackage{microtype}      % microtypography
\usepackage{xcolor}         % colors
\usepackage{bm}
\usepackage{enumitem,amsthm,amsmath,amsfonts,amssymb}
\usepackage{indentfirst}
\usepackage{float}
\usepackage{colortbl}
\usepackage{mathtools}
\usepackage{algorithmic}
\usepackage{algorithm}
\usepackage{xcolor}
\usepackage{graphicx}
\usepackage{subcaption}
\usepackage{tikz}
\usepackage{multirow}
\usepackage{multicol}
\usepackage{hyperref}
\usepackage{soul}
\usepackage{mathrsfs, fancyhdr}
\usepackage{amscd}
\usepackage{booktabs} 
\newtheorem{theorem}{Theorem}

\newtheorem{lemma}{Lemma}
\newtheorem{definition}{Definition}%
\newtheorem{assumption}{Assumption}
\theoremstyle{remark} 
\newtheorem{remark}{Remark}
\def\E{\mathbb{E}}
\def\w{\mathbf{w}}
\def\x{\mathbf{x}}
\def\tx{\tilde{\mathbf{x}}}
\def\e{\mathbf{\epsilon}}
\def\a{\mathbf{a}}
\def\b{\mathbf{b}}
\def\c{\mathbf{c}}
\def\u{\mathbf{u}}
\def\v{\mathbf{v}}
\def\y{\mathbf{y}}
\def\z{\mathbf{z}}
\def\W{\mathbf{W}}
\def\G{\mathbf{G}}
\def\H{\mathbf{H}}
\def\bfI{\mathbf{I}}

\def\B{\mathcal{B}}
\def\R{\mathbb{R}}
\def\I{\mathbb{I}}
\def\P{\mathbb{P}}
\def\X{\mathcal{X}}
\def\Y{\mathcal{Y}}
\def\Z{\mathcal{Z}}
\def\N{\mathcal{N}}
\def\F{\mathcal{F}}
\def\L{\mathcal{L}}
\def\fR{\mathfrak{R}}

\title{Optimal  Rates for Generalization  of Gradient Descent for Deep ReLU Classification}

% The \author macro works with any number of authors. There are two commands
% used to separate the names and addresses of multiple authors: \And and \AND.
%
% Using \And between authors leaves it to LaTeX to determine where to break the
% lines. Using \AND forces a line break at that point. So, if LaTeX puts 3 of 4
% authors names on the first line, and the last on the second line, try using
% \AND instead of \And before the third author name.

\author{%
Yuanfan Li \\
  School of Mathematical Sciences\\
 Zhejiang University\\
  \texttt{22335070@zju.edu.cn} \\
  % examples of more authors
   \And
 Yunwen Lei \\
  Department of Mathematics \\
 The University of Hong Kong\\
 \texttt{leiyw@hku.hk} \\
   \And
Zheng-Chu Guo \\
 School of Mathematical Sciences\\
 Zhejiang University\\
  \texttt{guozc@zju.edu.cn} \\
   \And
 Yiming Ying \thanks{Corresponding author} \\
 School of mathematics and statistics \\
 Univerity of Sydney\\
  \texttt{yiming.ying@sydney.edu.au}
}
\begin{document}

\maketitle

\begin{abstract}
Recent advances have significantly improved our understanding of the generalization performance of gradient descent (GD) methods in deep neural networks. A natural and fundamental question is whether GD can achieve generalization rates comparable to the minimax optimal rates established in the kernel setting. Existing results either yield suboptimal rates of $O(1/\sqrt{n})$, or focus on networks with smooth activation functions, incurring exponential dependence on network depth $L$. In this work, we establish optimal generalization rates for GD with deep ReLU networks by carefully trading off optimization and generalization errors, achieving only polynomial dependence on depth. Specifically, under the assumption that the data are NTK separable from the margin $\gamma$, we prove an excess risk rate  of $\widetilde{O}(L^6 / (n \gamma^2))$, which aligns with the optimal SVM-type rate $\widetilde{O}(1 / (n \gamma^2))$ up to depth-dependent factors.
A key technical contribution is our novel control of activation patterns near a reference model, enabling a sharper Rademacher complexity bound for deep ReLU networks trained with gradient descent.

\end{abstract}

\section{Introduction}
\label{intro}
Deep neural networks trained through first-order optimization methods have achieved remarkable empirical success in diverse domains \citep{NIPS2012_c399862d}. Despite their widespread adoption, a rigorous theoretical understanding of their optimization dynamics and generalization behavior remains incomplete, particularly for ReLU networks. The inherent challenges arise from the non-smoothness and non-convexity of the loss landscape induced by ReLU activations and network architectures, which complicate the classical analysis. Intriguingly, empirical evidence demonstrates that over-parameterized models often achieve zero training error but still generalize well even in the absence of explicit regularization \citep{zhang2016understanding}. This phenomenon has spurred significant theoretical research to understand its underlying mechanisms.

A prominent line of research uses the neural tangent kernel (NTK) framework to analyze neural network training \citep{jacot2018neural}. In the infinite-width limit, gradient descent (GD) dynamics can be characterized by functions in the NTK's reproducing kernel Hilbert space (RKHS), with convergence guarantees established for both shallow and deep networks \citep{du2019gradientdescentprovablyoptimizes,pmlr-v97-du19c}. These results demonstrate that GD converges to global minima within a local neighborhood of initialization, provided that the network width is sufficiently large. In particular, the appealing work \citep{pmlr-v97-arora19a} showed that, if the network width $m = \widetilde{O}(\mathrm{poly}(n, 1/\lambda_0, L))$, then the generalization error is of the order $L\sqrt{\frac{\y^\top (\H^\infty)^{-1} \y}{n}}$, where $\H^\infty$ denotes the NTK gram matrix over the training data and $\lambda_0=\lambda_{\min}(\H^\infty)$. However, the assumption $\lambda_0> 0$ is a strong assumption because it tends to zero if the size of the training data tends to infinity as shown by \cite{su2019learning}. \citet{ji2020polylogarithmicwidthsufficesgradient} achieved logarithmic width requirements for NTK-separable data with a margin $\gamma$. They derived the risk bound of order $1/(\gamma^2\sqrt{n})$ for two-layer ReLU networks using Rademacher complexity. The recent work \cite{chen2021overparameterizationsufficientlearndeep} extended their results from shallow to deep neural networks, the authors considered the NTK feature learning and proved the bound of $\widetilde{O}\big(
\min\{\frac{4^LL^2\sqrt{m}}{\gamma\sqrt{n}},\frac{L^3/2}{\gamma\sqrt{n}}+\frac{L^{11/3}}{m^{1/6}\gamma^{4/3}}\}\big)$. Recently, \citet{lei2024optimization} derived the bound of $1/(\gamma^2n)$ for two-layer ReLU networks. However, all the above bounds explicit suboptimal $\frac{1}{\sqrt{n}}$ dependence on the sample size $n$ or only focus on shallow networks.

Complementing the NTK framework, another line of research employs algorithmic stability to analyze neural networks. In particular, \citet{NEURIPS2020_b7ae8fec} demonstrated that the Hessian spectral norm scales with width $m$ as $\widetilde{O}\left(\frac{1}{\sqrt{m}}\right)$, providing the theoretical basis for subsequent studies on generalization in overparameterized networks \citep{richards2021stability,NEURIPS2022_fb8fe6b7,JMLR:v25:23-0422, taheri2025sharper}. 
The work \citep{JMLR:v25:23-0422} achieves the bound of $\widetilde{O}(1/n\gamma^2)$ for shallow neural networks, which is almost optimal, as illustrated by \citet{JMLR:v22:20-997,NEURIPS2023_d8ca28a3}. More recently, \citet{taheri2025sharper} extended this line of work to deep networks, obtaining a generalization bound of $\widetilde{O}(e^{O(L)}/(n\gamma^2))$. However, this approach often assume smooth activation functions and can not apply to the non-smooth  ReLU networks.
In summary, these works either consider smooth neural networks or develop generalization bounds with exponential dependency on $L$.
These limitations motivate two fundamental questions:

\emph{Can we develop optimal risk bounds of $1/(\gamma^2 n)$ for deep ReLU networks through refined Rademacher complexity analysis? Furthermore, is it possible to replace the exponential dependence on $L$ with $\mathrm{poly}(L)$?}

\begin{table}[t]
  \centering
  \scalebox{0.95}{\begin{tabularx}{\textwidth}{l *{5}{>{\centering\arraybackslash}X}} % 6列均分宽度
    \toprule
    & Activation & Width & Training error & Generalization error & Network \\ \midrule
    \citeauthor{ji2020polylogarithmicwidthsufficesgradient} & ReLU & $\widetilde{\Omega}\left(\frac{1}{\gamma^8}\right)$ & $\widetilde{O}\left(\frac{1}{\gamma^2 T}\right)$ & $\widetilde{O}\left(\frac{1}{\gamma^2\sqrt{n}}\right)$ & Shallow \\ 
    \citeauthor{lei2024optimization} & ReLU & $\widetilde{\Omega}\left(\frac{1}{\gamma^8}\right)$ & $\widetilde{O}\left(\frac{1}{\gamma^2 T}\right)$ & $\widetilde{O}\left(\frac{1}{\gamma^2n}\right)$ & Shallow \\ \citeauthor{chen2021overparameterizationsufficientlearndeep} & ReLU & $\widetilde{\Omega}\left(\frac{L^{22}}{\gamma^8}\right)$ & $\widetilde{O}\left(\frac{1}{\gamma^2 T}\right)$ & $\widetilde{O}\big(
\min\{\frac{4^LL^2\sqrt{m}}{\gamma\sqrt{n}},$ & Deep \\
 &  &  &  &$\frac{L^3/2}{\gamma\sqrt{n}}+\frac{L^{11/3}}{m^{1/6}\gamma^{4/3}}\}\big)$ & \\
    \citeauthor{taheri2025sharper} & Smooth & $\widetilde{\Omega}\left(\frac{1}{\gamma^{6L+4}}\right)$ & $\widetilde{O}\left(\frac{1}{\gamma^2 T}\right)$ & $\widetilde{O}\left(\frac{e^{O(L)}}{\gamma^2n}\right)$ & Deep \\
    %\rowcolor{gray!10}
    \textbf{Ours} & ReLU & \bm{$\widetilde{\Omega}\left(\frac{L^{16}}{\gamma^{8}}\right)$} & $\widetilde{O}\left(\frac{1}{\gamma^2 T}\right)$ & \bm{$\widetilde{O}\left(\frac{L^6}{\gamma^2n}\right)$} & Deep \\
    \bottomrule
  \end{tabularx}}
  \caption{Comparison of learning neural networks with GD on NTK separable data with prior works. Here $m$ is the network width, $L$ is the network depth, $n$ is the sample size, $\gamma$ is the NTK-margin, $T$ is the number of iterations.}
  \label{tab:comparison}
\end{table}

In this paper, we provide affirmative answers to both questions. Our main contributions are listed below.

\begin{enumerate}[leftmargin=*]
    \item We prove that gradient decent with step size $\eta$ and $T$ iterations achieves the convergence rate of $F_S(\overline{\W})/(\eta T)$, where $F_S(\overline{\W}) = 3\eta T\L_S(\overline{\W})+\|\overline{\W}-\W(0)\|_F^2$, $\overline{\W}$ is a reference model, $\W(0)$ is the initialization point and $\L_S(\cdot)$ is the training error. This indicates that  all iterates remain within a neighborhood of $\overline{\W}$. By refining the analysis of ReLU activation patterns, we reduce the overparameterization requirement by a factor of $L^6$ as compared to \citet{chen2021overparameterizationsufficientlearndeep} (see Remark~\ref{rem:opti} ).
    \item We establish a population risk bound of $\widetilde{O}(L^4 F(\overline{\W}) / n)$, where $F(\overline{\W})$ extends the empirical counterpart $F_S(\overline{\W})$ to the population setting. Our analysis introduces two key technical contributions. First, we derive sharper Rademacher complexity bounds for the hypothesis class encompassing all gradient descent iterates. A central innovation is our representation of the complexity via products of sparse matrices, whose norms are tightly controlled using optimization-informed estimates (see Remark~\ref{rem:rade}). Second, by leveraging the covering number techniques, we prove that ReLU networks remain $\widetilde{O}(L^2)$-Lipschitz continuous in a neighborhood around the initialization—a substantial improvement over previous exponential bounds \citep{JMLR:v25:23-0740, taheri2025sharper} (see Remark~\ref{rem:Lip}).
    
    \item For NTK separable data with a margin $\gamma$, we show that neural networks trained through gradient descent can achieve $\widetilde{O}(L^6/(\gamma^2 n))$ risk. This is sharper than existing bounds and matches the result in shallow neural networks  (see Table~\ref{tab:comparison} for a comparison with existing work).
\end{enumerate}

\section{Related Works}
\label{works}
\subsection{Optimization}
The foundational work of \citet{jacot2018neural} introduced the Neural Tangent Kernel (NTK) framework, demonstrating that in the infinite-width limit, neural networks behave as linear models with a fixed tangent kernel \citep{NEURIPS2020_b7ae8fec,lee2019wide}. This lazy training regime \citep{chizat2019lazy}, where parameters remain close to initialization, enables gradient descent to converge to global optima near initialization \citep{pmlr-v97-du19c,pmlr-v97-arora19a}. These analyses showed that the training dynamics can be governed by the NTK Gram matrix, which leads to substantial overparameterization ($m\gtrsim n^6/\lambda_0^4$). This was later improved by \citet{oymak2020toward}. They showed that if the square-root of the number of the network parameters exceeds the size of the training data, randomly initialized gradient descent converges at a geometric rate to a nearby global optima. The work (\citet{ji2020polylogarithmicwidthsufficesgradient}) achieved polylogarithmic width requirements for logistic loss by leveraging the 1-homogeneity of two-layer ReLU networks. However, it should be mentioned that this special property does not hold for deep networks. The NTK framework was extended to deep architectures by \citet{arora2019exact} for CNNs and by \citet{du2019gradientdescentprovablyoptimizes} for ResNets using the last-layer NTK. \citet{JMLR:v25:23-0740} pointed out that such a characterization is loose, only capturing the contribution from the last layer. They further gave the uniform convergence of all layers as $m\to \infty$ and convergence guarantee for stochastic gradient descent (SGD) in streaming data setting. \citet{pmlr-v97-allen-zhu19a} showed that the optimization landscape is almost-convex and semi-smooth, based on which they proved that SGD can find global minima. \citet{NEURIPS2019_cf9dc5e4} introduced the neural tangent random feature and showed the convergence of SGD under the overparameterized assumption $m\gtrsim n^7$. 
\subsection{Generalization}
The NTK framework has yielded generalization bounds scaling as $\sqrt{\mathbf{y}^\top (\H^\infty)^{-1}\mathbf{y}/n}$ \citep{pmlr-v97-arora19a,NEURIPS2019_cf9dc5e4}. This data-dependent complexity measure helps to distinguish between random labels and true labels. \citet{li2018learning} showed that SGD trained networks can achieve small test error on specific structured data. A very popular approach to studying 
the generalization of neural networks is via the uniform convergence, which analyzes generalization gaps in a hypothesis space using tools such as Rademacher complexity or covering numbers \citep{neyshabur2015norm,bartlett2017spectrally,golowich2018size,JMLR:v25:22-1250}. However, this could lead to vacuous generalization bound in some cases \citep{nagarajan2019uniform}. Moreover, these bounds typically exhibit exponential dependence on depth $L$, thus often leading to loose bounds \citep{chen2021overparameterizationsufficientlearndeep}. This capacity-based method usually results in the generalization rate of the order $\widetilde{O}(1/\sqrt{n})$. Recent work has also exploited stability arguments for  generalization guarantees \citep{richards2021stability,NEURIPS2022_fb8fe6b7,JMLR:v25:23-0422,deora2023optimization,taheri2025sharper}. The main idea of algorithmic stability is to study how the perturbation of training samples would affect the output of an algorithm \citep{rogers1978finite}. The connection to generalization bound was established in \citet{bousquet2002stability}. \citet{hardt2016train} gave the stability analysis of SGD for Lipschitz, smooth and convex problems. \citet{lei2020fine} further studied SGD under much wilder assumptions. \citet{NEURIPS2020_b7ae8fec} identified weak convexity of neural networks, enabling stability analyses with polynomial width requirements for quadratic loss \citep{richards2021stability,NEURIPS2022_fb8fe6b7}. Moreover, \citet{JMLR:v25:23-0422,taheri2025sharper} obtained generalization bounds of order $\widetilde{O}(1/n)$ by using a generalized local quasi-convexity property for sufficiently parameterized networks. However, these methods depend on smooth activations, and whether similar or even better bound can be established for deep ReLU networks is still unknown. The recent work derived excess risk bounds of order $\widetilde{O}(1/n)$ for shallow ReLU networks~\citep{lei2024optimization}.
\section{Preliminaries}
\label{prob_form}
\paragraph{Notation}
Throughout the paper, we denote \(a \lesssim b\) if there exists a constant $c>0$ such that $a\le cb$, and denote $a\asymp b$ if both $a \lesssim b$ and $b \lesssim a$ hold. We use the standard notation $O(\cdot),\Omega(\cdot)$ and use $\widetilde{O}(\cdot),\widetilde{\Omega}(\cdot)$ to hide polylogarithmic factors.
Denote by $\mathbb{I}\{\cdot\}$ the indicator function (i.e., taking the value 1 if the argument holds true, and 0 otherwise). 
We use $\mathcal{N}(\mu,\sigma^2)$ to denote the Gaussian distribution of mean $\mu$ and variance $\sigma^2$. For a positive integer $n$, we denote $[n]:=\{1,\ldots,n\}$.
For a vector $x\in \R^d$, we use $\|x\|_2$ to denote its Euclidean norm. For a matrix $A\in\R^{m\times n}$, we denote $\|A\|_2$ and  $\|A\|_F$ the corresponding spectral norm and Frobenius norm respectively. The $(2,1)$-norm of $A$ is defined as $\|A\|_{2,1} = \sum_{j=1}^n\|A_{:j}\|_2$. Let $\langle\cdot,\cdot\rangle$ be the inner product of a vector or a matrix, i.e., for any matrices $A,B\in\R^{m\times n}$, we have $\|A\|_F^2 = tr(A^\top A)$ and $\langle A,B\rangle = tr(A^\top B)$.
Let $L\in\mathbb{N}$, $\mathbf{A} = (A_1,\ldots,A_L)$ and $\mathbf{B} = (B_1,\ldots,B_L)$ be two collections of arbitrary matrices such that $A_i$ and $B_i$ have the same size for all $i\in[L]$.
We define $\langle \mathbf{A}, \mathbf{B}\rangle = \sum_{i=1}^L tr(A_i^\top B_i)$. Denote $\|\mathbf{A}\|_{2,\infty}=\max_l\|A_l\|_2$.
For a matrix $\W$, we define $(\w_r)^\top$ the $r$-th row of $\W$. 
Denote $\|\cdot\|_0$ the $l^0$-norm which is the number of nonzero entries of a matrix or a vector. We denote $C\ge1$ as an absolute value, which may differ from line to line.

Let $\X \subseteq \R^d$ be the input space, $\Y=\{1,-1\}$ be the output space, and $\Z=\X\times \Y$. Let $\rho$ be a probability measure defined on $\Z$. Let $S=\{z_i=(\x_i,y_i):i=1,\ldots,n\}$ be a training dataset drawn from $\rho$. Let $\mathcal{W}:=\R^{m\times d} \times (\R^{m\times m})^{L-1}$ be the parameter space. $\W^1\in\R^{m \times d}$ and $\W^l\in \R^{m\times m} $ for $l=2,\ldots,L$ is the weight of the $l$-th hidden layer. $\W=( \W^1,\ldots, \W^L )\in\mathcal{W}$ denotes the collection of weight matrices for all layers. Let $\a=(a_1,\ldots,a_m)^{\top}\in\R^{m}$ be the weight vector of the output layer and $\sigma(\cdot)=\max\{\cdot, 0\}$ denote the ReLU activation function. For $\x\in \X$, we consider the $L$-layer deep ReLU neural networks with width $m$ as follows,
\begin{align}\label{eq:NN}
   f_{\W}(\x)=\a^\top\sqrt{\frac{2}{m}}\sigma\left(\W^L\cdots\sqrt{\frac{2}{m}}\sigma(\W^1\x)\right) .
\end{align}
Given an input $\x\in\X$ and parameter matrix $\W = (\W^1,\cdots,\W^L)$ of an $L$-layer ReLU network $f_\W(\x)$. We denote the output of the $l$-th layer by $h^l(\x)=\sqrt{\frac{2}{m}}\sigma(\W^l h^{l-1}(\x))$ with $h^0(\x)=\x$. Then $f_\W(\x) = \a^\top h^L(\x)$.
We define $   \B_R(\W)=\{\widetilde{\W}\in \mathcal{W}: \max_l \|\W-\widetilde{\W}^l\|_2\leq R\} $. The performance of the network $f_{\W}(\x)$ is measured by the following empirical risk $\L_S(\W)$ and population risk $\L(\W)$, respectively: \[ \L_S(\W) = \frac{1}{n}\sum_{i=1}^n\ell(y_if_\W(\x_i)) \quad \text{and}\quad \L(\W)=\E_z\ell(yf_\W(\x))    ,\] where we use logistic loss $\ell(z):= \log (1+\exp(-z))$ throughout this paper. We further assume the following symmetric initialization \citep{nitanda2020optimal,kuzborskij2023learninglipschitzfunctionsgdtrained,JMLR:v25:23-0740}:
\begin{assumption}[Symmetric initialization]\label{ass:init}
    Without loss of generality, we assume the network width $m$ is even, and $a_{r+\frac{m}{2}}=-a_r\in  \{-1, +1\}  $ for $ 1\le r\le m/2 $. $\W(0)\in\mathcal{W}$ satisfies \begin{align}\label{eq:initialization}
       & \w_r^1(0) \sim \mathcal{N}(0, \mathbf{I}_d) , \w_r^l(0) \sim \mathcal{N}(0,   \mathbf{I}_m)  \quad 2\le l\le L-1  \text{  and } r\in[m],
  \nonumber \\
      & \w_r^L(0) \sim \mathcal{N}(0, \mathbf{I}_m) \text{ for } r=\{1,\ldots,\frac{m}{2}\}, \text{ and }
     \w_{r+\frac{m}{2}}^L(0)=\w_r^L(0). 
    \end{align}
\end{assumption}
We remark that this initialization is for theoretical simplicity, using general initialization techniques will not affect the main results.
We fix the output weights $\{a_r\}$ and use Gradient Descent (GD) to train the weight matrix $\W$ \citep{ji2020polylogarithmicwidthsufficesgradient,pmlr-v97-arora19a,zou2018stochasticgradientdescentoptimizes}.
\begin{definition}[Gradient Descent]\label{GD}
   
    GD updates $\{\W(k)\}$ by
\begin{align}\label{eq:update-GD}
    \W^l (t+1)&=\W^l (t) -\eta \frac{\partial \L_S(\W(t))}{\partial \W^l(t)} \text{ for all } l\in[L],t=0,\cdots,T-1,
\end{align}
where $\eta >0$ is the step size.  
\end{definition}
Note that in each layer we employ $\sqrt{{2}/{m}}$ as the regularization factor instead of the conventional $\sqrt{{1}/{m}}$ \citep{ji2020polylogarithmicwidthsufficesgradient}, which is due to $\E_{x\sim\N(0,1)}\sigma^2(x)={1}/{2}$ for our activation function $\sigma(\cdot)$. This scaling matches both the theoretical framework of \citet{pmlr-v97-du19c} and the initialization scheme of \citep{he2015delving} (where weights $\w_r^{l}\sim\N(0,{2}/{m})$).  As will be shown later (Appendix~\ref{sec:lem}), this regularization ensures stable gradient propagation and maintains consistent variance across layers.
 
The following assumption is standard in the literature \citep{NEURIPS2019_cf9dc5e4,ji2020polylogarithmicwidthsufficesgradient,chen2021overparameterizationsufficientlearndeep}.
\begin{assumption}\label{ass:input}
   We assume $\X= S^{d-1}$ be the sphere.
\end{assumption}
Throughout the paper, we assume that Assumptions 1 and 2 always hold true.

\paragraph{Error decomposition} In this work, we analyze the performance of gradient descent through the lens of population risk. To facilitate this analysis, we decompose the population risk $\L(\W(T))$ as follows
\[\L(\W(T))=(\L(\W(T))-\L_S(\W(T)))+\L_S(\W(T)),\]
where the first term captures the generalization gap, quantifying the network's performance on unseen data. The second term represents the optimization error, which reflects GD's ability to find global minima. We will use tools in the optimization theory to study the optimization error \citep{ji2020polylogarithmicwidthsufficesgradient,schliserman2022stability}, and Rademacher complexity to control the generalization gap \citep{mohri2018foundations}.
\section{Main Results}
\label{results}
In this section, we present the main results. In Section~\ref{sec:optimization}, we show the optimization analysis of gradient descent. In Section~\ref{sec:generalization}, we use Rademacher complexity to control the generalization gap.  In Section~\ref{sec:NTK_margin}, we apply our generalization results to NTK-separable data with a margin $\gamma$. 
\subsection{Optimization Analysis}\label{sec:optimization}
We introduce the following notations for a reference model $\overline{\W}$
\[F_S(\overline{\W}):=3\eta T\L_S(\overline{\W})+\|\W(0)-\overline{\W}\|_F^2,\quad \tilde{F}_S(\overline{\W}) = \frac{1}{n}\sum_{i=1}^n|\ell'(y_if_{\overline{\W}}(\x_i)|.\]
Without loss of generality, we assume $F_S(\overline{\W})\ge 1$.
\begin{theorem}\label{thm:optimization}
    Let Assumptions~\ref{ass:init},~\ref{ass:input} hold. If $m\gtrsim L^{16}(\log m)^4\log(nL/\delta)F^4_S(\overline{\W}), \eta \leq \min\{4/(5L),1/(20L\tilde{F}_S(\overline{\W}))\}$, then with probability at least $1-\delta$, for all \( t\le T\) we have
\[  \max_l \|\W^l-\overline{\W}^l\|_2^2\le   \|\W(t)-\overline{\W}\|_F^2 \leq F_S(\overline{\W}) \quad \text{and} \quad \eta\sum_{k=0}^{t-1}\L_S(\W(k))\leq F_S(\overline{\W}).     \]
\end{theorem}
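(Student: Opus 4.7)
The plan is to prove both conclusions simultaneously by induction on $t$, using the induction hypothesis $\|\W(k)-\overline{\W}\|_F^2\le F_S(\overline{\W})$ for all $k\le t$ (which in particular implies $\|\W(k)-\W(0)\|_F\le 2\sqrt{F_S(\overline{\W})}$) together with an almost-convex telescoping argument. The base case $t=0$ is immediate from the definition of $F_S(\overline{\W})$. For the induction step, the central identity is the standard GD expansion
\begin{equation*}
\|\W(k+1)-\overline{\W}\|_F^2 = \|\W(k)-\overline{\W}\|_F^2 - 2\eta\bigl\langle \nabla\L_S(\W(k)),\,\W(k)-\overline{\W}\bigr\rangle + \eta^2\|\nabla\L_S(\W(k))\|_F^2,
\end{equation*}
which I intend to telescope over $k=0,\ldots,t-1$ after controlling the cross term via an approximate convexity inequality and the squared-gradient term via a self-bounding property of the logistic loss.

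The approximate convexity step is where the geometry of ReLU networks enters. In the neighborhood $\B_{R}(\W(0))$ with $R\asymp\sqrt{F_S(\overline{\W})}$, I will establish (using lemmas proved in the appendix) that $f_\W$ is close to its linearization around any reference point in this neighborhood, with error scaling polynomially in $L$ and in the radius $R$, but shrinking like $1/\sqrt{m}$ through the fraction of flipped ReLU activation patterns. Concretely, I expect an inequality of the form
\begin{equation*}
\L_S(\W(k))\le \L_S(\overline{\W}) + \bigl\langle\nabla\L_S(\W(k)),\,\W(k)-\overline{\W}\bigr\rangle + \varepsilon_k,
\end{equation*}
where $\varepsilon_k\lesssim \mathrm{poly}(L)\cdot R\cdot\sqrt{R/\sqrt{m}}\cdot\tilde{F}_S(\overline{\W})$ uses the $1$-Lipschitzness of the logistic loss together with activation-pattern counting. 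For the quadratic term, I will use the fact that $\|\nabla_{\W^l}\L_S(\W)\|_F$ can be bounded by a $\mathrm{poly}(L)$ factor times the mean of $|\ell'(y_if_\W(\x_i))|$, and that $|\ell'(z)|\le \ell(z)$, yielding $\eta\|\nabla\L_S(\W(k))\|_F^2\lesssim \L_S(\W(k))$ under the step-size restriction $\eta\le 4/(5L)$.

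Plugging these two bounds into the telescoping identity and summing from $k=0$ to $t-1$ yields
\begin{equation*}
\|\W(t)-\overline{\W}\|_F^2 + \eta\sum_{k=0}^{t-1}\L_S(\W(k)) \le \|\W(0)-\overline{\W}\|_F^2 + 2\eta T\L_S(\overline{\W}) + \eta\sum_{k=0}^{t-1}\varepsilon_k.
\end{equation*}
The remaining task is to absorb the cumulative error $\eta\sum\varepsilon_k$ into the slack built into $F_S(\overline{\W})$; this is precisely what the overparameterization $m\gtrsim L^{16}(\log m)^4\log(nL/\delta)F^4_S(\overline{\W})$ is engineered for, since it forces $\varepsilon_k\le \L_S(\overline{\W})/2$ (say), after which the bound $3\eta T\L_S(\overline{\W})+\|\W(0)-\overline{\W}\|_F^2 = F_S(\overline{\W})$ closes the induction.

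The main obstacle is the activation-pattern control underlying $\varepsilon_k$: I need a high-probability bound, uniform over the $\B_R(\W(0))$ ball, on the number of ReLU units whose sign flips between $\W(0)$ and any $\W$ in the ball, and I need this bound to be sharp enough in $L$ to produce the $L^{16}$ rather than an exponential width requirement. This will be handled by a layer-by-layer induction showing that intermediate pre-activations remain well away from zero for most neurons (using Gaussian anti-concentration at initialization and the fact that the perturbation at layer $l$ accumulates only polynomially in $L$), combined with a union bound against a covering of $\B_R(\W(0))$. Once this activation-stability lemma is in hand, the ReLU network behaves like its linearization to sufficient accuracy throughout all $T$ iterations, and the rest of the proof is a clean inductive telescoping.
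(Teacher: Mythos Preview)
Your induction-plus-telescoping skeleton matches the paper's, and the activation-pattern control you sketch is indeed the underlying geometric input. However, two places in the proposal do not quite close, and both are fixed in the paper by a sharper use of the logistic loss than mere convexity and $1$-Lipschitzness.

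First, the error $\varepsilon_k$ in your approximate-convexity inequality does not come with a factor $\tilde F_S(\overline\W)$: the linearization error at $\x_i$ is multiplied by $|\ell'(y_if_{\W(k)}(\x_i))|$, i.e., the derivative at the \emph{iterate}, not at $\overline\W$. Bounding $|\ell'|\le 1$ and then demanding $\varepsilon_k\le \L_S(\overline\W)/2$ forces $m$ to scale with $1/\L_S(\overline\W)$, which the theorem does not assume. The paper instead uses the self-bounding $|\ell'(z)|\le \ell(z)$ to get $\varepsilon_k\le C\,\dfrac{L^{8/3}R^{4/3}(\log m)^{2/3}}{m^{1/6}}\,\L_S(\W(k))$ and absorbs this \emph{multiplicatively}: requiring the coefficient to be $\le 1/2$ is exactly what yields $m\gtrsim L^{16}F_S^4(\overline\W)$ (your guessed scaling $R\sqrt{R/\sqrt m}$ would give $F_S^3$, not $F_S^4$).

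Second, your gradient bound is too weak for the stated step size. From $\|\partial f_\W(\x_i)/\partial\W\|_F\le 2\sqrt L$ and $(\tfrac1n\sum|\ell'|)^2\le \L_S(\W(k))$ you get $\eta^2\|\nabla\L_S(\W(k))\|_F^2\le 4L\eta^2\L_S(\W(k))\le \tfrac{16}{5}\eta\L_S(\W(k))$ under $\eta\le 4/(5L)$; since $16/5>2$, the telescoped coefficient on $\L_S(\W(k))$ flips sign and you lose the bound on $\eta\sum_k\L_S(\W(k))$. The paper repairs this via co-coercivity of the logistic loss, $\ell(a)\ge \ell(b)+(a-b)\ell'(b)+2(\ell'(a)-\ell'(b))^2$, which adds $\tfrac{4\eta}{n}\sum_i(\ell'(\W(k))-\ell'(\overline\W))^2$ to the inner-product lower bound; after splitting $(\tfrac1n\sum|\ell'(\W(k))|)^2\le \tfrac54(\tfrac1n\sum(|\ell'(\W(k))|-|\ell'(\overline\W)|))^2+5\tilde F_S^2(\overline\W)$, the first piece is cancelled by the co-coercivity term (this is where $\eta\le 4/(5L)$ is used), and the residual $20L\eta^2\tilde F_S^2(\overline\W)\le \eta\L_S(\overline\W)$ is handled by the second step-size restriction $\eta\le 1/(20L\tilde F_S(\overline\W))$.
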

\begin{remark}\label{rem:opti}
   Our theorem shows that the convergence rate is bounded by the optimization error of a reference model, implying that any low-loss reference point guarantees good convergence. While prior works relied on NTK-induced solutions \citep{richards2021stability,pmlr-v97-arora19a}, we prove that there exists a reference model near initialization under the milder Assumption~\ref{ass:ntk}. Furthermore, our analysis implies that all training iterates remain within a neighborhood of the reference point, and thus near initialization, aligned with previous observations but without studying the kernel or the corresponding Gram matrix directly \citep{pmlr-v97-du19c,du2019gradientdescentprovablyoptimizes}. 
   
   %We claim our bound   which required either $m$ polynomial in $n$ or $m=\widetilde{\Omega}(L^{22})$, our results enjoy polylogarithmic width and only need $m=\widetilde{\Omega}(L^{16})$. The improvement is due to a more fine-grained analysis of activation patterns. We extend the properties in \citet{zou2018stochasticgradientdescentoptimizes,pmlr-v97-allen-zhu19a} under weaker assumptions. The proofs are provided in Appendix~\ref{sec:opti}. 

Here we provide the proof sketch and compare it with previous works. The starting point is to show deep ReLU networks admit almost convexity ( Lemma~\ref{lemma:prod_differ_derivative} ):
\begin{align}\label{eq:semi-smoooth}
    f_{\W}(\x_i) - f_{\overline\W }(\x_i) - \left\langle \frac{\partial f_{ \W }(\x_i)}{\partial  \W}, \W-\overline\W \right\rangle=\widetilde{O}\left(\frac{L^{{8}/{3}}R^{4/3}}{m^{1/6}}\right)
\end{align}

for $\W,\overline{\W}\in \B_R(\W(0))$. 
%However, \citet{NEURIPS2019_cf9dc5e4,chen2021overparameterizationsufficientlearndeep} derived the upper bound of $\widetilde{O}\left(\frac{L^{{11}/{3}}R^{4/3}}{m^{1/6}}\right)$.Our analysis outperforms it by a factor $L$, which helps to save a $L^6$ factor of $m$.

Then we can show all the iterates remain in $\B_{2\sqrt{F_S(\overline{\W})}}(\W(0))$ and the following inequality holds (Lemma~\ref{lemma:diff_norm}),
\begin{align}\label{eq:w_t-overline_W}
    \|\W(t+1)-\overline\W\|_F^2\le  \|\W(t)-\overline\W\|_F^2-\eta \L_S(\W(t))+3\eta \L_S(\overline{\W}).
\end{align}
Telescoping gives the theorem.
\citet{chen2021overparameterizationsufficientlearndeep} introduce the following neural tangent random feature (NTRF) function class:
\[\F(\W(0),R)=\left\{F_{\W(0),\W}(\x)=f_{\W(0)}(\x)+\left\langle \frac{\partial f_{\W(0)}(\x)}{\partial \W(0)},\W-\W(0) \right\rangle:\W\in \B_R(\W(0))\right\}.\] They show that gradient descent achieves a training loss of at most $3\epsilon_{\mathrm{NTRF}}$, where $\epsilon_{\mathrm{NTRF}}$ denotes the minimal loss over the NTRF function class (see Theorem 3.3 therein). In contrast, our approach directly analyzes the GD iterates and shows that the existence of a nearby reference point with small training error is sufficient to ensure convergence. For a fair comparison, under Assumption~\ref{ass:ntk}, both analyses yield an optimization error of $\widetilde{O}(1/T)$. However, our method significantly relaxes the overparameterization requirement, improving the width dependence by a factor of $L^6$ (see Remark~\ref{rem:ntk_opti}).
\end{remark}
\subsection{Generalization Analysis}\label{sec:generalization}
We use Rademacher complexity to study the generalization gap, which measures the ability of a function class to correlate random noises. 
\begin{definition}[Rademacher complexity]
    Let $\F$ be a class of real-valued functions over a space $\X$, $S_1=\{\x_1,\cdots ,\x_n\}\subset \X$. We define the following empirical Rademacher complexity as
    \[\fR_{S_1}(\F) = \E_\e\Big[\sup_{f\in \F}\frac{1}{n}\sum_{i\in [n]}\epsilon_if(\x_i)\Big],\]
    where $\e =(\epsilon_i)_{i\in [n]}\sim \{\pm1\}^n$ are independent Rademacher variables, i.e., taking values in $\{\pm1\}$ with the same probability.
\end{definition}
We further define the following worst-case Rademacher complexity,
 
    \[\fR_{S_1,n}(\F) = \underset{\widetilde{S}\subset S_1:|\widetilde{S}|=n}{\sup}\fR_{\widetilde{S}}(\F).\]
We define $G = \sup_z\ell(yf_{\overline{\W}}(\x))$, and
\begin{align}
    F(\overline{\W})=3\eta T\left(2\L(\overline{\W})+\frac{7G\log(2/\delta) }{6n}\right)+\|\W(0)-\overline{\W}\|_F^2.
\end{align}
We consider the following function space
\begin{align}\label{eq:F_space}
    \F:=\{\x\to f_{\W}(\x):\W\in \mathcal{W}_1\},
\end{align}
where the parameter space is defined as 
\begin{align}\label{eq:w_space}
    \mathcal{W}_1 = \left\{\W\in \mathcal{W}:\|\W-\overline{\W}\|_F^2\le F(\overline{\W}) \right\}.
\end{align}
Here we use $F(\overline{\W})$ instead of $F_S(\overline{\W})$ to get a data-independent hypothesis space. We will show $F(\overline{\W})$ is an upper bound of
$F_S(\overline{\W})$ with high probability. According to Theorem~\ref{thm:optimization}, all the iterations fall into $ \mathcal{W}_1$ with high probability.
We use
the following lemma to relate the generalization gap of smooth loss function with Rademacher complexity.
\begin{lemma}[\citet{srebro2010smoothness}]\label{lemma:smooth_rade}
    Let $G'=\sup_{z,\W\in \mathcal{W}_1}\ell(yf_{\W}(\x))$. For any $0<\delta<1 $, we have with probability at least $1-\delta/2$ over $S$, for any $\W \in \mathcal{W}_1$,
    \begin{align*}
        \L(\W)-\L_S(\W)\lesssim \L_S^{1/2}(\W)\left(\frac{1}{2}(\log n)^{3/2}\fR_{S_1,n}(\F)+\left(\frac{G'\log (2/\delta)}{n}\right)^{1/2}\right)\\+\frac{1}{4}(\log n)^3\fR^2_{S_1,n}(\F)+\frac{G'\log (2/\delta)}{n}.
    \end{align*}    
\end{lemma}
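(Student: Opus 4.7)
The plan is to combine three standard ingredients: (i) the smoothness of the logistic loss $\ell(z)=\log(1+\exp(-z))$, which is $1/4$-smooth and non-negative, together with its self-bounding property $(\ell'(z))^2\lesssim \ell(z)$; (ii) Talagrand's (Bousquet's) concentration inequality for suprema of empirical processes; and (iii) a localization and peeling argument over sub-level sets of the loss. This is exactly the strategy underlying Srebro--Sridharan--Tewari (2010), and my plan is to follow their argument applied to our hypothesis class $\F$ in~\eqref{eq:F_space}.

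First, I would verify the self-bounding property. Since $\ell$ is $H$-smooth with $H=1/4$ and non-negative, the inequality $(\ell(z_1)-\ell(z_2))^2 \le 4H\,\ell(z_1)\,(z_1-z_2)^2$ for smooth non-negative functions gives, after symmetrization, a variance bound linear in the expected loss, $\mathrm{Var}(\ell(yf_\W(\x))) \lesssim G'\,\L(\W)$. This is the key input that enables a Bernstein-type concentration rather than Hoeffding-type, and it is what produces the fast rate.

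Second, for each scale $r>0$, I would apply Bousquet's inequality to the centered empirical process $\sup_{\W\in\F_r}(\L(\W)-\L_S(\W))$, where $\F_r=\{\W\in\mathcal{W}_1:\L(\W)\le r\}$. The variance input is of order $r$, and the expected supremum is controlled by a Rademacher complexity. The smooth contraction principle, in its localized form combining Dudley's chaining with the self-bounding property, bounds the Rademacher complexity of the loss class by $\sqrt{r}\,\fR_{S_1,n}(\F)$ times a $(\log n)^{3/2}$ factor, where one $\log n$ comes from a union bound over a dyadic partition of scales and the remaining $\sqrt{\log n}$ from chaining. A peeling argument over $r\in\{2^{-k}G':k\ge 0\}$, together with a union bound, then yields an inequality of the form
\[
\L(\W)-\L_S(\W)\lesssim \sqrt{\L(\W)}\Big((\log n)^{3/2}\fR_{S_1,n}(\F)+\sqrt{G'\log(2/\delta)/n}\Big)+(\log n)^3\fR_{S_1,n}^2(\F)+\frac{G'\log(2/\delta)}{n}.
\]

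Third, I would convert $\L(\W)$ on the right-hand side to $\L_S(\W)$: the above is a quadratic inequality in $\sqrt{\L(\W)}$ whose solution, after using $\L(\W)\le 2\L_S(\W)+2(\L(\W)-\L_S(\W))$, gives the claim with $\L_S^{1/2}(\W)$ as the coefficient of the complexity term.

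The main obstacle is the careful bookkeeping in the peeling/chaining step: obtaining the $(\log n)^{3/2}$ prefactor (rather than a larger power) requires combining dyadic peeling with a refined chaining rather than a crude union bound over scales, and controlling the variance uniformly over sub-level sets demands the self-bounding property in its full strength. Because our logistic loss and our compact parameter ball $\mathcal{W}_1$ satisfy all hypotheses ($G'<\infty$ thanks to $\x\in S^{d-1}$, bounded weights near $\W(0)$, and smoothness of $\ell$), the lemma follows by directly invoking the main theorem of Srebro et al.~(2010) as a black box.
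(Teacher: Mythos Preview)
Your proposal is correct and matches the paper's treatment: the paper does not prove this lemma but simply cites it as a known result from \citet{srebro2010smoothness}, and your final sentence likewise invokes that theorem as a black box. The sketch you provide of the self-bounding, Bousquet concentration, and peeling argument is an accurate summary of the Srebro--Sridharan--Tewari proof, though the paper itself does not reproduce it.
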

Now we need to control $ \fR_{S_1,n}(\F)$ and $G'$.
As will be shown in Lemma~\ref{lemma:rade_F}, with high  probability there holds
 \begin{align}
        \fR_{S_1,n}(\F)=\widetilde{O}\left( \sqrt{\frac{F(\overline{\W})}{n}}\right).
\end{align}
To estimate $G'$, we employ covering numbers to derive a uniform upper bound of $f_\W(\x)-f_{\overline\W}(\x)$. Then we use the smoothness of $\ell$ to show that for all $G'-2G\lesssim L^4\log m F(\overline{\W})$. Plugging these bounds into Lemma~\ref{lemma:smooth_rade} gives the generalization gap. Combined with Theorem~\ref{thm:optimization},
we derive the following excess risk error. The full proofs are provided in Appendix~\ref{sec:gen}.
\begin{theorem}\label{thm:gen}
    Let Assumptions~\ref{ass:init},~\ref{ass:input} hold. If $m\gtrsim n^3 L^{16}d(\log m)^5\log (nL/\delta)F^4_S(\overline{\W}), \eta\leq \min\{4/(5L),1/(20L\tilde{F}_S(\overline{\W}))\},\eta T\asymp n$, then with probability at least $1-\delta$, we have
    \begin{align*}
    \frac{1}{T}\sum_{t=0}^{T-1}\L(\W(t))=\widetilde{O}\left(\frac{L^4F(\overline{\W})+G\log (2/\delta)}{n}\right).
\end{align*}
\end{theorem}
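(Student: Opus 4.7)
The plan is to combine the optimization guarantee of Theorem~\ref{thm:optimization} with a trajectory-uniform generalization bound derived from Lemma~\ref{lemma:smooth_rade} and the Rademacher complexity estimate $\fR_{S_1,n}(\F)=\widetilde{O}(L^2\sqrt{F(\overline{\W})/n})$ (Lemma~\ref{lemma:rade_F}). The starting point is the decomposition
\[\frac{1}{T}\sum_{t=0}^{T-1}\L(\W(t))=\frac{1}{T}\sum_{t=0}^{T-1}\L_S(\W(t))+\frac{1}{T}\sum_{t=0}^{T-1}\bigl(\L(\W(t))-\L_S(\W(t))\bigr),\]
which splits the population risk into a trajectory-averaged training error (controlled by the optimization result) and a trajectory-averaged generalization gap (controlled by Rademacher complexity).

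The first step is to verify that every iterate lies in $\mathcal{W}_1$, so that the uniform Rademacher bound applies along the trajectory. Theorem~\ref{thm:optimization} ensures $\|\W(t)-\overline{\W}\|_F^2\le F_S(\overline{\W})$, so it suffices to show $F_S(\overline{\W})\le F(\overline{\W})$ with high probability. Since $0\le \ell(y_if_{\overline{\W}}(\x_i))\le G$ and $\E\ell^2\le G\,\L(\overline{\W})$, Bernstein's inequality combined with AM--GM yields $\L_S(\overline{\W})\le 2\L(\overline{\W})+\frac{7G\log(2/\delta)}{6n}$ with probability at least $1-\delta/4$, from which $F_S(\overline{\W})\le F(\overline{\W})$ and consequently $\W(t)\in\mathcal{W}_1$ for every $t$. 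Moreover, Theorem~\ref{thm:optimization} directly gives $\frac{1}{T}\sum_{t=0}^{T-1}\L_S(\W(t))\le F(\overline{\W})/(\eta T)$, which is of order $F(\overline{\W})/n$ under $\eta T\asymp n$.

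For the generalization gap, I apply Lemma~\ref{lemma:smooth_rade} uniformly over $\mathcal{W}_1$ with $G'=2G+CL^4\log m\,F(\overline{\W})$, so that the bound holds simultaneously for each iterate $\W(t)$. Averaging the resulting inequality over $t$ and using Cauchy--Schwarz in the form $\frac{1}{T}\sum_t\L_S^{1/2}(\W(t))\le\sqrt{F(\overline{\W})/(\eta T)}=\widetilde{O}(\sqrt{F(\overline{\W})/n})$, the cross term $\L_S^{1/2}(\cdot)\bigl((\log n)^{3/2}\fR_{S_1,n}+\sqrt{G'\log(2/\delta)/n}\bigr)$ contributes $\widetilde{O}\bigl((L^2 F(\overline{\W})+\sqrt{G'F(\overline{\W})\log(2/\delta)})/n\bigr)$, which after another AM--GM is absorbed into $\widetilde{O}((L^4 F(\overline{\W})+G\log(2/\delta))/n)$; the quadratic Rademacher term $(\log n)^3\fR^2_{S_1,n}$ contributes $\widetilde{O}(L^4 F(\overline{\W})/n)$; and the remaining $G'\log(2/\delta)/n$ is directly $\widetilde{O}((L^4\log m\,F(\overline{\W})+G\log(2/\delta))/n)$. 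Summing with the optimization error gives the claimed rate.

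The main obstacle is justifying the uniform loss bound $G'=2G+CL^4\log m\,F(\overline{\W})$ required by Lemma~\ref{lemma:smooth_rade}, namely $\sup_{z,\W\in\mathcal{W}_1}\ell(yf_{\W}(\x))\le G'$. Since $\ell(yf)\le\log 2+|f|$ for logistic loss, this reduces to controlling $|f_{\W}(\x)|$ uniformly on $\mathcal{W}_1\times\X$. Writing $f_{\W}(\x)=f_{\W(0)}(\x)+(f_{\W}(\x)-f_{\W(0)}(\x))$ and invoking the $\widetilde{O}(L^2)$-Lipschitz continuity of $f_{\W}$ near initialization (Remark~\ref{rem:Lip}) together with $\|\W-\W(0)\|_F\lesssim\sqrt{F(\overline{\W})}$ on $\mathcal{W}_1$ yields a pointwise estimate. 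Promoting it to a uniform bound over $\x\in S^{d-1}$ requires a covering argument on the sphere, which accounts for the extra factor $d(\log m)^5$ appearing in the width condition as compared with Theorem~\ref{thm:optimization}.
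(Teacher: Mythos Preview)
Your proposal is correct and follows essentially the same route as the paper: relate $F_S(\overline{\W})$ to $F(\overline{\W})$ via Bernstein to place all iterates in $\mathcal{W}_1$, invoke Lemma~\ref{lemma:smooth_rade} together with the Rademacher bound of Lemma~\ref{lemma:rade_F}, justify the uniform loss bound $G'$ through the $\widetilde{O}(L^2)$-Lipschitz estimate of Lemma~\ref{lemma:sup_h-h_0}, and combine with the optimization guarantee of Theorem~\ref{thm:optimization}. The only tactical differences are that the paper absorbs the $\L_S^{1/2}$ cross term pointwise via AM--GM (yielding $\L(\W(t))-2\L_S(\W(t))\lesssim(\log n)^3\fR_{S,n}^2+G'\log(2/\delta)/n$ before summing) rather than your Cauchy--Schwarz averaging, and it derives $G'=2G+CL^4\log m\,F(\overline{\W})$ directly from the smoothness inequality $\ell(yf_{\W})\le 2\ell(yf_{\overline{\W}})+(f_{\W}-f_{\overline{\W}})^2/4$ rather than from $\ell(yf)\le\log 2+|f|$; both choices lead to the same bound.
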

\begin{remark}[Rademacher complexity]
\label{rem:rade}
The main idea is that $f_\W$ is almost linear for $\W\in\mathcal{W}_1$. \citet{chen2021overparameterizationsufficientlearndeep} derived the bound of $\widetilde{O}\big(\min\{4^LL\sqrt{{mF(\overline{\W})}/{n}},L^{3/2}\sqrt{F(\overline{\W})/{n}}+L^{11/3}(F(\overline{\W})^{2/3}/m^{1/6}\}\big)$. Our Rademacher complexity bound improves by avoiding an explicit dependence on $L$. \citet{lei2024optimization} developed
similar bound for shallow neural networks with polylogarithmic width. They decompose the neurons into activated part and non-activated part, using different techniques to handle them respectively. However, due to the complicated nature of DNNs, their analysis can not be directly applied here. 
An interesting direction is to explore whether a similar Rademacher complexity estimation can be established under a polylogarithmic width.
We leave it as an open problem.

\end{remark}
\begin{remark}[Analysis of Lipschitzness]\label{rem:Lip}
    To bound the term $G'=\sup_{z,\W\in \mathcal{W}_1}\ell(yf_{\W}(\x))$, we analyze the difference $f_\W(\x)-f_{\overline\W}(\x)$. Since both $\W,\overline{\W}\in \B_R(\W(0))$ for some $R$, we only need to study the local variation $f_\W(\x)-f_{\W(0)}(\x)$. This approach necessitates characterizing the uniform behavior of deep networks in $\B_R(\W(0))$, specifically establishing control over their Lipschitz constants near initialization. Existing works usually lead to an exponential dependence on $L$ \citep{JMLR:v25:23-0740,taheri2025sharper}, thus resulting in a $e^{O(L)}$ term in the generalization bound.
In particular, Lemma F.3 and F.5 in \citet{NEURIPS2020_b7ae8fec} pointed out that $\|h^l(\x)\|\le C^L, \|\partial f_\W(\x)/\partial h^l(\x)\|_2\le C^{L-l+1}\sqrt{m} $. Based on these observations, \citet{taheri2025sharper} showed that 
\[\left\| \frac{\partial f_\W(\x)}{\partial \W^l}\right\|_2\le C^L. \]

On the other hand, to analyze the output difference near initialization, we observe that
\[f_\W(\x)-f_{\W(0)}(\x)=\a^\top (h^{L}(\x)-h^L_0(\x))\le\sqrt{m}\|h^{L}(\x)-h^L_0(\x)\|_2,\]
reducing our task to bounding the hidden layer perturbation.
Previous approaches, including \citet{JMLR:v25:23-0740} and \citet{du2019gradientdescentprovablyoptimizes}, employ a recursive estimation:
\begin{align*}
  &\|h^{L}(\x)-h^L_0(\x)\|_2=\sqrt{\frac{2}{m}}\|\sigma(\W^Lh^{L-1}(\x))-\sigma(\W^L(0)h_0^{L-1}(\x))\|_2\\
    \le& \sqrt{\frac{2}{m}}(\|(\W^L-\W^L(0))h^{L-1}(\x)\|_2+\|\W^L(0)(h^{L-1}(\x)-h^{L-1}_0(\x)\|_2)\\
    \lesssim& \frac{R}{\sqrt{m}}(\|h^{L-1}(\x)-h^{L-1}_0(\x)\|_2+C^L)+\|h^{L-1}(\x)-h^{L-1}_0(\x)\|_2\le \frac{C^LR}{\sqrt{m}},
\end{align*}
where in the second inequality they used $\|h_0^l(\x)\|_2\leq C^L$ and $\|\W^l(0)\|_2\lesssim \sqrt{m}$.
Although this method provides a straightforward bound, it leads to an exponential dependence on depth $L$ due to the recursive nature of the estimation. 

In contrast to previous work, we develop the covering-number strategy to avoid the exponential dependence on depth. Specifically, we first show that for any finite set of size $N$: $K=\{\x^1,\cdots,\x^N\}$, if $m=\widetilde{\Omega}(L^{10}\log(N)R^2)$, then $\|h^l(\x^i)-h^l_0(\x^i)\|_2=\widetilde{O}\left(\frac{L^2R}{\sqrt{m}}\right)$ holds for $i\in [N],l\in [L]$ (Lemma~\ref{lemma:o-h_0}). We further take a $1/(C^L\sqrt{m})$-covering $D =\{ \x^j: j =1,\ldots, |D|\}$ of the input space. Recall that the input space $\X = S^{d-1}$, it is well known from Corollary 4.2.13 in \citet{vershynin2018high} that the number of $1/(C^L\sqrt{m})$-covering is given by $|D| \le (1+2C^L\sqrt{m})^d$. 
Applying Lemma~\ref{lemma:o-h_0} to $D$ derives that if $m=\widetilde{\Omega}(L^{10}\log(|D|)R^2)$, then
\[\|h^l(\x^j)-h^l_0(\x^j)\|_2=\widetilde{O}\left(\frac{L^2R}{\sqrt{m}}\right),\quad \x^j\in D,l\in[L].\]
Note that although the covering number could be exponential in $L$, we only require logarithm of it, thus leading to polynomial dependence.
 For any input $\x$, we use the closest cover point $\x^j\in D$ to approximate $\|h^l(\x)-h^l(\x^j)\|_2,\|h^l_0(\x)-h^l_0(\x^j)\|_2$. Combining these yields the key technical lemma (Lemma~\ref{lemma:sup_h-h_0}):
 \[\sup_{\x\in\X}\|h^l(\x)-h^l_0(\x)\|_2=\widetilde{O}\left(\frac{L^2R}{\sqrt{m}}\right).\]
 This implies that the network is $\widetilde{O}(L^2)$-Lipschitz near initialization. More details can be found in Lemma
\ref{lemma:sup_h-h_0} and its proof.
\end{remark}
\subsection{Optimal rates on NTK-separable data}\label{sec:NTK_margin}
In this section, we apply our general analysis to NTK-separable data \citep{ji2020polylogarithmicwidthsufficesgradient,nitanda2020gradientdescentlearnoverparameterized,chen2021overparameterizationsufficientlearndeep,JMLR:v25:23-0422,deora2023optimization}, and obtain the optimal rates.
\begin{assumption}\label{ass:ntk}
    There exists $\gamma>0$ and a collection of matrices $\W_*=\{\W_*^1,\cdots,\W_*^L\}$ satisfying $\sum_{l=1}^L \|\W_*^l\|_F^2=1$, such that
    \[y_i\left\langle \W_*,\frac{\partial f_{\W(0)}(\x_i)}{\partial \W(0)} \right\rangle\ge \gamma,\quad i\in[n].\]  
\end{assumption}
This means that the dataset is separable by the NTK feature at initialization with a margin $\gamma$. \citet{nitanda2020gradientdescentlearnoverparameterized} pointed out that this assumption is weaker than positive eigenvalues of NTK Gram matrix, which has been widely used in the literature \citep{du2019gradientdescentprovablyoptimizes,pmlr-v97-du19c,pmlr-v97-arora19a}. With the above assumption,
we have the following optimal risk bound on NTK separable data. The proof is given in Appendix~\ref{sec:ntk_sep}.
\begin{theorem}\label{thm:ntk}
    Let Assumptions~\ref{ass:init},~\ref{ass:input},~\ref{ass:ntk} hold. If $m\gtrsim n^3L^{16}d(\log m)^5\log (nL/\delta)(\log T)^8/\gamma^8, \eta\leq4/(5L),\eta T\asymp n$, then with probability at least $1-\delta$, we have
    \begin{align*}
    \frac{1}{T}\sum_{t=0}^{T-1}\L(\W(t))=\widetilde{O}\left(\frac{L^6(\log T)^2}{n\gamma^2}\right).
\end{align*}
\end{theorem}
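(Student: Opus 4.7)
I will instantiate Theorem~\ref{thm:gen} with the first-order NTK surrogate $\overline{\W}:=\W(0)+\lambda\W_*$, choosing the scale $\lambda:=c_0\log T/\gamma$ for a sufficiently large absolute constant $c_0$. Because $\sum_{l=1}^L\|\W_*^l\|_F^2=1$ (Assumption~\ref{ass:ntk}), we immediately get $\|\overline{\W}-\W(0)\|_F^2=\lambda^2\asymp(\log T)^2/\gamma^2$. The symmetric initialization (Assumption~\ref{ass:init}) gives $f_{\W(0)}(\x_i)=0$, so setting $r_i:=f_{\overline{\W}}(\x_i)-\lambda\langle\W_*,\partial f_{\W(0)}(\x_i)/\partial\W(0)\rangle$ and applying the semi-smoothness estimate \eqref{eq:semi-smoooth} at $\W=\W(0)$ with $R=\lambda$ yields $|r_i|=\widetilde{O}(L^{8/3}\lambda^{4/3}/m^{1/6})$; the stated width $m\gtrsim L^{16}(\log T)^8/\gamma^8$ is exactly what pushes $|r_i|\le 1/2$. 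Combining with Assumption~\ref{ass:ntk} gives $y_if_{\overline{\W}}(\x_i)\ge\lambda\gamma-1/2$, so $\ell(y_if_{\overline{\W}}(\x_i))\le e^{-\lambda\gamma+1/2}\lesssim 1/T$ and $|\ell'(y_if_{\overline{\W}}(\x_i))|\lesssim 1/T$ for $c_0$ large enough. Hence $\L_S(\overline{\W})\lesssim 1/T$ and $\tilde F_S(\overline{\W})\lesssim 1/T$, so with $\eta T\asymp n$ we obtain $F_S(\overline{\W})\lesssim(\log T)^2/\gamma^2$ and $1/(20L\tilde F_S(\overline{\W}))\gtrsim T/L\ge 4/(5L)$, so the only binding step-size constraint from Theorem~\ref{thm:gen} is $\eta\le 4/(5L)$ as stated.

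\paragraph{From $F_S$ to $F$.}
To pass from empirical to population quantities I must bound $G=\sup_z\ell(yf_{\overline{\W}}(\x))$ \emph{uniformly} in $\x$, not only at training points. This is exactly the content of the $\widetilde{O}(L^2)$-Lipschitz-near-initialization lemma (Lemma~\ref{lemma:sup_h-h_0}) previewed in Remark~\ref{rem:Lip}: $\sup_{\x\in S^{d-1}}|f_{\overline{\W}}(\x)-f_{\W(0)}(\x)|=\widetilde{O}(L^2\lambda)=\widetilde{O}(L^2\log T/\gamma)$, which avoids the $e^{O(L)}$ factor a naive recursion would incur, so $G=\widetilde{O}(L^2\log T/\gamma)$. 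A Bernstein inequality applied to the non-negative losses $\ell(y_if_{\overline{\W}}(\x_i))\in[0,G]$, using $\mathrm{Var}\le G\,\L(\overline{\W})$ together with AM-GM, then yields $\L(\overline{\W})\le 2\L_S(\overline{\W})+O(G\log(1/\delta)/n)$. Substituting these pieces into the definition of $F(\overline{\W})$ gives $F(\overline{\W})\lesssim \eta T\,\L(\overline{\W})+\eta T\,G\log(1/\delta)/n+\lambda^2\asymp L^2\log T/\gamma+(\log T)^2/\gamma^2$.

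\paragraph{Assembly and main obstacle.}
Plugging $F(\overline{\W})$ and $G$ into Theorem~\ref{thm:gen} yields $\tfrac{1}{T}\sum_{t=0}^{T-1}\L(\W(t))=\widetilde{O}\bigl(L^4F(\overline{\W})/n+G\log(2/\delta)/n\bigr)=\widetilde{O}\bigl(L^6\log T/(n\gamma)+L^4(\log T)^2/(n\gamma^2)\bigr)$, and factoring $L^4(\log T)^2/(n\gamma^2)$ from both summands produces the claimed $\widetilde{O}\bigl(L^4(1+\gamma L^2)(\log T)^2/(n\gamma^2)\bigr)$. The genuinely delicate step I expect to be the main obstacle is the uniform-in-$\x$ control of $f_{\overline{\W}}$: a direct recursive hidden-layer perturbation estimate blows up as $e^{O(L)}$ and would wipe out the polynomial-$L$ conclusion, so the argument must proceed through the covering-number strategy of Remark~\ref{rem:Lip}. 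The remaining bookkeeping is to check that the single width choice $m\gtrsim L^{16}(\log T)^8/\gamma^8$ simultaneously (i) suppresses $|r_i|$ via the semi-smoothness estimate, (ii) activates the $\widetilde{O}(L^2)$-Lipschitz lemma on a $(C^L\sqrt{m})^{-1}$-net of $S^{d-1}$, and (iii) meets the overparameterization hypothesis $m\gtrsim L^{16}F_S(\overline{\W})^4$ inherited from Theorem~\ref{thm:gen}.
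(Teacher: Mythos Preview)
Your proposal is correct and follows essentially the same route as the paper's proof: choose $\overline{\W}=\W(0)+\lambda\W_*$ with $\lambda\asymp\log T/\gamma$, use the semi-smoothness estimate (Lemma~\ref{lemma:prod_differ_derivative}) together with Assumption~\ref{ass:ntk} to force $\L_S(\overline{\W})\lesssim 1/T$ and $\tilde F_S(\overline{\W})\lesssim 1/T$, invoke Lemma~\ref{lemma:sup_h-h_0} for the uniform-in-$\x$ bound $G=\widetilde O(L^2\log T/\gamma)$, apply Bernstein to pass to $\L(\overline{\W})$, and feed everything into Theorem~\ref{thm:gen}. Your identification of the main obstacle (the covering-number argument behind Lemma~\ref{lemma:sup_h-h_0}) and your bookkeeping of the three roles played by the width condition are exactly what the paper does.
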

\begin{remark}[Proof sketch]
    To apply the result in Theorem~\ref{thm:gen}, we need to estimate $F(\overline{\W})$, for which it suffices to bound $\L(\overline{\W})$ and $G=\sup_z\ell(yf_{\overline{\W}}(\x))$. For the first part, we control it by $\L_S(\overline{\W})$ using Bernstein inequality (Eq.\eqref{lemma:boundbern}). Let $\overline{\W} = \W(0)+2\log T \W_*/\gamma$, plugging into \eqref{eq:semi-smoooth} obtains $\ell(y_if_{\overline{\W}}(\x_i))\le 1/T$ and further $\L_S(\overline{\W})\le 1/T$, implying $F_S(\overline{\W})=\widetilde{O}(1/\gamma^2)$. In order to control $\ell(yf_{\overline{\W}}(\x))$, we leverage the $\widetilde{O}(L^2)$-Lipschitzness of $f_\W(\x)$. Indeed, for any $\x\in \X,$ there holds
    \begin{align*}
        |f_{\overline{\W}}(\x)|\le |f_{\W(0)}(\x)|+|f_{\overline{\W}}(\x)-f_{\W(0)}(\x)|=\widetilde{O}\left(\frac{L^2}{\gamma}\right).
    \end{align*}
    It then follows that $G= \widetilde{O}\left(\frac{L^2}{\gamma}\right)$ and  $F(\overline{\W})=\widetilde{O}\left(\frac{(\log T)^2 L^2)}{\gamma^2}\right)$. 
\end{remark}

\begin{remark}[Discussion on optimization error]\label{rem:ntk_opti}
    Under Assumption~\ref{ass:ntk}, Theorem 3.3 and Proposition 4.2 of \citet{chen2021overparameterizationsufficientlearndeep} show that when the network width satisfies $m = \widetilde{\Omega}(L^{22}/\gamma^8)$, the training error is of the order $\widetilde{O}(1/T)$. We achieve the same guarantee under a significantly milder width condition of $m = \widetilde{\Omega}(L^{16}/\gamma^8)$. This improvement is enabled by two key technical advances: a sharper bound for \eqref{eq:semi-smoooth}, and a tighter estimate of the iterate distance $\|\W(t) - \overline{\W}\|_F$. Specifically, we improve the bound in \eqref{eq:semi-smoooth} by a factor of $L^{1/3}$, and show that $\|\W(t) - \overline{\W}\|_F = \widetilde{O}(1/\gamma)$, improving upon the previous $\widetilde{O}(\sqrt{L}/\gamma)$ bound. Together, these refinements reduce the required network width by a factor of $L^6$.
\end{remark}
\begin{remark}[Comparison] \citet{ji2020polylogarithmicwidthsufficesgradient} derived the bound $\widetilde{O}\left(\frac{1}{\gamma^2\sqrt{n}}\right)$ for shallow networks, which was recently improved to $\widetilde{O}(\frac{1}{\gamma^2 n})$ based on an improved control of the Rademacher complexity~\citep{lei2024optimization}. For deep ReLU networks, \citet{chen2021overparameterizationsufficientlearndeep} developed the bound of the order $\widetilde{O}\left(\frac{L^{3/2}}{\gamma\sqrt{n}}\right)$ via Rademacher complexity \citep{bartlett2017spectrally}, which is suboptimal. \citet{taheri2025sharper} improved the result to $\widetilde{O}\left(\frac{e^{O(L)}}{\gamma^2n}\right)$ for deep networks. The dependence on $n,\gamma$ is optimal up to a logarithmic factor \citep{JMLR:v22:20-997,NEURIPS2023_d8ca28a3}. However, their results require smooth activations and exponential width in $L$. Our rate is almost-optimal and enjoys polynomial dependence over the network depth. Furthermore, our bound holds under the overparameterization $\widetilde{\Omega}(1/\gamma^8)$, matching the requirement in \citet{ji2020polylogarithmicwidthsufficesgradient,chen2021overparameterizationsufficientlearndeep}. This is much better than $1/\gamma^{6L+4}$ in \citet{taheri2025sharper}.  
\end{remark}
\section{Experiments}\label{sec:experiments}
In this section, we make some experimental verifications to support our theoretical analysis. Our excess risk analysis in Theorem~\ref{thm:ntk} imposes an NTK separability assumption, which has been validated in the literature. For example, \citep{ji2020polylogarithmicwidthsufficesgradient} demonstrates that Assumption~\ref{ass:ntk} holds for a noisy 2-XOR distribution, where the dataset is structured as follows:
$$
\begin{aligned}
(x_1, x_2, y, \ldots, x_d) \in & \left\{ 
  \left(\tfrac{1}{\sqrt{d-1}}, 0, 1\right), 
  \left(0, \tfrac{1}{\sqrt{d-1}}, -1\right), \right. \\
 & \left. \left(-\tfrac{1}{\sqrt{d-1}}, 0, 1\right), 
  \left(0, -\tfrac{1}{\sqrt{d-1}}, -1\right)
\right\}  \times \left\{-\tfrac{1}{\sqrt{d-1}}, \tfrac{1}{\sqrt{d-1}}\right\}^{d-2}.
\end{aligned}
$$
Here, the factor $\frac{1}{\sqrt{d-1}}$ ensures that $\|x\|_2 = 1$, $\times$ above denotes the Cartesian product, and the label $y$ only depends on the first two coordinates of the input $x$. As shown in \citet{ji2020polylogarithmicwidthsufficesgradient}, this dataset satisfies Assumption~\ref{ass:ntk} with $1/\gamma=O(d)$, which implies that our excess risk bound in Theorem~\ref{thm:ntk} becomes $O(d^2/n)$ for this dataset.  We conducted numerical experiments and observed that the test error decays linearly with $d^2/n$. The population loss for the test error is computed over all $2^d$ points in the distribution. 
\paragraph{Settings} We train two-layer ReLU networks by gradient descent on noisy 2-XOR data. We fix the width $m=128, T=500,\eta = 0.1$. We have conducted two experiments. With a fixed dimension $d = 6$, we vary the sample size $n$. The results are presented in Figure~\ref{fig:diff_n}. With a fixed sample size $n=64$, we vary the dimension $d$ and the corresponding table is provided in Figure~\ref{fig:diff_d}.

\begin{figure}
    \centering
    \begin{subfigure}[b]{0.48\textwidth}
          \includegraphics[width=0.8\linewidth]{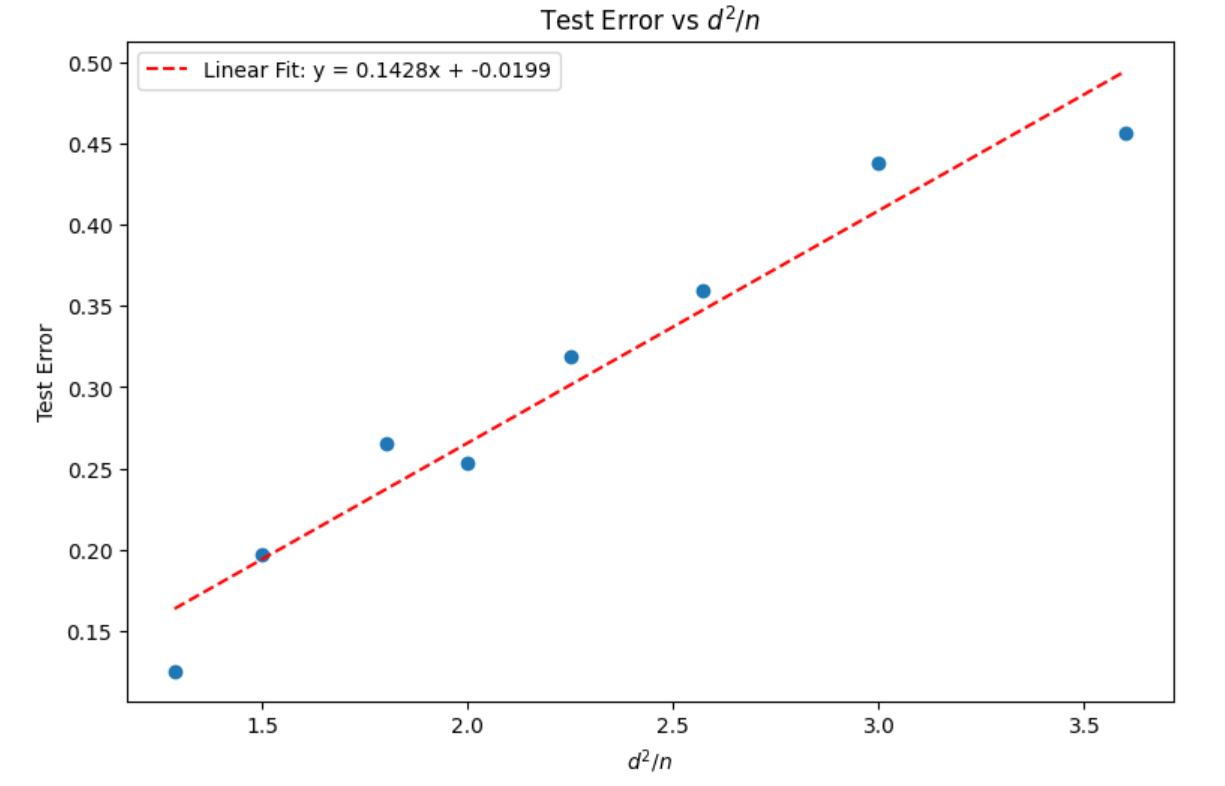}
    \caption{Test error for different $n$}
    \label{fig:diff_n}
    \end{subfigure}
  \begin{subfigure}[b]{0.48\textwidth}
      \includegraphics[width=0.8\linewidth]{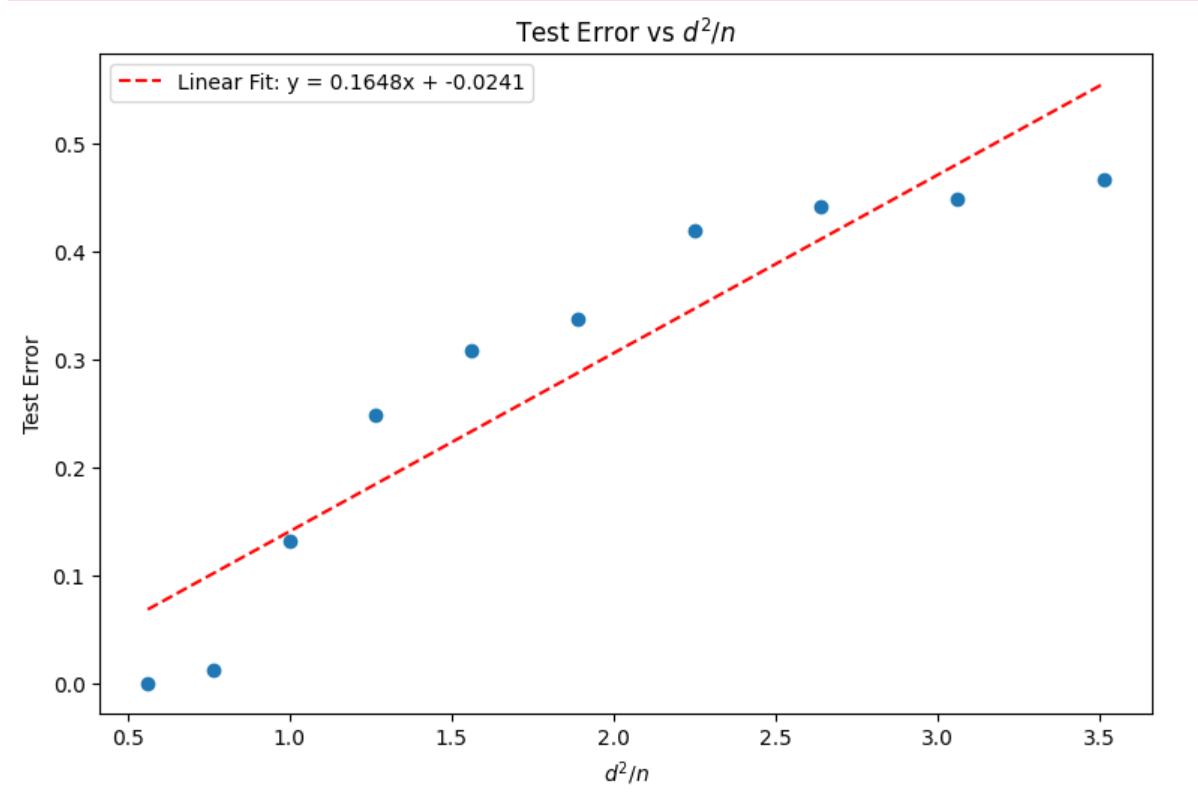}
  \caption{Test error for different $d$}
      \label{fig:diff_d}
  \end{subfigure}
\end{figure}

In both experiments, we observe that the test error is of the order $d^2/n$ (approximately $0.15d^2/n$). This shows the consistency between our excess risk bounds in Theorem~\ref{thm:ntk} and experimental results. We conducted the experiments on Google Colab. A simple demonstration reproducing our numerical experiments is available as a Google Colab notebook at: \url{https://github.com/YuanfanLi2233/nips2025-optimal}.

\section{Conclusion and Future Work}
In this paper, we present optimization and generalization analysis of gradient descent-trained deep ReLU networks for classification tasks. We explore the optimization error of $F_S(\overline{\W})/(\eta T)$ under a milder overparameterization requirement than before. We establish sharper bound of Rademacher complexity and Lipschtiz constant for neural networks. This helps to derive generalization bound of order $\widetilde{O}(F(\overline{\W})/n)$. For NTK-separable data with a margin $\gamma$, our methods lead to the optimal rate of $\widetilde{O}(1/(n\gamma^2))$. We improve the existing analysis and require less overparameterization than previous works.

There remain several interesting questions for future works. First, it is an interesting question to extend our methods to SGD. Second, while we establish polynomial Lipschitz constants near initialization, investigating whether similar bounds hold far from initialization would deepen our theoretical understanding. Finally, we only consider fully-connected neural networks. It is interesting to study the generalization analysis of networks with other architectures, such as CNNs and Resnets \citep{du2019gradientdescentprovablyoptimizes}.

\section*{Acknowledgement}
The authors are grateful to the anonymous reviewers for their thoughtful comments and constructive suggestions. The work of Yuanfan Li and Zheng-Chu Guo is partially supported by the National Natural Science Foundation of China (Grants No. 12271473 and U21A20426). The work by Yunwen Lei is partially supported by the Research Grants Council of Hong Kong [Project No. 17302624]. Yiming's work is partially supported by Australian Research Council (ARC) DP250101359.
\medskip

{
\small

\bibliographystyle{plainnat} 
\bibliography{neurips_2025}

}

%%%%%%%%%%%%%%%%%%%%%%%%%%%%%%%%%%%%%%%%%%%%%%%%%%%%%%%%%%%%
\newpage
\appendix
\section{Technical Lemmas}\label{sec:lem}
We define the diagonal sign matrix $\Sigma^l(\x)$ with $l\in[L]$ by
\begin{align}\label{eq:Sigma_l}
    \Sigma^l(\x)=\text{diag}\{ \mathbb{I}\{ \langle \w^l_r, h^{l-1}(\x)\rangle \ge 0  \} \} \in \R^{m\times m}.
\end{align}
Then the deep ReLU network has the following matrix product representation:
\begin{align}\label{eq:ReLU}
    f_\W(\x)=\a^\top \sqrt{\frac{2}{m}} \Sigma^L(\x) \W^L \cdots \sqrt{\frac{2}{m}} \Sigma^1(\x) \W^1 \x,  
\end{align}  
together with the presentation of $h^l(\x)$:
\begin{align}\label{eq:o}
    h^l(\x)= \sqrt{\frac{2}{m}} \Sigma^{l} (\x) \W^{l} \cdots \sqrt{\frac{2}{m}}\Sigma^1(\x) \W^1 \x, \quad l\in[L].  
\end{align} 
We further define $\G^l_l(\x) = \sqrt{{2}/{m}}\Sigma^l(\x)$  and
\begin{align}\label{eq:G}
        \G^a_b(\x) =  \sqrt{\frac{2}{m}} \Sigma^b(\x) \W^b \cdots \sqrt{\frac{2}{m}} \Sigma^a(\x),\quad 1\le a\le b\le L,
\end{align}
from which we can rewrite $f_{\W}(\x)$ as
\begin{align*}
    f_{\W}(\x)=\a^\top  \G^l_L(\x)  \W^l h^{l-1}(\x)=\langle  (\G^l_L(\x))^\top \a (h^{l-1}(\x))^\top, \W^l\rangle. 
\end{align*}
Hence, for $l\in [L]$, we have 
\begin{align*}%\label{eq:par_f}
    \frac{\partial f_\W(\x)}{\partial \W^l} = (\G^l_L(\x))^\top \a (h^{l-1}(\x))^\top. 
\end{align*}
Similarly, we define 
\begin{equation}\label{eq:H}
    \H_b^a(\x) = \sqrt{\frac{2}{m}} \Sigma^b(\x) \W^b \cdots \sqrt{\frac{2}{m}} \Sigma^a(\x)\W^a, \quad2\le a\le b\le L.
\end{equation}
We denote $\Sigma_0^l(\x),h_0^l(\x),\G_{b,0}^a(\x),\H_{b,0}^a(\x)$ as \eqref{eq:Sigma_l}, \eqref{eq:o}, \eqref{eq:G} and \eqref{eq:H} with $\W=\W(0)$.
\subsection{Properties of the Initialization}
Given a set of $N$ points on the sphere $K=\{x^1,\cdots,x^N\}$. We provide general results for any finite set $K$, then it can be applied to specific choices of $K$, for example, the training dataset $S_1=\{\x_1,\cdots,\x_n\}$.
\begin{lemma}[Theorem 4.4.5 in \citet{vershynin2018high}]\label{lemma:oprt-norm}
    With probability at least $1-L\exp(-Cm)$ over the random choice of $\W(0)$, there exists an absolute constant $c_0>1$ such that for any $l\in[L]$, there holds
\begin{align}\label{eq:optr_nrm_W}
    \|\W^l(0)\|_2 \le c_0\sqrt{m}. 
\end{align}
\end{lemma}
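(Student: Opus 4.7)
The plan is to reduce this to a standard operator-norm concentration bound for Gaussian matrices applied layer-by-layer, followed by a union bound over $l \in [L]$. The statement being cited is exactly Theorem 4.4.5 in Vershynin, which gives, for a random matrix $A \in \mathbb{R}^{m_1 \times m_2}$ with i.i.d.\ $\mathcal{N}(0,1)$ entries and any $t \ge 0$,
\[
\|A\|_2 \;\le\; C_1\bigl(\sqrt{m_1}+\sqrt{m_2}\bigr) + t \qquad \text{with probability} \ge 1 - 2\exp(-c_1 t^2),
\]
for absolute constants $C_1, c_1 > 0$. So most of the work is just pattern-matching Assumption~\ref{ass:init} into this template.

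First I would handle the first layer $\W^1(0) \in \mathbb{R}^{m\times d}$. Its rows are i.i.d.\ $\mathcal{N}(0,\mathbf{I}_d)$, hence its entries are i.i.d.\ $\mathcal{N}(0,1)$. Taking $t = \sqrt{m}$ in the bound above and using $d \le m$ (which is implied by the overparameterization assumption $m \gtrsim L^{16}\cdots$ in the main theorems), we get $\|\W^1(0)\|_2 \le (2C_1+1)\sqrt{m}$ with probability at least $1 - 2\exp(-c_1 m)$. Next I would handle the intermediate layers $\W^l(0) \in \mathbb{R}^{m\times m}$ for $2 \le l \le L-1$, which have i.i.d.\ $\mathcal{N}(0,1)$ entries by assumption; the same argument with $m_1 = m_2 = m$ yields $\|\W^l(0)\|_2 \le (2C_1+1)\sqrt{m}$ with probability at least $1 - 2\exp(-c_1 m)$.

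The only subtle case is the last layer $\W^L(0)$, which has the symmetric structure $\w^L_{r+m/2}(0) = \w^L_r(0)$. Here I would write $\W^L(0) = P\,\widetilde{\W}^L$ where $\widetilde{\W}^L \in \mathbb{R}^{m/2 \times m}$ has i.i.d.\ $\mathcal{N}(0,1)$ entries and $P \in \mathbb{R}^{m \times m/2}$ is the ``stacking'' matrix $\begin{pmatrix}I_{m/2}\\ I_{m/2}\end{pmatrix}$. Since $\|P\|_2 = \sqrt{2}$, the Vershynin bound applied to $\widetilde{\W}^L$ (with $m_1 = m/2$, $m_2 = m$, and again $t = \sqrt{m}$) gives $\|\W^L(0)\|_2 \le \sqrt{2}\,(2C_1+1)\sqrt{m}$ with probability at least $1 - 2\exp(-c_1 m/2)$.

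Finally I would take a union bound over the $L$ events, picking $c_0 := \sqrt{2}(2C_1+1)$ so the conclusion holds simultaneously for every $l \in [L]$ with probability at least $1 - 2L\exp(-c_1 m/2)$, which absorbs into $1 - L\exp(-Cm)$ for an appropriate absolute constant $C$ (using that $m$ is large enough to swallow the factor of $2$). There is no real obstacle here since the result is essentially just the cited Vershynin theorem; the only thing to watch is the symmetric initialization of the last layer, which is handled cleanly by factoring out the deterministic stacking matrix $P$ and noting $\|P\|_2 = \sqrt{2}$.
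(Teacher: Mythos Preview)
Your proposal is correct. The paper gives no proof of this lemma beyond the citation to Theorem~4.4.5 in Vershynin, so your layer-by-layer application of that result---including the handling of the symmetric last layer via the stacking matrix $P$ with $\|P\|_2=\sqrt{2}$---is precisely the routine argument the citation is standing in for, and the union bound over $l\in[L]$ recovers the stated failure probability $L\exp(-Cm)$.
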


For a sub-exponential random variable $X$, its sub-exponential norm is defined as follows:
\[\|X\|_{\phi_1} = \inf \{t>0:\E\exp(|X|/t)\le 2\}.\]
$X-\E X$ is sub-exponential too, satisfying
\begin{equation}\label{eq:phi_1}
    \|X-\E X\|_{\phi_1}\le 2\|X\|_{\phi_1}.
\end{equation}

If $Y$ is a sub-gaussian random variable, we define the sub-gaussian norm of $Y$ by
\[\|Y\|_{\phi_2} = \inf \{t>0:\E\exp(Y^2/t^2)\le 2\}.\]
Suppose $Y\sim \N(0,r^2)$, then $\sigma(Y)$ is also sub-guassian and  we have
\begin{equation}\label{eq:guassian_norm}
    \|\sigma(Y)\|_{\phi_2}\le \|Y\|_{\phi_2}\le Cr.
\end{equation}
We have the following lemma:
\begin{lemma}[Lemma 2.7.6 in \citet{vershynin2018high}]\label{lemma:squard}
    A random variable $X$
is sub-gaussian if and only if $X^2$ is sub-exponential. Moreover,
\[\|X\|_{\phi_2}^2 = \|X^2\|_{\phi_1}.\]
\end{lemma}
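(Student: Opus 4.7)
The plan is to prove the identity $\|X\|_{\phi_2}^2 = \|X^2\|_{\phi_1}$ directly from the definitions of the two Orlicz-type norms; the equivalence ``$X$ is sub-Gaussian iff $X^2$ is sub-exponential'' then follows immediately by reading off the finiteness of the respective infima. There is no deep content here beyond a change of variables, but I would lay the argument out carefully to make clear that the infima correspond exactly under this substitution.

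First, I would unwind both definitions in a common form. By the definition recalled just above the statement, $\|X\|_{\phi_2} = \inf\{s > 0 : \E\exp(X^2/s^2) \le 2\}$ and $\|X^2\|_{\phi_1} = \inf\{t > 0 : \E\exp(|X^2|/t) \le 2\}$. Since $X^2 \ge 0$ almost surely, the absolute value is redundant and the latter rewrites as $\inf\{t > 0 : \E\exp(X^2/t) \le 2\}$. Next I would observe that in both defining sets the map in the scale parameter is monotone: by monotone convergence, $s \mapsto \E\exp(X^2/s^2)$ is nonincreasing in $s$, and similarly $t \mapsto \E\exp(X^2/t)$ is nonincreasing in $t$. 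Consequently, whenever nonempty, each set is a half-line of the form $[s_0, \infty)$ respectively $[t_0,\infty)$, and its infimum coincides with the left endpoint.

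Second, I would make the change of variables $t = s^2$, which is a strictly increasing bijection from $(0,\infty)$ onto itself. Under this substitution the two defining conditions $\E\exp(X^2/t) \le 2$ and $\E\exp(X^2/s^2) \le 2$ become literally identical. Hence the two half-lines $\{t > 0 : \E\exp(X^2/t) \le 2\}$ and $\{s > 0 : \E\exp(X^2/s^2) \le 2\}$ are in order-preserving correspondence under $s \mapsto s^2$, so their left endpoints satisfy $t_0 = s_0^2$. This gives precisely the claimed identity $\|X^2\|_{\phi_1} = \|X\|_{\phi_2}^2$.

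Finally, the equivalence is a free byproduct: $X$ is sub-Gaussian iff the infimum defining $\|X\|_{\phi_2}$ is finite, and by the identity just established this is equivalent to the infimum defining $\|X^2\|_{\phi_1}$ being finite, i.e., to $X^2$ being sub-exponential. I do not expect any genuine obstacle; the only point that must be written down carefully is the monotonicity of the two moment-generating integrals in the scale parameter, which justifies treating the defining sets as half-lines so that the change of variables $t = s^2$ transports infima exactly.
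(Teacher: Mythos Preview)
Your argument is correct. The paper does not supply its own proof of this lemma; it merely cites Lemma~2.7.6 of \citet{vershynin2018high}, so your change-of-variables argument fills in exactly the standard detail the paper defers to the reference.
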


Now we introduce Bernstein inequality with respect to $\|\cdot\|_{\phi_1}$,
\begin{lemma}[Theorem 2.8.2 in \citet{vershynin2018high}]\label{lemma:bern}
    Let $X_1,\cdots,X_m$ be independent, mean
zero, sub-exponential random variables, and $d = (d_1,\cdots, d_m)\in \R^m, K\ge \max_r\|X_r\|_{\phi_1}$. Then for every $t\ge 0$, we have 
\[ \P\left(\left|\sum_{r=1}^md_rX_r \right|\ge t\right)\le2\exp\left[-c\min\left(\frac{t^2}{K^2\|d\|_2^2},\frac{t}{K\|d\|_{\infty}}\right)\right].\]
for some absolute constant $c$.
\end{lemma}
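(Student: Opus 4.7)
The plan is to prove this as a standard Bernstein-type inequality via the Chernoff method combined with a moment generating function (MGF) bound for sub-exponential random variables. The whole argument is classical (it is Theorem 2.8.2 in Vershynin's book), so the proposal mainly lays out the sequencing and the regime analysis.

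The first ingredient I would recall is the MGF characterization of the sub-exponential norm: if $X$ is mean zero with $\|X\|_{\phi_1}\le K$, then there exist absolute constants $C_1,c_1>0$ such that $\E[e^{\lambda X}]\le \exp(C_1 K^2\lambda^2)$ for all $|\lambda|\le c_1/K$. This equivalence follows by Taylor-expanding $e^{\lambda X}$ and using the moment bound $\E|X|^k\le C^k K^k k!$ implied by $\|X\|_{\phi_1}\le K$ (via Proposition 2.7.1 in Vershynin). Applying this to each $d_rX_r$ (which is mean zero with $\|d_rX_r\|_{\phi_1}\le |d_r|K$), I get
\[
\E[e^{\lambda d_r X_r}]\le \exp\bigl(C_1 K^2 d_r^2 \lambda^2\bigr), \qquad |\lambda|\le \frac{c_1}{K|d_r|}.
\]

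Next, I would apply the Chernoff bound. For $\lambda>0$ with $\lambda\le c_1/(K\|d\|_\infty)$ (so the MGF bound applies uniformly to all $r$), independence gives
\[
\P\Bigl(\sum_{r=1}^m d_r X_r\ge t\Bigr)\le e^{-\lambda t}\prod_{r=1}^m \E[e^{\lambda d_r X_r}]\le \exp\bigl(-\lambda t + C_1 K^2\|d\|_2^2\lambda^2\bigr).
\]
I then optimize over admissible $\lambda$. The unconstrained minimizer is $\lambda^\star = t/(2C_1 K^2\|d\|_2^2)$, and I split into two cases according to whether $\lambda^\star$ falls inside the admissible interval.

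In the sub-Gaussian regime, when $\lambda^\star\le c_1/(K\|d\|_\infty)$, i.e.\ when $t\le 2C_1c_1 K\|d\|_2^2/\|d\|_\infty$, plugging in $\lambda^\star$ gives an exponent of $-t^2/(4C_1 K^2\|d\|_2^2)$. In the sub-exponential regime, when $\lambda^\star> c_1/(K\|d\|_\infty)$, I set $\lambda=c_1/(K\|d\|_\infty)$; then the quadratic term is dominated by $\lambda t/2$ (using the regime condition to compare the two terms), leaving an exponent of $-c_1 t/(2K\|d\|_\infty)$. In both regimes the exponent is bounded above by $-c\min\bigl(t^2/(K^2\|d\|_2^2),\, t/(K\|d\|_\infty)\bigr)$ for a single absolute constant $c$. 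Finally, applying the same argument to $(-d_r)X_r$ controls the lower tail, and a union bound produces the factor of $2$ in the statement.

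The only nontrivial step is the MGF estimate for sub-exponential random variables; everything else is bookkeeping around the Chernoff optimization and the two-regime split. I would cite the MGF bound directly from Vershynin (Proposition 2.7.1) rather than redo the moment-growth calculation.
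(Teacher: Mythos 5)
The paper does not give a proof of this lemma; it simply cites it as Theorem 2.8.2 of Vershynin's book. Your proposal correctly reproduces the standard argument used there (Chernoff bound with the sub-exponential MGF estimate, optimization over $\lambda$ with the two-regime split, and a union bound over the two tails), so it matches the intended source proof.
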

We introduce the following technical lemma related to the conditional expectation of Gaussian indicator function. 

\begin{lemma}\label{lemma:exp_indicator}
    Suppose $\w$ is a $m$-dim Gaussian random vector with distribution $\N(0,\bfI)$. Let $\c\neq 0,\b$ be two given vectors of $m$-dim. Then we have the following property
    \[\E[\I\{\langle \w,\c\rangle\ge 0\}\langle \w,\b\rangle^2] =\frac{\|\b\|_2^2}{2}.\]
\end{lemma}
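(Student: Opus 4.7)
The plan is to exploit the symmetry of the Gaussian distribution, specifically the fact that $\w$ and $-\w$ have the same law under $\N(0, \bfI)$. Let $T := \langle \w,\b\rangle^2$, and note that $T$ is invariant under $\w \mapsto -\w$. Apply the change of variable $\w \mapsto -\w$ inside the expectation on the left-hand side to obtain
\begin{align*}
\E[\I\{\langle \w,\c\rangle\ge 0\}\langle \w,\b\rangle^2]
= \E[\I\{\langle -\w,\c\rangle\ge 0\}\langle -\w,\b\rangle^2]
= \E[\I\{\langle \w,\c\rangle\le 0\}\langle \w,\b\rangle^2].
\end{align*}

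Next I would add the two equal expressions and use $\I\{\langle \w,\c\rangle\ge 0\} + \I\{\langle \w,\c\rangle\le 0\} = 1 + \I\{\langle \w,\c\rangle = 0\}$. Since $\c \ne 0$, the scalar $\langle \w,\c\rangle$ is a centered Gaussian with variance $\|\c\|_2^2 > 0$, so $\P(\langle \w,\c\rangle = 0) = 0$ and the extra indicator contributes zero to the expectation. This gives
\begin{align*}
2\,\E[\I\{\langle \w,\c\rangle\ge 0\}\langle \w,\b\rangle^2] = \E[\langle \w,\b\rangle^2] = \b^\top \E[\w\w^\top]\,\b = \|\b\|_2^2,
\end{align*}
where the last equality uses $\E[\w\w^\top] = \bfI$. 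Dividing by $2$ yields the claim.

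There is no serious obstacle here; the only subtlety is handling the boundary event $\{\langle \w,\c\rangle = 0\}$, which is dispatched by the hypothesis $\c \ne 0$. As an alternative route in case one wants a more explicit calculation, one could decompose $\b = \alpha \c/\|\c\|_2 + \b_\perp$ with $\b_\perp \perp \c$, use the independence of $\langle \w, \c/\|\c\|_2\rangle$ and $\langle \w, \b_\perp\rangle$ under the isotropic Gaussian, and evaluate each term in the expansion of $\langle \w,\b\rangle^2$ separately; the cross term vanishes because $\E[\langle \w, \b_\perp\rangle] = 0$, and the two squared terms produce $\tfrac{1}{2}(\alpha^2 + \|\b_\perp\|_2^2) = \tfrac{1}{2}\|\b\|_2^2$. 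The symmetry argument, however, is shorter and is the one I would present.
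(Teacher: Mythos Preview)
Your proof is correct, and the symmetry argument is genuinely different from (and shorter than) what the paper does. The paper proceeds by the explicit decomposition you mention as your alternative route: it sets $\u=\langle\w,\c\rangle$ and $\v=\langle\w,\b\rangle$, writes $\v=\frac{\langle\c,\b\rangle}{\|\c\|_2^2}\u+\z$ with $\z$ independent of $\u$, expands $\v^2$, and evaluates the three resulting terms using $\E[\I\{\u\ge 0\}\u^2]=\|\c\|_2^2/2$ and the independence of $\z$ from $\u$. Your symmetry argument bypasses all of this by pairing the two half-spaces directly; it uses nothing beyond the distributional invariance of $\w$ under negation and the absolute continuity of $\langle\w,\c\rangle$ when $\c\neq 0$. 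The paper's computation has the minor advantage that it would adapt to asymmetric variants (e.g.\ replacing the indicator by a non-even function of $\langle\w,\c\rangle$), while your approach exploits exactly the evenness that makes this particular lemma clean.
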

\begin{proof}
    Let $\u = \langle \w,\c\rangle,\v =\langle \w,\b\rangle$. Then $\u\sim \N(0,\|\c\|_2^2), \v\sim\N(0,\|\b\|_2^2)$. We decompose $\v$ into a component dependent on $\u$ and an independent residual $\z$:
    \[\v = \frac{\mathrm{Cov}(\u,\v)}{\mathrm{Var}(\u)}\u+\z=\frac{\langle \c,\b\rangle}{\|\c\|_2^2}\u+\z,\]
    where $\z\sim \N\left(0,\|\b\|_2^2-\frac{\langle \c,\b\rangle^2}{\|\c\|_2^2}\right)$ is independent of $\u$.
    Hence, we have
    \begin{align*}
       & \E[\I\{\langle \w,\c\rangle\ge 0\}\langle \w,\b\rangle^2]=\E[\I\{\u\ge 0\}\v^2]\\
       =&\frac{\langle \c,\b\rangle^2}{\|\c\|_2^4}\E[\I\{\u\ge 0\}\u^2]+\frac{2\langle \c,\b\rangle}{\|\c\|_2^2}\E[\I\{\u\ge 0\}\u\z]+\E[\I\{\u\ge 0\}\z^2]\\
       =&\frac{\langle \c,\b\rangle^2}{2\|\c\|_2^2}+0+\frac{1}{2}\E \z^2\\
       =&\frac{\langle \c,\b\rangle^2}{2\|\c\|_2^2}+\frac{1}{2}\cdot\left(\|\b\|_2^2-\frac{\langle \c,\b\rangle^2}{\|\c\|_2^2}\right)=\frac{\|\b\|_2^2}{2},
    \end{align*}
    where the second equality is due to $\E[\I\{\u\ge 0\}\u^2]=\frac{\|\c\|_2^2}{2}$ and the independence of $\u,\z$. The third equality follows from the distribution of $\z$. The proof is completed.
\end{proof}
The following lemma studies the output of initialization at each layer.
\begin{lemma}\label{lemma:ini_out}
    For any $\delta >0$, if $m\gtrsim L^2\log(NL/\delta)$, then with probability at least $1-\delta$, 
    \[\|h_0^{l}(\x^i)\|_2\in\left[\sqrt{\frac{2}{3}},\sqrt{\frac{4}{3}}\right],\quad i=1,\cdots,N \quad\text{and}\quad l =1,\cdots,L.\]
\end{lemma}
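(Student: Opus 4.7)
The plan is to induct on the layer index $l$. Conditional on $h_0^{l-1}(\x^i)$, the entries of $\W^l(0)$ are independent of the previous layers' randomness, so the pre-activations $\langle \w_r^l(0), h_0^{l-1}(\x^i)\rangle$, for $r\in[m]$, are i.i.d.\ $\N(0,\|h_0^{l-1}(\x^i)\|_2^2)$. Setting $Y_r := \sigma^2(\langle \w_r^l(0), h_0^{l-1}(\x^i)\rangle)$, I would use Lemma~\ref{lemma:exp_indicator} (with $\b=\c=h_0^{l-1}(\x^i)$) to compute the conditional mean $\E[Y_r\mid h_0^{l-1}] = \tfrac{1}{2}\|h_0^{l-1}(\x^i)\|_2^2$, so that
\[
    \E\bigl[\|h_0^l(\x^i)\|_2^2 \mid h_0^{l-1}\bigr] = \tfrac{2}{m}\sum_{r=1}^m \E[Y_r\mid h_0^{l-1}] = \|h_0^{l-1}(\x^i)\|_2^2.
\]
From \eqref{eq:guassian_norm} the sub-Gaussian norm of $\sigma(\langle \w_r^l(0),h_0^{l-1}\rangle)$ is $\lesssim \|h_0^{l-1}(\x^i)\|_2$, hence by Lemma~\ref{lemma:squard} and \eqref{eq:phi_1} the centered variables $Y_r - \E Y_r$ are sub-exponential with norm $K \lesssim \|h_0^{l-1}(\x^i)\|_2^2$.

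Next I would apply Bernstein (Lemma~\ref{lemma:bern}) to $\|h_0^l(\x^i)\|_2^2 - \|h_0^{l-1}(\x^i)\|_2^2 = \sum_r \tfrac{2}{m}(Y_r - \E Y_r)$, so that $\|d\|_2^2 = 4/m$ and $\|d\|_\infty = 2/m$. Choosing the deviation threshold $t = (c_0/L)\|h_0^{l-1}(\x^i)\|_2^2$ for a sufficiently small absolute constant $c_0$, the bound becomes
\[
    \P\Bigl(\bigl|\|h_0^l(\x^i)\|_2^2 - \|h_0^{l-1}(\x^i)\|_2^2\bigr| \ge \tfrac{c_0}{L}\|h_0^{l-1}(\x^i)\|_2^2 \,\Big|\, h_0^{l-1}\Bigr) \le 2\exp\!\bigl(-c\, m/L^2\bigr).
\]
A union bound over $i\in[N]$ and $l\in[L]$ shows that, under $m\gtrsim L^2\log(NL/\delta)$, the event
\[
    E := \Bigl\{\|h_0^l(\x^i)\|_2^2 = (1+\xi_l^i)\|h_0^{l-1}(\x^i)\|_2^2,\ |\xi_l^i|\le c_0/L,\ \forall i\in[N], l\in[L]\Bigr\}
\]
holds with probability at least $1-\delta$.

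Finally, on $E$, since $\|h_0^0(\x^i)\|_2 = \|\x^i\|_2 = 1$, telescoping gives
\[
    \log\|h_0^l(\x^i)\|_2^2 = \sum_{l'=1}^l \log(1+\xi_{l'}^i),
\]
and $|\log(1+\xi_{l'}^i)| \le 2|\xi_{l'}^i| \le 2c_0/L$, so $|\log\|h_0^l(\x^i)\|_2^2| \le 2c_0$ for every $l\le L$. Choosing $c_0 \le \tfrac{1}{2}\log(4/3)$ yields $\|h_0^l(\x^i)\|_2^2 \in [e^{-2c_0},e^{2c_0}] \subseteq [2/3,4/3]$, which gives the desired conclusion after taking square roots. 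No step here looks seriously problematic; the only mild care needed is to ensure that the Bernstein application is justified by conditioning on $h_0^{l-1}$ (which uses only $\W^{1}(0),\dots,\W^{l-1}(0)$), so that $\W^l(0)$ remains Gaussian independent of the conditioning, and to choose $c_0$ small enough that the cumulative multiplicative drift across $L$ layers stays inside $[2/3,4/3]$.
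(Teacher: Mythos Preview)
Your proposal is correct and follows essentially the same approach as the paper: condition on $h_0^{l-1}$, compute the conditional mean, bound the sub-exponential norm via Lemma~\ref{lemma:squard} and \eqref{eq:phi_1}, apply Bernstein (Lemma~\ref{lemma:bern}), union-bound over $i,l$, and telescope across layers. The only cosmetic differences are that the paper computes $\E\sigma^2$ directly (rather than invoking Lemma~\ref{lemma:exp_indicator}) and telescopes additively on $\bigl|\|h_0^l(\x^i)\|_2^2 - 1\bigr|$ instead of multiplicatively via logarithms; neither changes the substance of the argument.
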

\begin{proof}This result directly follows Corollary A.2 in \citet{zou2018stochasticgradientdescentoptimizes}, and we give the proof here for completeness. 
    Note that for $1\leq i\leq N, 1\leq l \leq L$,
    \begin{align*}
        \left\|h_0^{l}(\x^i)\right\|_2^2&= \left\|\sqrt{\frac{2}{m}}\sigma\left(\W^{l}(0)h_0^{l-1}(\x^i)\right)\right\|_2^2 = \frac{1}{m}\sum_{r=1}^m2\sigma^2(\langle \w_r^{l}(0),h_0^{l-1}(\x^i)\rangle).
    \end{align*}
    Condition on $h_0^{l-1}(\x^i)$, we have $\langle \w_j^{l}(0),h_0^{l-1}(\x^i)\rangle\sim \N(0,\|h_0^{l-1}(\x^i)\|_2^2)$, hence
    \begin{equation}\label{eq:E_sigmah0}
        \E 2\sigma^2(\langle \w_j^{l}(0),h_0^{l-1}(\x^i)\rangle)=2\left\|h_0^{l-1}(\x^i)\right\|_2^2\E_{u\sim\N(0,1)}\sigma^2(u)=\left\|h_0^{l-1}(\x^i)\right\|_2^2
    \end{equation}
    By \eqref{eq:phi_1} and Lemma~\ref{lemma:squard}, we have
    \begin{align*}
        &\|2\sigma^2(\langle \w_j^{l}(0),h_0^{l-1}(\x^i)\rangle)-\|h_0^{l-1}(\x^i)\|_2^2\|_{\phi_1}\\
        \leq &2 \|2\sigma^2(\langle \w_j^{l}(0),h_0^{l-1}(\x^i)\rangle)\|_{\phi_1}\le C \|\sigma(\langle \w_j^{l}(0),h_0^{l-1}(\x^i)\rangle)\|_{\phi_2}^2
        \leq C\|h_0^{l-1}(\x^i)\|_2^2,
    \end{align*}
    where the last inequality is due to \eqref{eq:guassian_norm}.
    Let $X_r =2\sigma^2(\langle \w_r^{l}(0),h_0^{l-1}(\x^i)\rangle)-\|h_0^{l-1}(\x^i)\|_2^2, d_r =1/m, K = C\left\|h_0^{l-1}(\x^i)\right\|_2^2$ and apply Lemma~\ref{lemma:bern}. We have for any $0\le t\le 1$,
    \begin{align*}
    &\P(|\|h_0^{l}(\x^i)\|_2^2-\|h_0^{l-1}(\x^i)\|_2^2|\leq Ct\|h_0^{l-1}(\x^i)\|_2^2|h_0^{l-1}(\x^i))\\   
     =&\P\left(\left|\frac{1}{m}\sum_{r=1}^m(\E 2\sigma^2(\langle w_r^{l}(0),h_0^{l-1}(\x^i)\rangle)-\|h_0^{l-1}(\x^i)\|_2^2\right|\leq Ct\|h_0^{l-1}(\x^i)\|_2^2|h_0^{l-1}(\x^i)\right)\\
        \ge&1-2\exp(-Cm\min\{t^2,t\})=1-2\exp(-Cmt^2).
    \end{align*}
    
    Taking union bounds over $i,l$, there holds for any $1\leq i\leq N$ and $1\leq l \leq L$,
    \[\P(|\|h_0^{l}(\x^i)\|_2^2-\|h_0^{l-1}(\x^i)\|_2^2|\leq Ct\|h_0^{l-1}(\x^i)\|_2^2)\geq 1-2nL\exp(-Cmt^2).\]
    Since $m\gtrsim L^2\log(NL/\delta)$, let $t = \sqrt{\log(NL/\delta)/m}$, we have with probability at least $1-\delta$, there holds
    \begin{equation}\label{eq:h_0-h_0}
        |\|h_0^{l}(\x^i)\|_2^2-\|h_0^{l-1}(\x^i)\|_2^2|\leq C\sqrt{\frac{\log(NL/\delta)}{m}}\|h_0^{l-1}(\x^i)\|_2^2, \quad 1\leq i\leq N,1\leq l \leq L.
    \end{equation}
    Now we show the following inequality holds with probability at least $1-\delta$,
    \begin{equation}\label{eqn:differ_o}
        |\|h_0^{l}(\x^i)\|_2^2-1|\leq \frac{4lC}{3}\sqrt{\frac{\log(NL/\delta)}{m}}\leq \frac{1}{3}.
    \end{equation}
When $l=0$, it is true. If \eqref{eqn:differ_o} holds for $l \in [L-1]$, then $\|h_0^{l}(\x^i)\|_2^2\leq 4/3$. Combined with \eqref{eq:h_0-h_0}, we have with probability at least $1-\delta$,
\begin{align*}
    |\|h_0^{l+1}(\x^i)\|_2^2-1|&\leq |\|h_0^{l}(\x^i)\|_2^2-1|+|\|h_0^{l+1}(\x^i)\|_2^2-\|h_0^{l}(\x^i)\|_2^2|\\
    &\leq \frac{4lC}{3}\sqrt{\frac{\log(NL/\delta)}{m}}+C\sqrt{\frac{\log(NL/\delta)}{m}}\|h_0^{l}(\x^i)\|_2^2\\
    &\leq\frac{4(l+1)C}{3}\sqrt{\frac{\log(NL/\delta)}{m}}\leq\frac{1}{3}.
\end{align*}
Hence, \eqref{eqn:differ_o} holds for all $i \in [N],
l \in [L]$, which implies the lemma.
\end{proof}
\begin{lemma}\label{prop:symm=0}
   For any $\x\in\X$, we have \[   \a^\top\Sigma^L_0(\x)\W^L(0) = 0 \ \text{ and } \ 
      \frac{\partial f_{\W(0)}(\x)}{\partial \W^l(0)} = 0 ,\quad l\in[L-1].
 \]   
\end{lemma}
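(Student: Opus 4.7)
The plan is to exploit the symmetric initialization in Assumption~\ref{ass:init} to force algebraic cancellation, proving the two claims in sequence: first establish the identity $\a^\top \Sigma_0^L(\x)\W^L(0) = 0$, and then deduce the vanishing of the intermediate-layer gradients from it by invoking the factored expression $\partial f_\W(\x)/\partial \W^l = (\G^l_L(\x))^\top \a (h^{l-1}(\x))^\top$ recorded just above the lemma.

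For the first identity, note that the $r$-th diagonal entry of $\Sigma_0^L(\x)$ is $\I\{\langle \w_r^L(0),h_0^{L-1}(\x)\rangle \ge 0\}$ and the $r$-th row of $\W^L(0)$ is $(\w_r^L(0))^\top$, so
\[
\a^\top \Sigma_0^L(\x)\W^L(0) = \sum_{r=1}^m a_r\,\I\{\langle \w_r^L(0),h_0^{L-1}(\x)\rangle \ge 0\}\,(\w_r^L(0))^\top.
\]
I would then pair the index $r\in[m/2]$ with $r+m/2$. Assumption~\ref{ass:init} gives $\w_{r+m/2}^L(0)=\w_r^L(0)$ and $a_{r+m/2}=-a_r$, while $h_0^{L-1}(\x)$ depends only on layers $1,\ldots,L-1$ and is thus independent of the last-layer weight tying. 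Consequently the two indicators within each pair coincide, the two row-vectors coincide, but the two scalars $a_r$ and $a_{r+m/2}$ cancel, so each pair contributes zero and the whole sum vanishes.

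For the gradient claim, for $l\in[L-1]$ the product representation of $\G^l_{L,0}(\x)$ begins (on the left) with the block $\sqrt{2/m}\,\Sigma_0^L(\x)\W^L(0)$. Therefore
\[
\a^\top \G^l_{L,0}(\x) = \bigl[\a^\top \sqrt{2/m}\,\Sigma_0^L(\x)\W^L(0)\bigr]\cdot \sqrt{2/m}\,\Sigma_0^{L-1}(\x)\W^{L-1}(0)\cdots \sqrt{2/m}\,\Sigma_0^l(\x),
\]
and the leftmost bracket is zero by the previous step. Transposing and multiplying by $(h_0^{l-1}(\x))^\top$ gives $\partial f_{\W(0)}(\x)/\partial \W^l(0) = (\G^l_{L,0}(\x))^\top \a \,(h_0^{l-1}(\x))^\top = 0$.

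There is no real obstacle; the argument is a pure algebraic cancellation. The only subtlety worth flagging is that we must use the fact that $h_0^{L-1}(\x)$ is measurable with respect to layers $1,\ldots,L-1$ and is therefore unaffected by the weight-tying imposed on $\W^L(0)$, so the two paired indicators are genuinely equal rather than merely equal in distribution. Note also that the argument does not extend to $l=L$, since for $l=L$ the last-layer block $\sqrt{2/m}\,\Sigma_0^L(\x)\W^L(0)$ no longer appears inside $\G^L_{L,0}(\x)=\sqrt{2/m}\,\Sigma_0^L(\x)$, so the cancellation mechanism is lost, which matches the restriction $l\in[L-1]$ in the statement.
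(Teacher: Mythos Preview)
The proposal is correct and follows essentially the same argument as the paper: pair index $r$ with $r+m/2$ and use $\w^L_{r+m/2}(0)=\w^L_r(0)$, $a_{r+m/2}=-a_r$ to cancel the sum, then observe that for $l\le L-1$ the factor $\a^\top\sqrt{2/m}\,\Sigma_0^L(\x)\W^L(0)$ appears at the left of $\a^\top\G^l_{L,0}(\x)$ and hence kills the gradient. Your added remarks about $h_0^{L-1}(\x)$ being unaffected by the last-layer tying and about why $l=L$ is excluded are helpful clarifications but do not change the structure of the proof.
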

\begin{proof}
    Note the $r$-th row of $\Sigma_0^L(\x)\W^L(0)$ is $\I\{\langle \w^L_r(0), h^{L-1}_0(\x) \ge 0\} \w^L_r(0)$.
    Since $a_r = -a_{r+\frac{m}{2}}$ and $\w^L_r(0) = \w^L_{r+\frac{m}{2}}(0)$ for all $r\in[\frac{m}{2}]$, we have
    \begin{align*}
        &\a^\top \Sigma^L_0(\x)\W^L(0) \\
        =& \sum_{r=1}^m a_r\I\{\langle \w^L_r(0), h^{L-1}_0(\x) \ge 0\} \w^L_r(0)\\
        =& \sum_{r=1}^{\frac{m}{2}}a_r\I\{\langle \w^L_r(0), h^{L-1}_0(\x) \ge 0\} \w^L_r(0) + \sum_{r=1}^{\frac{m}{2}}a_{r+\frac{m}{2}}\I\{\langle \w^L_{r+\frac{m}{2}}(0), h^{L-1}_0(\x) \ge 0\} \w^L_r(0)\\
        =& \sum_{r=1}^{\frac{m}{2}}(a_r-a_r)\I\{\langle \w^L_r(0), h^{L-1}_0(\x) \ge 0\} \w^L_r(0)  = 0.
    \end{align*}
   It then follows that for all $ l\in[L-1]$,
   \[\left(\frac{\partial f_{\W(0)}(\x)}{\partial \W^l(0)}\right)^\top =h_0^{l-1}(\x)\a^\top(\G^l_{L,0}(\x))^\top=h_0^{l-1}(\x)\a^\top\sqrt{\frac{2}{m}} \Sigma^L_0(\x) \W^L(0) \cdots \sqrt{\frac{2}{m}} \Sigma_0^l(\x)= 0.\]
   Hence, the proof is completed.
\end{proof}
\begin{lemma}\label{lemma:ini_partial}
     Suppose $m\gtrsim L^2\log(NL/\delta)$, then with probability at least $1-\delta$, for all $i\in[N]$,
     \[\left\|\frac{\partial f_{\W(0)}(\x^i)}{\partial \W^L(0)}\right\|_F\leq \sqrt{2}.\]
\end{lemma}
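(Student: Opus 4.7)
The plan is to unpack the formula for $\partial f_{\W(0)}(\x^i)/\partial \W^L(0)$ as a rank-one matrix and bound its two factors separately using the symmetric initialization and standard concentration.

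First, I would use the derivative formula from the preliminaries together with $\G_{L,0}^L(\x^i) = \sqrt{2/m}\,\Sigma_0^L(\x^i)$ to write
\[
\frac{\partial f_{\W(0)}(\x^i)}{\partial \W^L(0)} = \sqrt{\tfrac{2}{m}}\,\Sigma_0^L(\x^i)\,\a\,\bigl(h_0^{L-1}(\x^i)\bigr)^\top.
\]
Since this is an outer product, its Frobenius norm factors:
\[
\left\|\frac{\partial f_{\W(0)}(\x^i)}{\partial \W^L(0)}\right\|_F^2 = \tfrac{2}{m}\sum_{r=1}^m a_r^2\,\I\{\langle \w^L_r(0),\,h_0^{L-1}(\x^i)\rangle \ge 0\}\cdot \|h_0^{L-1}(\x^i)\|_2^2.
\]
Using $a_r^2=1$ from Assumption~\ref{ass:init}, the first factor is $\tfrac{2}{m}\sum_{r=1}^m \I\{\cdot\}$.

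Next I would bound each factor. For the norm of the hidden activation, Lemma~\ref{lemma:ini_out} directly gives $\|h_0^{L-1}(\x^i)\|_2^2 \le 4/3$ uniformly over $i\in[N]$ under the hypothesis $m\gtrsim L^2\log(NL/\delta)$. For the indicator sum, the symmetric initialization imposes $\w^L_{r+m/2}(0)=\w^L_r(0)$, so the sum equals $2\sum_{r=1}^{m/2}\I\{\langle \w^L_r(0),\,h_0^{L-1}(\x^i)\rangle \ge 0\}$, and conditional on $h_0^{L-1}(\x^i)$ these $m/2$ indicators are i.i.d.\ Bernoulli$(1/2)$ (as $\{\w_r^L(0)\}_{r\le m/2}$ are independent of the earlier layers). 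Hoeffding's inequality then yields
\[
\P\!\left(\tfrac{2}{m}\sum_{r=1}^m \I\{\cdot\} \ge 1 + 2t \,\bigm|\, h_0^{L-1}(\x^i)\right) \le 2\exp(-mt^2),
\]
and after a union bound over $i\in[N]$, choosing $t = \sqrt{\log(2N/\delta)/m}$ gives $\tfrac{2}{m}\sum_r\I\{\cdot\} \le 1 + 2\sqrt{\log(2N/\delta)/m}$ with probability at least $1-\delta/2$ (adjusting the probabilities in Lemma~\ref{lemma:ini_out} to $\delta/2$ as well).

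Finally, combining the two bounds,
\[
\left\|\frac{\partial f_{\W(0)}(\x^i)}{\partial \W^L(0)}\right\|_F^2 \le \tfrac{4}{3}\bigl(1 + 2\sqrt{\log(2N/\delta)/m}\,\bigr).
\]
The hypothesis $m\gtrsim L^2\log(NL/\delta)$ (with a suitable absolute constant) ensures $2\sqrt{\log(2N/\delta)/m}\le 1/2$, making the right-hand side at most $(4/3)(3/2)=2$, which gives the claim. There is no serious obstacle: the only point that requires care is correctly accounting for the halved independence due to the symmetric initialization so that Hoeffding is applied to $m/2$ (rather than $m$) genuinely independent Bernoullis, and absorbing the logarithmic factors into the constants defining the width requirement.
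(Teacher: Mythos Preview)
Your proposal is correct and follows essentially the same approach as the paper: factor the rank-one derivative, bound $\|h_0^{L-1}(\x^i)\|_2^2$ via Lemma~\ref{lemma:ini_out}, and bound the indicator average via Hoeffding conditional on $h_0^{L-1}(\x^i)$, then multiply. The paper writes the indicator bound as $\tfrac{1}{m}\sum_r \I\{\cdot\}\le \tfrac12 + C\sqrt{\log(N/\delta)/m}\le \tfrac34$ and multiplies by $2\cdot\tfrac43$ to get $2$, which is exactly your $(4/3)(3/2)=2$; your added care about the symmetric initialization halving the number of independent Bernoullis is a detail the paper leaves implicit but which only changes constants.
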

\begin{proof}
By Hoeffding inequality, condition on $h_0^{L-1}(\x^i)$, with probability at least $1-\delta$, there holds 
\[\frac{1}{m}\a^\top\Sigma^L_0(\x^i)\Sigma_0^L(\x^i)\a=\frac{1}{m}\sum_{j=1}^m\I\{\langle \w^L_j(0),h_0^{L-1}(\x^i)\rangle\ge 0\}\leq \frac{1}{2}+C\sqrt{\frac{\log(N/\delta)}{m}}\leq \frac{3}{4}.\]
    Combined with Lemma~\ref{lemma:ini_out}, we have
    \[\left\|\frac{\partial f_{\W(0)}(\x^i)}{\partial \W^L(0)}\right\|_F^2=\frac{2}{m}\|h_0^{L-1}(\x^i)\|_2^2\a^\top\Sigma^L_0(\x^i)\Sigma_0^L(\x^i)\a\leq 2\cdot\frac{4}{3}\cdot\frac{3}{4}=2.\]
   The proof is completed.
\end{proof}
Let $\Sigma_1,\Sigma_2 \in \R^{m\times m}$ be two diagonal matrices with entries in $\{0,1\}$.
\begin{lemma}\label{lemma:init_sparse}
    Suppose $m\gtrsim L^2\log(NL/\delta),s\lesssim {m}/(L^2\log m)$, then with probability at least $1-\delta$ we have for all $i\in [N],2\le a\le b\le L$, 
    \begin{equation}\label{eqn:init_sparse}
        \underset{\|\Sigma_1\|_0\le  s}{\sup}\|\H_{b,0}^a(\x^i)\Sigma_1 \|_2\lesssim 1.
    \end{equation}
\end{lemma}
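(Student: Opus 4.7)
The plan is to reduce the spectral norm of $\H_{b,0}^a(\x^i)\Sigma_1$ to a supremum over $s$-sparse unit vectors, and then combine an $\epsilon$-net covering argument with a layer-by-layer Bernstein concentration, exploiting the conditional expectation identity in Lemma~\ref{lemma:exp_indicator}. Since right-multiplication by $\Sigma_1$ only selects at most $s$ columns,
\begin{equation*}
\sup_{\|\Sigma_1\|_0 \le s}\|\H_{b,0}^a(\x^i)\Sigma_1\|_2 \;=\; \sup_{J\subset[m],\,|J|\le s}\ \sup_{\substack{u\in\R^m,\,\|u\|_2=1\\ \mathrm{supp}(u)\subset J}} \|\H_{b,0}^a(\x^i)\,u\|_2.
\end{equation*}

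Next I would fix a sparse unit vector $u$ and iterate the norm along depth. Define $v_{a-1}=u$ and $v_l=\sqrt{2/m}\,\Sigma_0^l(\x^i)\W^l(0)v_{l-1}$ for $a\le l\le b$, so that $v_b=\H_{b,0}^a(\x^i)u$. Conditional on $\W^1(0),\dots,\W^{l-1}(0)$, the quantities $v_{l-1}$ and $h_0^{l-1}(\x^i)$ are deterministic while $\W^l(0)$ has i.i.d. Gaussian rows. Expanding
\begin{equation*}
\|v_l\|_2^2 \;=\; \frac{2}{m}\sum_{j=1}^m \I\{\langle \w_j^l(0),h_0^{l-1}(\x^i)\rangle \ge 0\}\bigl(\langle \w_j^l(0),v_{l-1}\rangle\bigr)^2,
\end{equation*}
Lemma~\ref{lemma:exp_indicator} applied with $\c=h_0^{l-1}(\x^i)$ and $\b=v_{l-1}$ yields a conditional mean of $\|v_{l-1}\|_2^2$ for the entire sum. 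Each summand is dominated by $(\langle \w_j^l(0),v_{l-1}\rangle)^2$, which has sub-exponential norm of order $\|v_{l-1}\|_2^2$ by Lemma~\ref{lemma:squard} together with \eqref{eq:guassian_norm}. Bernstein's inequality (Lemma~\ref{lemma:bern}) with deviation $t\|v_{l-1}\|_2^2$ then gives $\bigl|\|v_l\|_2^2-\|v_{l-1}\|_2^2\bigr|\le t\|v_{l-1}\|_2^2$ with failure probability at most $2\exp(-cm\min(t^2,t))$. Choosing $t=c'/L$ and telescoping over $a\le l\le b$ produces $\|v_b\|_2^2 \le (1+c'/L)^L \le e^{c'}=O(1)$, with per-direction failure probability at most $2L\exp(-cm/L^2)$.

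The last step is a covering argument. For each $J$ with $|J|\le s$, take a $(1/2)$-net of the unit sphere in the subspace of vectors supported on $J$; a standard volume bound gives cardinality at most $5^s$, and discretization yields $\sup_u\|\H_{b,0}^a(\x^i)u\|_2 \le 2\sup_{u\in\mathrm{net}}\|\H_{b,0}^a(\x^i)u\|_2$. A union bound over $i\in[N]$, layer pairs $(a,b)$ with $2\le a\le b\le L$, subsets $J$ with $\binom{m}{s}\le (em/s)^s$, and the $5^s$ net points bounds the total failure probability by
\begin{equation*}
2NL^3(5em/s)^s \exp(-cm/L^2).
\end{equation*}
Under $s\lesssim m/(L^2\log m)$ one has $s\log(5em/s)\lesssim m/L^2$, and $m\gtrsim L^2\log(NL/\delta)$ absorbs the remaining logarithmic terms, so the total is at most $\delta$. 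Combined with $\sqrt{e^{c'}}=O(1)$, this yields $\sup_{\|\Sigma_1\|_0\le s}\|\H_{b,0}^a(\x^i)\Sigma_1\|_2\lesssim 1$ simultaneously for all $i\in[N]$ and $2\le a\le b\le L$.

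The principal technical obstacle is the tight coupling between the ReLU mask $\Sigma_0^l(\x^i)$ and the Gaussian matrix $\W^l(0)$: both depend on $\W^l(0)$, so the indicator and the inner product inside each summand are correlated. Lemma~\ref{lemma:exp_indicator} is exactly what breaks this deadlock by producing the clean conditional mean $\|\b\|_2^2/2$ for arbitrary alignment of $\c$ and $\b$. The second delicate calibration is the choice $t=c'/L$, fine enough to keep $(1+t)^L$ at $O(1)$ while still giving a per-layer concentration $e^{-cm/L^2}$ strong enough to absorb the covering cardinality $\binom{m}{s}\cdot 5^s$; this is precisely what fixes the sparsity threshold $s\lesssim m/(L^2\log m)$ in the statement.
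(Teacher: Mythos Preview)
Your proposal is correct and follows essentially the same route as the paper: reduce to $s$-sparse unit vectors, apply Lemma~\ref{lemma:exp_indicator} to get the conditional mean $\|v_{l-1}\|_2^2$, use Bernstein per layer with deviation $t\asymp 1/L$, telescope to $(1+c'/L)^L=O(1)$, and union-bound over data points, layers, and the $\binom{m}{s}5^s$ net points. The only cosmetic differences are that the paper writes out the $1/2$-net step as an explicit triangle-inequality-and-sup argument over all of $\mathcal{P}$ at once (rather than per subspace), and it carries a factor $NL$ rather than $NL^3$ in the union bound, neither of which changes the conclusion.
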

\begin{proof}
We need to prove that for any $v\in S^{m-1}$, there holds
\begin{equation}\label{eq:spase_v}
  \|  \H_{b,0}^a(\x^i)\Sigma_1 v\|_2\lesssim 1.
\end{equation}

Note that $\|\Sigma_1 v\|_0\le s$ and $\|\Sigma_1 v\|_2\le1$. Let $\mathcal{P} = \{v\in S^{m-1}:\|v\|_0\le s\}$. We only need to prove that the following inequality holds with probability at least $1-\delta$:
 \begin{equation}\label{eqn:init_sparse_v}
        \underset{v \in \mathcal{P}}{\sup}\|\H_{b,0}^a(\x^i)v\|_2\lesssim 1.
    \end{equation}
 Let $\mathcal{S}$ be a subspace of $S^{m-1}$ that has at most $s$ non-zero coordinates. For such a subspace, we choose a $1/2$-cover of it and denote this cover by $\mathcal{Q}$. By Lemma 4.2.13 in \citet{vershynin2018high}, 
    \[|\mathcal{Q}|\le5^s.\]
    The number of such subspaces is $M=\binom{m}{s}$. We denote all subspaces by $\mathcal{S}_1,\cdots,\mathcal{S}_M$, and the corresponding covers $\mathcal{Q}_1,\cdots,\mathcal{Q}_M$.
    Let $\bigcup \mathcal{Q}=\{v_1,\cdots,v_{M'}\}$ with $M'\le \binom{m}{s}5^s$. We first prove that \eqref{eqn:init_sparse_v} is true for all $v_j$, then it holds simultaneously for all elements in $\mathcal{P}$.
    
    For a unit vector $v$, we define 
     \[v^l(\x_i)=\H_{l,0}^a(\x^i)v,\quad a\leq l\leq b.\]
     and $v^{a-1}(\x^i)=v$. Note that condition on $h_0^{l-1}(\x^i),v^{l-1}(\x^i)$, we take expectation over $\w_r^{l}(0)$, applying Lemma~\ref{lemma:exp_indicator} implies that
     \begin{align*}
        \E[2\I\{\langle \w_r^{l}(0),h_0^{l-1}(\x^i)\rangle\ge 0\}(\langle \w^l_r(0),v^{l-1}(\x^i)\rangle)^2]=\|v^{l-1}(\x^i)\|_2^2.
    \end{align*}
     It then follows that
    \begin{align*}
        &\|v^{l}(\x^i)\|^2_2 = \|\H_{l,0}^a(\x^i)v\|_2^2=\left\|\sqrt{\frac{2}{m}}\Sigma_0^l(\x^i)\W^l(0)v^{l-1}(\x^i)\right\|_2^2\\
        =&\frac{1}{m}\sum_{r=1}^m\E2\I\{\langle \w_r^{l}(0),h_0^{l-1}(\x^i)\rangle\ge 0\}(\langle \w^l_r(0),v^{l-1}(\x^i)\rangle)^2=\|v^{l-1}(\x^i)\|_2^2.
    \end{align*}
    Similar to the proof of Lemma \ref{lemma:ini_out}, for every $v_j,1\le j\le M'$, we apply Lemma~\ref{lemma:bern} to get
\[\P\left(|\|v^l_j(\x^i)\|_2^2-\|v^{l-1}_j(\x^i)\|_2^2|\leq\frac{C\|v^{l-1}_j(\x^i)\|_2^2}{L}|v_j^{l-1}(\x^i)\right)\geq1-2\exp\left(-Cm\min\left\{\frac{1}{L^2},\frac{1}{L}\right\}\right).\] Taking the union bounds for all $j,l,i$ yields
\begin{align*}
&\P\left(|\|v^l_j(\x^i)\|_2^2-\|v^{l-1}_j(\x^i)\|_2^2|\leq \frac{C\|v^{l-1}_j(\x^i)\|_2^2}{L}\right)\\
\geq&1-2\cdot \binom{m}{s}5^sNL\exp\left(\frac{-Cm}{L^2}\right)\\
\geq&1-2\cdot ({5em})^sNL\exp\left(\frac{-Cm}{L^2}\right)\\
=&1-\exp\left[\log\delta+s\log m+s\log (5e)+\log\left(\frac{2NL}{\delta}\right)-\frac{C}{L^2}m\right]
\geq 1-\delta    ,
\end{align*}
where we have used $\binom{m}{s}\le (em/s)^s\le (em)^s$ in the second inequality and the last inequality is due to $m\gtrsim L^2\log(NL/\delta),s\lesssim {m}/(L^2\log m)$.
    Hence, we have with probability at least $1-\delta$,
    \[\|v^l_j(\x^i)\|_2^2\leq \left(1+\frac{C}{L}\right)\|v^{l-1}_j(\x^i)\|_2^2\leq \left(1+\frac{C}{L}\right)^L\|v_j\|_2^2\lesssim 1.\]
    
For any unit vector $v$ with $\|v\|_0\le s$, consider the subspace $S$ containing it and the corresponding $1/2$- cover $\mathcal{Q}$. There exists a unit vector $v_j, j\in [M'], \|v-v
    _j\|_0\le s$ and $\|v-v_j\|_2\le 1/2$. Thus,
    \begin{align*}
        &\|\H_{b,0}^a(\x^i)v \|_2\\
        \le&\|\H_{b,0}^a(\x^i) v_j \|_2+\|\H_{b,0}^a(\x^i)(v-v_j) \|_2\\
        =&\|v_j^b(\x^i)\|_2+\|v-v_j\|_2\left\|\H_{b,0}^a(\x^i)\frac{v-v_j}{\|v-v_j\|_2}\right \|_2\\
        \lesssim& 1+\|v-v_j\|_2\sup_{v\in \mathcal{P}}\|\H_{b,0}^a(\x^i)v \|_2\le 1+\frac{1}{2}\sup_{v\in \mathcal{P}}\|\H_{b,0}^a(\x^i)v \|_2.
    \end{align*}
    Taking $\sup$ to the both sides yields 
    \[\sup_{v\in \mathcal{P}}\|\H_{b,0}^a(\x^i)v \|_2\lesssim 1+\frac{1}{2}\sup_{v\in \mathcal{P}}\|\H_{b,0}^a(\x^i)v \|_2,\]
   which implies $\sup_{v\in \mathcal{P}}\|\H_{b,0}^a(\x^i)v \|_2\lesssim 1$. Hence \eqref{eqn:init_sparse_v} holds and the proof is completed.
\end{proof}
\begin{remark}
     Our proofs are inspired by Lemma A.9 in \citet{zou2018stochasticgradientdescentoptimizes}, which establishes the estimates under the condition $s \gtrsim \log(NL/\delta)$. However, we eliminate this assumption via a more refined analysis. 
\end{remark}

\begin{lemma}\label{lemma:ini_prod}
    Suppose $m\gtrsim L^2\log(NL/\delta)$, then with probability at least $1-\delta$, we have for all $i\in [N], 2\leq a\leq b\leq L$,
    \[\|\H_{b,0}^a(\x^i)\|_2\lesssim L\sqrt{\log  m}.\]
\end{lemma}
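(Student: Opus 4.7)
The plan is to reduce the spectral norm bound to the sparse bound from Lemma~\ref{lemma:init_sparse}. The naive approach of using $\|\W^l(0)\|_2 \lesssim \sqrt{m}$ at each layer collapses because each factor $\sqrt{2/m}\,\Sigma^l_0(\x^i)\W^l(0)$ has spectral norm on the order of a constant, so the telescoping product would be exponential in $b-a$. Instead, we exploit the fact that when the input to $\H_{b,0}^a(\x^i)$ is restricted to a small coordinate set, the resulting vector has norm $O(1)$.

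Concretely, I would fix $s \asymp m/(L^2 \log m)$, small enough to invoke Lemma~\ref{lemma:init_sparse}, and for any unit vector $v \in S^{m-1}$ partition the index set $[m]$ into $K = \lceil m/s \rceil \lesssim L^2 \log m$ disjoint blocks of size at most $s$. Let $\Sigma_k$ denote the diagonal $0/1$ matrix that restricts to the $k$-th block; then $\|\Sigma_k\|_0 \le s$, the $\Sigma_k$'s sum to the identity, and $v = \sum_{k=1}^K \Sigma_k v$. By the triangle inequality and Lemma~\ref{lemma:init_sparse},
\begin{equation*}
\|\H_{b,0}^a(\x^i) v\|_2 \le \sum_{k=1}^K \|\H_{b,0}^a(\x^i)\Sigma_k v\|_2 \le C \sum_{k=1}^K \|\Sigma_k v\|_2,
\end{equation*}
where the sparse bound is applied with the unit vector $\Sigma_k v/\|\Sigma_k v\|_2$ (and the scalar $\|\Sigma_k v\|_2$ pulled out).

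Applying Cauchy--Schwarz together with the orthogonal decomposition gives
\begin{equation*}
\sum_{k=1}^K \|\Sigma_k v\|_2 \le \sqrt{K}\,\Bigl(\sum_{k=1}^K \|\Sigma_k v\|_2^2\Bigr)^{1/2} = \sqrt{K}\,\|v\|_2 = \sqrt{K} \lesssim \sqrt{L^2 \log m} = L\sqrt{\log m}.
\end{equation*}
Taking the supremum over $v \in S^{m-1}$ yields $\|\H_{b,0}^a(\x^i)\|_2 \lesssim L\sqrt{\log m}$, and a union bound over $i \in [N]$ and over all pairs $2 \le a \le b \le L$ is absorbed into the probability statement of Lemma~\ref{lemma:init_sparse} since the latter already holds uniformly over $i,a,b$.

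The only delicate point is the choice of $s$: it must be large enough that $K = \lceil m/s \rceil$ is only $O(L^2 \log m)$ (so the final bound is $L\sqrt{\log m}$ rather than something worse), while being small enough that Lemma~\ref{lemma:init_sparse} applies, i.e., $s \lesssim m/(L^2 \log m)$. Taking $s$ as a constant multiple of $m/(L^2 \log m)$ (which is at least $1$ under the hypothesis $m \gtrsim L^2 \log(NL/\delta)$) balances both requirements simultaneously, and the rest is a clean triangle-inequality-plus-Cauchy--Schwarz calculation.
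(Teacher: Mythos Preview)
Your proposal is correct and essentially identical to the paper's proof: both choose $s\asymp m/(L^2\log m)$, decompose an arbitrary unit vector into $K\lesssim L^2\log m$ coordinate-disjoint $s$-sparse pieces, invoke Lemma~\ref{lemma:init_sparse} on each piece, and finish with the triangle inequality plus Cauchy--Schwarz. The only difference is notational---you write the pieces as $\Sigma_k v$ with explicit diagonal projections while the paper writes them as $v_j$---and your explicit discussion of how to balance the choice of $s$ mirrors the paper's implicit choice.
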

Although the left-hand side of above inequality could be the production of $L$ terms, it is bounded by $\widetilde{O}(L)$. This lemma shows that the introduction of ReLU activation can avoid exponential explosion.
\begin{proof}
    For any unit vector $v$, we decompose it as $v=v_1+\cdots+v_q$, where $v_j,j\in [q]$ are all $s$-sparse vectors on different coordinates. Therefore,
    \[\|v\|_2^2 = \sum_{j=1}^q\|v_j\|_2^2.\]
    Here we choose $s \asymp m/(L^2\log m)$, then $q \lesssim m/s\lesssim L^2\log m$. Applying Lemma ~\ref{lemma:init_sparse}, we have
    \begin{align*}
        &\|\H_{b,0}^a(\x^i)v\|_2 = \left\|\sum_{j=1}^q\H_{b,0}^a(\x^i)v_j\right\|_2
        \le\sum_{j=1}^q\|\H_{b,0}^a(\x^i)v_j\|_2\le \sqrt{q \sum_{j=1}^q\|\H_{b,0}^a(\x^i)v_j\|_2^2}\\
        \lesssim & \sqrt{q\sum_{j=1}^q\|v_j\|^2_2}= \sqrt{q\|v\|_2^2}=\sqrt{q}
        \lesssim L\sqrt{\log m}.
    \end{align*}
    where we have used Cauchy-Schwartz's inequality in the second inequality. Hence,
    \[\|\H_{b,0}^a(\x^i)\|_2\lesssim L\sqrt{\log  m}.\]
    The proof is completed.
\end{proof}
From the above lemma, we know that if $m\gtrsim L^2\log(NL/\delta)\log m$, then 
\begin{align}\label{eq:bounded_G_b}
    \left\|\G^{a}_{b,0}(\x^i)\right\|_2\lesssim L\sqrt{\frac{\log m}{m}},\quad i\in[N],1\le a\le b\le L.
\end{align}
\begin{remark}
     In Lemma \ref{lemma:ini_prod}, we introduce a useful technique that decomposes the unit vector into sparse components. This approach reduces the covering number from $5^m$ to $5^s\binom{m}{s} $, making it easier for high-probability bounds to hold. A related method appears in Lemma 7.3 of \citet{pmlr-v97-allen-zhu19a}, but their width exhibits polynomial dependence on $n$. In contrast, our analysis achieves polylogarithmic width, substantially relaxing the overparameterization requirement. 
\end{remark}

Using similar techniques we can obtain the following lemma:
\begin{lemma}\label{lemma:two_D}
    Suppose $m\gtrsim L^2\log(NL/\delta)$, then with probability at least $1-\delta$, for all $i\in [N], 2\leq a\leq b\leq L,\|\Sigma_1\|_0,\|\Sigma_2\|_0\leq s\lesssim {m}/(L^2\log m)$,
   \[\left\|\Sigma_1\sqrt{\frac{2}{m}}\W^b(0)\H_{b-1,0}^a(\x^i)\Sigma_2\right\|_2\lesssim \frac{1}{L}.\]
\end{lemma}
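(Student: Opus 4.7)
The plan is to adapt the sparse decomposition plus $\epsilon$-net strategy of Lemma~\ref{lemma:init_sparse}, while exploiting the extra structure that $\W^b(0)$ is independent of $\W^a(0),\dots,\W^{b-1}(0)$. When pre-composed with a sparse diagonal $\Sigma_1$ whose support has size $s\lesssim m/(L^2\log m)$, the factor $\sqrt{2/m}\,\W^b(0)$ effectively acts as a random projection onto an $s$-dimensional subspace, and this restriction of the output dimension is what produces the $1/L$ improvement over the order-one bound of Lemma~\ref{lemma:init_sparse}.

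First I would build a finite enumeration: for each triple $(i,a,b)$, enumerate all pairs of supports $\mathcal{A},\mathcal{B}\subset[m]$ of size $\le s$ (at most $\binom{m}{s}^2$ choices), and for each $\mathcal{B}$ take a $1/2$-cover of the unit sphere of the coordinate subspace indexed by $\mathcal{B}$, of size at most $5^s$ by Corollary 4.2.13 in \citet{vershynin2018high}. For a fixed cover point $v$ I would condition on $\W^a(0),\dots,\W^{b-1}(0)$ and set $u=\H_{b-1,0}^a(\x^i)\Sigma_2 v$; by Lemma~\ref{lemma:init_sparse} applied to $\Sigma_2$, we have $\|u\|_2\lesssim 1$ on the event of that lemma (the boundary case $a=b$ simply gives $u=\Sigma_2 v$ with $\|u\|_2\le 1$). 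Using the freshly independent $\W^b(0)$, the coordinates of $\sqrt{2/m}\,\W^b(0)u$ are i.i.d.\ $\N(0,2\|u\|_2^2/m)$, so $\|\Sigma_1\sqrt{2/m}\W^b(0)u\|_2^2$ is a sum of at most $s$ independent squared-Gaussian terms with mean $\lesssim 1/(L^2\log m)$. A Bernstein-type bound (Lemma~\ref{lemma:bern}) applied to this sub-exponential sum, with target deviation $t\asymp 1/L^2$, yields tail probability at most $\exp(-cm/L^2)$: the quadratic regime contributes an exponent of order $m^2/(sL^4)\gtrsim m\log m/L^2$ and the linear regime contributes an exponent of order $m/L^2$.

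Finally I would union-bound over the $N\binom{L}{2}$ choices of $(i,a,b)$, the $\binom{m}{s}^2\le(em/s)^{2s}$ support pairs, and the $5^s$ cover points, yielding log-cardinality at most $C s\log m+\log(NL^2/\delta)\lesssim m/L^2+\log(NL/\delta)$, which is absorbed by the $\exp(-cm/L^2)$ tail under the hypothesis $m\gtrsim L^2\log(NL/\delta)$. Passing from the cover back to the full sparse unit ball costs only a factor of $2$ via the same net-to-sup argument used at the end of the proof of Lemma~\ref{lemma:init_sparse}, and yields the claimed $\lesssim 1/L$ bound. The main obstacle is that the covering exponent $s\log m$ exactly matches the concentration exponent $m/L^2$ when $s\asymp m/(L^2\log m)$, so the threshold on $s$ is sharp; the constants in Bernstein's inequality must therefore be tracked carefully to leave enough room for the final absolute constant.
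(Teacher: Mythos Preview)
Your proposal is correct and matches the paper's own proof almost exactly: the paper also reduces to $s$-sparse unit vectors $v$ via a $1/2$-net, controls $\|\H_{b-1,0}^a(\x^i)v\|_2\lesssim1$ via the recursion from Lemma~\ref{lemma:init_sparse}, applies Bernstein (in weighted form, Lemma~\ref{lemma:bern}) to the $s$-term sum $\sum_r d_r\frac{2}{m}\langle\w_r^b(0),v^{b-1}\rangle^2$ with target deviation $1/L^2$, and union-bounds over $NL\binom{m}{s}^2 5^s$ events. One small correction: you should condition on all of $\W^1(0),\dots,\W^{b-1}(0)$ (not just from layer $a$), since $u$ depends on the earlier layers through the sign patterns $\Sigma_0^l(\x^i)$; this does not affect the argument because $\W^b(0)$ remains independent.
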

\begin{proof}
If $s=0$, the above inequality becomes $0\lesssim 1/L$, which holds true. Now we assume $s\ge1$. Similar to Lemma~\ref{lemma:init_sparse}, we only need to prove that for any $s$-sparse unit vector $v$ there holds
\begin{equation}\label{eq:D_1_G}
   \left\|\Sigma_1\sqrt{\frac{2}{m}}\W^b(0)\H_{b-1,0}^a(\x^i)v\right\|_2\lesssim \frac{1}{L} .
\end{equation}
    We use the same notation as in Lemma~\ref{lemma:init_sparse}, it then follows that for all $j\in[M'],a\le l\le b, i \in [N]$,
\[\P\left(|\|v^l_j(\x^i)\|_2^2-\|v^{l-1}_j(\x^i)\|_2^2|\leq\frac{C\|v^{l-1}_j(\x^i)\|_2^2}{L}|v_j^{l-1}(\x^i)\right)\geq1-2\exp\left(-Cm\min\left\{\frac{1}{L^2},\frac{1}{L}\right\}\right).\] 
For a fixed $\Sigma_1$, we assume $\Sigma_1=diag\{d_1,\cdots,d_m\}$ with $d_r\in\{0,1\},\sum_r d_r\le s,r\in[m]$. We have
\begin{equation}\label{eq:left_sigma}
    \left\|\Sigma_1\sqrt{\frac{2}{m}}\W^b(0)v_j^{b-1}(\x^i)\right\|_2^2 = \sum_{r=1}^md_r^2\frac{2}{m}(\langle\w^b_r, v_j^{b-1}(\x^i)\rangle)^2.
\end{equation}

Condition on $v^{b-1}_j(\x^i)$, there holds
\[\E\frac{2}{m}(\langle\w^b_r(0), v_j^{b-1}(\x^i)\rangle)^2=\frac{2}{m}\|v_j^{b-1}(\x^i)\|_2^2.\]
Let $X_r =\frac{2}{m}(\langle\w^b_r(0), v_j^{b-1}(\x^i)\rangle)^2-\frac{2}{m}\|v_j^{b-1}(\x^i)\|_2^2$ and $d  = (d_1,\cdots,d_m)$. Then $X_r$ are mean-zero sub-exponential random variables, following similar discussions in Lemma~\ref{lemma:ini_out}, we have $\|X_r\|_{\phi_1}\leq \frac{C}{m}\|v_j^{b-1}(\x^i)\|_2^2$. Moreover, $\|d\|_2^2 = \sum_{r=1}^m d^2_r = \sum_{r=1}^m d_r\le s, \|d\|_{\infty}=1 $. Applying Lemma~\ref{lemma:bern}, we have 
\begin{align}\label{eq:dr_X_r}
    &\P\left(\left|\sum_{r=1}^md_r\left(\frac{2}{m}(\langle\w^b_r(0), v_j^{b-1}(\x^i)\rangle)^2-\frac{2}{m}\|v_j^{b-1}(\x^i)\|_2^2\right) \right|
    \ge t\|v_j^{b-1}(\x^i)\|_2^2|v_j^{b-1}(\x^i)\right)\nonumber\\
    \le&2\exp\left[-C\min\left(\frac{t^2m^2}{s},tm\right)\right].
\end{align}

Choosing $t = 1/L^2$ and note that $s\leq m/(L^2\log m)$, we have 
\begin{align*}
    &\P\left( \left\|\Sigma_1\sqrt{\frac{2}{m}}\W^b(0)v_j^{b-1}(\x^i)\right\|_2^2\leq \left(\frac{2s}{m}+\frac{1}{L^2}\right)\|v_j^{b-1}(\x^i)\|_2^2|v_j^{b-1}(\x^i)\right)\\
    \ge&\P\left( \sum_{r=1}^md_r^2\frac{2}{m}(\langle\w^b_r(0), v_j^{b-1}(\x^i)\rangle)^2\leq \sum_{r=1}^m\frac{2d_r}{m}\|v_j^{b-1}(\x^i)\|_2^2+\frac{1}{L^2}\|v_j^{b-1}(\x^i)\|_2^2|v_j^{b-1}(\x^i)\right)\\
    \ge&1-2\exp\left(-\frac{Cm}{L^2}\right),
\end{align*}
where the first inequality is due to \eqref{eq:left_sigma} and $\sum_{r=1}^m d_r\le s$, the last inequality results from \eqref{eq:dr_X_r}.

Taking union bounds over all $\x^i,\Sigma_1,v_j,l$, and note that
\[2nL\binom{m}{s}5^s\binom{m}{s}\exp\left(-\frac{Cm}{L^2}\right)\le \frac{\delta}{2}.\]
We have with probability at least $1-\delta$,
\begin{align*}
    &\left\|\Sigma_1\sqrt{\frac{2}{m}}\W^b(0)v_j^{b-1}(\x^i)\right\|_2^2\leq \left(\frac{2s}{m}+\frac{1}{L^2}\right)\|v_j^{b-1}(\x^i)\|_2^2\\
    \le&\left(\frac{2s}{m}+\frac{1}{L^2}\right)\left(1+\frac{C}{L}\right)\|v_j^{b-2}(\x^i)\|_2^2 \le\left(\frac{2s}{m}+\frac{1}{L^2}\right)\left(1+\frac{C}{L}\right)^L\|v_j\|_2^2\lesssim \frac{1}{L^2}  .
\end{align*}
Hence, we have
\[\left\|\Sigma_1\sqrt{\frac{2}{m}}\W^b(0)\H_{b-1,0}^a(\x^i)v_j\right\|_2\lesssim \frac{1}{L}.\]
Following same techniques of using $1/2$-cover in the proof of Lemma~\ref{lemma:init_sparse}, we can prove \eqref{eq:D_1_G}, and then complete the proof of the lemma.
\end{proof}
Additionally, apply similar methods in Lemma ~\ref{lemma:ini_prod} by decomposing the unit vector into $s$-sparse vectors, we have
\begin{equation}\label{eq:left_D}
    \left\|\Sigma_1\sqrt{\frac{2}{m}}\W^b_0(\x^i)\H_{b-1,0}^a(\x^i)\right\|_2\lesssim \sqrt{\log m}.
\end{equation}
\begin{remark}
   To analyze the influence of the sparse matrix $\Sigma_1$ on \eqref{eq:D_1_G}, we propose a key technical improvement: instead of resorting to covering number arguments as in \citet{zou2018stochasticgradientdescentoptimizes,pmlr-v97-allen-zhu19a}, we leverage a weighted Bernstein inequality. Particularly, existing methods require taking another union bound over both all $s$-sparse subspaces and their covers (of size $\sim \binom{m}{s}9^s$), whereas our analysis only needs to union over the sparse subspaces themselves (of cardinality $\binom{m}{s}$). Our method directly demonstrates that sparsity inherently lowers computational costs by avoiding the need for dense covers. The simplicity of our technique also underscores the intrinsic benefits of sparse structures in optimization.  
\end{remark}

\subsection{Properties of Perturbation Terms}

Recall that
\begin{align*}
    \G^a_{b,0}(\x) = \sqrt{\frac{2}{m}}\Sigma^b_0(\x)\W^b(0) \cdots\sqrt{\frac{2}{m}}\Sigma^{a+1}_0(\x)\W^{a+1}(0)\sqrt{\frac{2}{m}}\Sigma^{a}_0(\x).
\end{align*}
For any $l \in[L]$, let $\widehat{\W}^l$ and the diagonal matrix $\widehat{\Sigma}^l(\x)$ be the matrices with the same size of $\W^l(0)$ and $\Sigma_0^l(\x)$, respectively. Define
\begin{align}\label{eq:hat_GL}
    \widehat{\G}^a_b(\x) &= \sqrt{\frac{2}{m}}(\Sigma^b_0(\x)+ \widehat{\Sigma}^b(\x))(\W^b(0) + \widehat{\W}^b)\cdots\sqrt{\frac{2}{m}}(\Sigma^{a+1}_0(\x)+ \widehat{\Sigma}^{a+1}(\x))\nonumber\\
    &\times(\W^{a+1}(0) + \widehat{\W}^{a+1})
     \sqrt{\frac{2}{m}}(\Sigma^{a}_0(\x)+\widehat{\Sigma}^a(\x)), \quad 1\le a\le b\le L
\end{align}
and $\widehat{\G}^l_l(\x) = \sqrt{\frac{2}{m}}(\Sigma^l_0(\x)+ \widehat{\Sigma}^l(\x))$ for all $l\in[L]$.
\begin{lemma}\label{lemma:bound_G}
     Let $\widehat{\G}^a_b(\x)$ with  $1\le a\le b\le L$ be the matrix defined in \eqref{eq:hat_GL}. 
    Assume $\max_{l\in[L]}\|\widehat{\W}^l\|_2 \le R\lesssim \sqrt{m}/(L^2\sqrt{\log m}),m\gtrsim L^2\log(NL/\delta)$ and $\widehat{\Sigma}^l(\x^i),\widehat{\Sigma}^l(\x^i)+\Sigma_0^l(\x^i)\in[-1,1]^{m\times m},\|\widehat{\Sigma}^l(\x^i)\|_0\leq s \lesssim {m}/(L^2\log m)$  for all $i\in [N],l \in[L]$.
    Then, with probability at least $1-\delta$ for all $1\le a\le b\le L, i \in [N]$, there holds
    \begin{align*} 
    \left\|\widehat{\G}^a_b(\x^i)\right\|_2\lesssim L\sqrt{\frac{\log m}{m}}.
    \end{align*}
\end{lemma}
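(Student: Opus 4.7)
The plan is to extend the sparse-decomposition and Bernstein-concentration technique used in Lemmas~\ref{lemma:init_sparse} and~\ref{lemma:ini_prod} to the perturbed product $\widehat{\G}^a_b(\x^i)$. The crucial reduction is to the following sparse estimate: for every $s$-sparse unit vector $v$ with $s\asymp m/(L^2\log m)$ one has $\|\widehat{\G}^a_b(\x^i) v\|_2 \lesssim \|v\|_2/\sqrt m$. Granted this, decomposing an arbitrary unit vector into $q\lesssim L^2\log m$ $s$-sparse pieces and invoking Cauchy--Schwarz together with the $1/2$-net argument from the proof of Lemma~\ref{lemma:ini_prod} immediately yields $\|\widehat{\G}^a_b(\x^i)\|_2 \lesssim \sqrt{q/m} \lesssim L\sqrt{\log m/m}$, as claimed.

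To prove the sparse estimate I track the iterates $v^l := \widehat{\G}^a_l(\x^i) v$ and show by induction that $\|v^l\|_2 \le (1 + O(1/L))\|v^{l-1}\|_2$ with high probability. The base case $\|v^a\|_2 \le 2\sqrt{2/m}\|v\|_2$ follows from $\|\Sigma^a_0 + \widehat{\Sigma}^a\|_2 \le 2$. For the inductive step I split
\[
v^l = \sqrt{\tfrac{2}{m}}\Sigma^l_0\W^l(0)v^{l-1} + \sqrt{\tfrac{2}{m}}(\Sigma^l_0+\widehat{\Sigma}^l)\widehat{\W}^l v^{l-1} + \sqrt{\tfrac{2}{m}}\widehat{\Sigma}^l\W^l(0)v^{l-1}
\]
and bound the three terms separately. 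The baseline term has squared norm concentrating around $\|v^{l-1}\|_2^2$ with deviation $O(\|v^{l-1}\|_2^2/L)$ via Lemma~\ref{lemma:exp_indicator} together with Lemma~\ref{lemma:bern}, exactly as in the proof of Lemma~\ref{lemma:init_sparse}. The $\widehat{\W}^l$ term is bounded deterministically by $2\sqrt{2/m}\,R\,\|v^{l-1}\|_2 \lesssim \|v^{l-1}\|_2/(L^2\sqrt{\log m})$ using $R\lesssim \sqrt m/(L^2\sqrt{\log m})$. For the $\widehat{\Sigma}^l$ term, the sparsity and entry bound give
\[
\bigl\|\sqrt{\tfrac{2}{m}}\widehat{\Sigma}^l\W^l(0) v^{l-1}\bigr\|_2^2 \le \tfrac{2}{m}\!\!\sum_{r\in\mathrm{supp}(\widehat{\Sigma}^l)}\!\!\langle\w^l_r(0),v^{l-1}\rangle^2 \le \tfrac{2s}{m}\max_{r\in[m]}\langle\w^l_r(0),v^{l-1}\rangle^2,
\]
and a Gaussian maximum inequality combined with $s/m\lesssim 1/(L^2\log m)$ yields $\lesssim \|v^{l-1}\|_2^2/L^2$. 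Summing gives $\|v^l\|_2 \le (1+O(1/L))\|v^{l-1}\|_2$; iterating over at most $L$ layers gives $\|v^b\|_2 \le e^{O(1)}\|v^a\|_2 \lesssim \|v\|_2/\sqrt m$.

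The main obstacle is making these bounds hold uniformly over all perturbations $(\widehat{\W}^l,\widehat{\Sigma}^l)$ satisfying the assumptions and over all $\x^i,(a,b),l$. A key observation is that no explicit covering of $\widehat{\W}^l$ or of the entries of $\widehat{\Sigma}^l$ is needed: the $\widehat{\W}^l$-term bound only uses $R = \|\widehat{\W}^l\|_2$, and the $\widehat{\Sigma}^l$-term bound only uses its sparsity $s$ and the entrywise magnitude bound $1$ via a uniform Gaussian maximum over $r\in[m]$ (which implicitly handles the support). Combined with the $\binom{m}{s}5^s\le(5em)^s$ $1/2$-net of $s$-sparse unit vectors from Lemma~\ref{lemma:init_sparse} and union bounds over $\x^i$, $(a,b)$, the $L$ layers of the recursion, and the $m$ coordinates per layer, the total failure probability is bounded by $NL^4 m(5em)^s\exp(-cm/L^2)$, which is at most $\delta$ under $s\lesssim m/(L^2\log m)$ (with a sufficiently small absolute constant) and $m\gtrsim L^2\log(NL/\delta)$, completing the proof.
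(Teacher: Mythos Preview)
There is a genuine gap in the uniformity step. Your recursion tracks $v^{l-1}=\widehat{\G}^a_{l-1}(\x^i)v$, which depends on all \emph{previous-layer} perturbations $(\widehat{\W}^j,\widehat{\Sigma}^j)_{a\le j<l}$. Both the Bernstein concentration for the baseline term $\sqrt{2/m}\,\Sigma_0^l\W^l(0)v^{l-1}$ and the Gaussian-maximum bound for the $\widehat{\Sigma}^l$ term require $v^{l-1}$ to be independent of $\W^l(0)$, or at least to range over a set of size $\exp(o(m/L^2))$. But the lemma must hold \emph{uniformly} over all admissible perturbations (in the application, $\widehat{\Sigma}^j=\Sigma^j-\Sigma_0^j+(\Sigma'')^j$ and $\widehat{\W}^j=\W^j-\W^j(0)$ depend adversarially on $\W(0)$). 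Your observation that ``no explicit covering of $\widehat{\W}^l$ or of $\widehat{\Sigma}^l$ is needed'' is correct only for the layer-$l$ perturbation terms; it overlooks that the layer-$l$ \emph{baseline} concentration is applied to a vector $v^{l-1}$ that already encodes all earlier perturbations. A union bound over your $1/2$-net of $s$-sparse starting vectors $v_j$ does not cover the continuum of possible $v^{l-1}$'s that arise from varying $(\widehat{\W}^j,\widehat{\Sigma}^j)_{j<l}$. Without an additional (prohibitively large) covering of those perturbations, the only uniform bound on the baseline term is $\|\sqrt{2/m}\,\Sigma_0^l\W^l(0)v^{l-1}\|_2\le\sqrt{2/m}\,\|\W^l(0)\|_2\|v^{l-1}\|_2\lesssim c_0\|v^{l-1}\|_2$ with $c_0>1$, and iterating yields $c_0^L$, exactly the exponential blowup you are trying to avoid.

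The paper sidesteps this by \emph{expanding} $\widehat{\G}^a_b$ rather than iterating: first write each $\Sigma_0^l+\widehat{\Sigma}^l$ as a sum and expand into $2^{b-a}$ terms, each of which is a product of initialization-only blocks $\H^{\cdot}_{\cdot,0}(\x^i)$ sandwiched between sparse $0$--$1$ masks $\widehat{\Sigma}_1^l$ (the support indicator of $\widehat{\Sigma}^l$) and the bounded factors $\widehat{\Sigma}^l$. The initialization bounds (Lemmas~\ref{lemma:init_sparse}, \ref{lemma:ini_prod}, \ref{lemma:two_D} and \eqref{eq:left_D}) already hold uniformly over \emph{all} $s$-sparse diagonal $0$--$1$ matrices, so no covering of the perturbations is required. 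Crucially, a term with $j$ occurrences of $\widehat{\Sigma}$ carries a factor $(C/L)^{j-1}$ from Lemma~\ref{lemma:two_D}, so the $2^{b-a}$ terms sum to $L\sqrt{\log m}$; a second expansion in $\widehat{\W}$ versus $\W(0)$, using $\|\widehat{\W}^l\|_2\le R\lesssim\sqrt{m}/(L^2\sqrt{\log m})$, then gives the claim.
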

\begin{proof}The proof is similar to that of Lemma 8.6 in \citet{pmlr-v97-allen-zhu19a}, the differences lie in the dependence of $L$.
    We first prove that for any $1\le a\le b\le L$,
    \begin{equation}\label{eq:hat_D_W_0}
        \left\|\sqrt{\frac{2}{m}}(\Sigma^b_0(\x^i)+ \widehat{\Sigma}^b(\x^i))\W^b(0) \cdots\sqrt{\frac{2}{m}}(\Sigma^{a+1}_0(\x^i)+ \widehat{\Sigma}^{a+1}(\x^i))\W^{a+1}(0)\right\|_2\lesssim L\sqrt{\log m}.
    \end{equation}
    We define a diagonal matrix $(\widehat{\Sigma}^l_1(\x^i))_{k,k} = \I\{\widehat{\Sigma}^{l}(\x^i)_{k,k}\neq0\}$, and $\|\widehat{\Sigma}^l_1(\x^i)\|_0\le s$. Therefore, $\widehat{\Sigma}^{l}(\x^i) = \widehat{\Sigma}^l_1(\x^i)\widehat{\Sigma}^{l}(\x^i)\widehat{\Sigma}^l_1(\x^i)$. We decompose the left term of \eqref{eq:hat_D_W_0} into $2^{b-a}$ terms and control them respectively. Each matrix can be written as (ignoring the superscripts and $\x^i$).
    \begin{align*}
         \left(\Sigma_0\sqrt{\frac{2}{m}}\W(0)\cdots\widehat{\Sigma}_1\right)\widehat{\Sigma}\left(\widehat{\Sigma}_1\sqrt{\frac{2}{m}}\W(0)\cdots\sqrt{\frac{2}{m}}\W(0)\widehat{\Sigma}_1\right)\widehat{\Sigma}\cdots\widehat{\Sigma}\\\times\left(\widehat{\Sigma}_1\sqrt{\frac{2}{m}}\W(0)\cdots\Sigma_0\sqrt{\frac{2}{m}}\W(0)\right).
    \end{align*}
    
    Then, with probability at least $1-\delta$, there holds:
    \begin{itemize}[leftmargin=*]
        \item By Lemma~\ref{lemma:init_sparse},$\left\|\Sigma_0\sqrt{\frac{2}{m}}\W(0)\cdots\widehat{\Sigma}_1\right\|_2\lesssim 1$.
        \item By Lemma~\ref{lemma:two_D}, $\left\|\widehat{\Sigma}_1\sqrt{\frac{2}{m}}\W(0)\cdots\sqrt{\frac{2}{m}}\W(0)\widehat{\Sigma}_1\right\|_2\lesssim {1}/{L}$.
        \item By \eqref{eq:left_D}, $\left\|\widehat{\Sigma}_1\sqrt{\frac{2}{m}}\W(0)\cdots\Sigma_0\sqrt{\frac{2}{m}}\W(0)\right\|_2\lesssim \sqrt{\log m}$.
        \item When there is no $\widehat{\Sigma}$, by Lemma~\ref{lemma:ini_prod}, $\left\|\Sigma_0\sqrt{\frac{2}{m}}\W(0)\cdots\Sigma_0\sqrt{\frac{2}{m}}\W(0)\right\|_2\lesssim L\sqrt{\log m}$.
    \end{itemize}
    Combined with these results, counting the number of $\widehat{\Sigma}$, we obtain
    \begin{align*}
        & \left\|\sqrt{\frac{2}{m}}(\Sigma^b_0(\x^i)+ \widehat{\Sigma}^b(\x^i))\W^b(0) \cdots\sqrt{\frac{2}{m}}(\Sigma^{a+1}_0(\x^i)+ \widehat{\Sigma}^{a+1}(\x^i))\W^{a+1}(0)\right\|_2\\
        \lesssim& L\sqrt{\log m}+\sum_{j=1}^{b-a}\binom{b-a}{j}\left(\frac{1}{L}\right)^{j-1}1^j\sqrt{\log m}\\
        \le& L\sqrt{\log m}\left(1+\sum_{j=1}^{L}\left(\frac{eL}{j}\right)^j\left(\frac{1}{L}\right)^{j}\right)\lesssim L\sqrt{\log m},
    \end{align*}
    where in the second inequality we have used $\binom{b-a}{j}\le (e(b-a)/j)^j\le (eL/j)^j$, the last inequality is due to $\sum_{j=1}^{L}(e/j)^j$ converges and it is bounded by a constant. Now we have proved \eqref{eq:hat_D_W_0}.\\ Denote $\Sigma'= \Sigma_0+\widehat{\Sigma}$, through similar expansion, $\Sigma'\sqrt{\frac{2}{m}}(\W(0) + \widehat{\W})\cdots(\Sigma')\sqrt{\frac{2}{m}}(\W(0) + \widehat{\W})$ is the sum of following terms
    \begin{align*}
        \left(\Sigma'\sqrt{\frac{2}{m}}\W(0)\cdots\Sigma'\right)\sqrt{\frac{2}{m}}\widehat{\W}  \left(\Sigma'\sqrt{\frac{2}{m}}\W(0)\cdots\Sigma'\right)\sqrt{\frac{2}{m}}\widehat{\W}\cdots\sqrt{\frac{2}{m}}\widehat{\W}\\\times  \left(\Sigma'\sqrt{\frac{2}{m}}\W(0)\cdots\Sigma'\sqrt{\frac{2}{m}}\W(0)\right).
    \end{align*}
    Since $\|\Sigma'\|_2\lesssim 1$, using Eq.~\eqref{eq:hat_D_W_0}, we have
    \begin{align*}
        &\left\|\Sigma'\sqrt{\frac{2}{m}}\W(0)\cdots\Sigma'\right\|_2\lesssim L\sqrt{\log m},\\
         & \left\|\Sigma'\sqrt{\frac{2}{m}}\W(0)\cdots\Sigma'\sqrt{\frac{2}{m}}\W(0)\right\|_2\lesssim L\sqrt{\log m}.
    \end{align*}
    Note that $\max_{l\in[L]}\|\widehat{\W}^l\|_2 \le R\lesssim\sqrt{m}/(L^2\sqrt{\log m})$, then by counting the number of $\widehat{\W}$, we have
    \begin{align*}
         \left\|\widehat{\G}^a_b(\x_i)\right\|_2&\lesssim \sqrt{\frac{1}{m}}\left(L\sqrt{\log m}+\sum_{j=1}^{b-a}\binom{b-a}{j}\left(\frac{1}{L^2}\sqrt{\frac{1}{\log m}}\right)^{j}(L\sqrt{\log m})^{j+1}\right)\\
         &=L\sqrt{\frac{\log m}{m}} \left(1+\sum_{j=1}^{b-a}\binom{b-a}{j}\left(\frac{1}{L}\right)^{j}\right)\lesssim L\sqrt{\frac{\log m}{m}}.
    \end{align*}
    The proof is completed.
\end{proof}
Denote $\widetilde{\Sigma}(\x),\tilde{h}^l(\x),\widetilde{\G}^a_b(\x)
$ as \eqref{eq:Sigma_l}, \eqref{eq:o},\eqref{eq:G} when $\W = \widetilde{\W}$.
\begin{lemma}[Claim 11.2 and Proposition 11.3 in \citet{pmlr-v97-allen-zhu19a}]\label{lem:o-o'}
   For any $\W ,\widetilde{\W}\in \mathcal{B}_{R}(\W(0))$.  There exists a series of diagonal matrices $\{(\Sigma'')^l\in \R^{m\times m}\}_{l\in[L]}$ with entries in $[-1,1]$ such that for any $l\in[L]$, there holds
   \begin{enumerate}[label=(\alph*), leftmargin=*]
       \item $ h^l(\x) - \tilde{h}^l(\x) = \sum_{k=1}^l \left[\prod_{j=k+1}^l \sqrt{\frac{2}{m}}( \widetilde{\Sigma}^j(\x) + (\Sigma'')^j )\widetilde{\W}^j \right] \sqrt{\frac{2}{m}}( \widetilde{\Sigma}^k(\x) + (\Sigma'')^k )( \W^k - \widetilde{\W}^k) h^{k-1}(\x). $
       \item $\|(\Sigma'')^l\|_0\le \|\Sigma^l(\x) - \widetilde{\Sigma}^l(\x)\|_0.$
   \end{enumerate} 
\end{lemma}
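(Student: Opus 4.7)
The plan is to first reduce the statement to a scalar identity for ReLU, then promote it coordinate-wise to a diagonal-matrix identity, and finally unroll a one-step recursion by induction on $l$.

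The starting observation is that for any two real numbers $a,b$, one can write
\[
\sigma(a)-\sigma(b) = d(a,b)\,(a-b),
\]
where $d(a,b)\in[0,1]$ is defined by a small case analysis: $d=\I\{b\ge 0\}$ whenever $\mathrm{sgn}(a)=\mathrm{sgn}(b)$; and when the signs disagree, $d\in\{a/(a-b),\, -b/(a-b)\}\subset[0,1]$. The crucial bookkeeping property is that $d(a,b)-\I\{b\ge 0\}\in[-1,1]$, and that this quantity is \emph{zero} whenever $\I\{a\ge 0\}=\I\{b\ge 0\}$. In other words, the perturbation $d(a,b)-\I\{b\ge 0\}$ is supported only on coordinates where the activation pattern with respect to $\W$ and with respect to $\widetilde\W$ actually differ.

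Applying this coordinate-wise at layer $l$, with $a_r=\langle \w^l_r,h^{l-1}(\x)\rangle$ and $b_r=\langle \tilde{\w}^l_r,\tilde h^{l-1}(\x)\rangle$, I would define the diagonal matrix $(\Sigma'')^l$ by $(\Sigma'')^l_{rr}=d(a_r,b_r)-\I\{b_r\ge 0\}$. Then $(\Sigma'')^l$ has entries in $[-1,1]$, and by the support property above,
\[
\|(\Sigma'')^l\|_0 \le \bigl\|\Sigma^l(\x)-\widetilde\Sigma^l(\x)\bigr\|_0,
\]
which is part (b). Moreover, by construction,
\[
h^l(\x)-\tilde h^l(\x)
= \sqrt{\tfrac{2}{m}}\bigl(\widetilde\Sigma^l(\x)+(\Sigma'')^l\bigr)\bigl(\W^l h^{l-1}(\x)-\widetilde\W^l \tilde h^{l-1}(\x)\bigr).
\]

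For part (a), I would now add and subtract $\widetilde\W^l h^{l-1}(\x)$ inside the parenthesis to split the difference of pre-activations as
\[
\W^l h^{l-1}(\x)-\widetilde\W^l \tilde h^{l-1}(\x)
=(\W^l-\widetilde\W^l)h^{l-1}(\x)+\widetilde\W^l\bigl(h^{l-1}(\x)-\tilde h^{l-1}(\x)\bigr),
\]
yielding a one-step recursion of the form
\[
h^l(\x)-\tilde h^l(\x) = A_l\,(\W^l-\widetilde\W^l)h^{l-1}(\x)+A_l\widetilde\W^l\bigl(h^{l-1}(\x)-\tilde h^{l-1}(\x)\bigr),
\]
with $A_l=\sqrt{2/m}(\widetilde\Sigma^l(\x)+(\Sigma'')^l)$. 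Iterating this recursion from $l$ down to $k=1$, with the base case $h^0(\x)-\tilde h^0(\x)=0$, produces exactly the telescoping sum claimed in (a); the inner product $\prod_{j=k+1}^l A_j\widetilde\W^j$ gathers the "backward propagation" factors, while the last $A_k(\W^k-\widetilde\W^k)h^{k-1}(\x)$ factor is the "forward" contribution of the $k$-th layer perturbation.

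The only slightly delicate point is checking that $(\Sigma'')^l$ can be defined consistently across layers so that both (a) and (b) hold simultaneously: the diagonal matrix used in the recursion must be the same one that satisfies the sparsity estimate. This is immediate here because $(\Sigma'')^l$ is defined purely from the scalar pairs $(a_r,b_r)$ at that layer and does not need to be revised when unrolling to deeper layers. The remaining steps are straightforward algebraic verification, so I anticipate no serious obstacle beyond writing out the case analysis for $d(a,b)$ cleanly.
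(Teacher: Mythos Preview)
Your proposal is correct. The paper does not give its own proof of this lemma but simply cites it from \citet{pmlr-v97-allen-zhu19a}; your argument---the scalar ReLU identity $\sigma(a)-\sigma(b)=d(a,b)(a-b)$ with $d\in[0,1]$ supported on sign-change coordinates, promoted to a diagonal matrix $(\Sigma'')^l$ and then unrolled by the one-step recursion---is precisely the standard proof from that reference.
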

The above lemma shows that the difference of ReLU networks can be expressed explicitly as the operations of matrices. The main idea is to show that $\sigma(a)-\sigma(b)=(\I[a\ge 0]-\xi)(a-b)$ for $\xi\in[-1,1]$.
Now we introduce the following Bernstein inequality under bounded distributions.
\begin{lemma}[Theorem 2.8.4 in \citet{vershynin2018high}]\label{lemma:boundbern}
Let $X_1,\cdots,X_N$ be independent, mean-zero random variables, such that $|X_i|\le K$ for all $i$. Then for every $t\ge 0$, we have
\[\P\left(\left|\sum_{i=1}^NX_i\right|\ge t \right)\le 2\exp\left(-\frac{t^2/2}{\lambda^2+Kt/3}\right),\]
    where $\lambda^2 = \sum_{i=1}^N\E X_i^2$ is the sum of the variance.
\end{lemma}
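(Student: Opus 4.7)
The plan is to follow the classical Chernoff--MGF route, since Lemma~\ref{lemma:boundbern} is a version of Bernstein's inequality for bounded independent random variables. The key is to control each moment generating function using the boundedness $|X_i|\le K$ and the variance $\E X_i^2$, then multiply and optimize over the Chernoff parameter.

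First I would reduce the two-sided bound to a one-sided bound: since $-X_1,\ldots,-X_N$ satisfy the same hypotheses as $X_1,\ldots,X_N$, it suffices to prove $\P\!\left(\sum_i X_i \ge t\right) \le \exp\!\left(-\tfrac{t^2/2}{\lambda^2 + Kt/3}\right)$ and then union-bound with the symmetric tail. For a single $X_i$, I would expand the MGF via the Taylor series and use the moment estimate $|\E X_i^k| \le K^{k-2}\E X_i^2$ for $k\ge 2$ (a direct consequence of $|X_i|\le K$). This yields, for $0<\theta<3/K$,
\begin{equation*}
    \E\bigl[e^{\theta X_i}\bigr] \;\le\; 1 + \E X_i^2 \sum_{k\ge 2} \frac{\theta^k K^{k-2}}{k!} \;\le\; 1 + \frac{\theta^2 \E X_i^2/2}{1-\theta K/3} \;\le\; \exp\!\left(\frac{\theta^2 \E X_i^2/2}{1-\theta K/3}\right),
\end{equation*}
where I would justify the middle step using the elementary inequality $k!\ge 2\cdot 3^{k-2}$ for $k\ge 2$, which upgrades the remainder of the exponential series into a geometric series, and the last step uses $1+u \le e^u$.

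Next I would take the product across independent $X_i$: by independence, $\E[\exp(\theta \sum_i X_i)]\le \exp\!\left(\tfrac{\theta^2 \lambda^2/2}{1-\theta K/3}\right)$. Combining with Markov's inequality gives
\begin{equation*}
    \P\!\left(\sum_{i=1}^N X_i \ge t\right) \;\le\; \exp\!\left(-\theta t + \frac{\theta^2 \lambda^2/2}{1-\theta K/3}\right), \qquad 0<\theta<3/K.
\end{equation*}
The final step is the optimization: setting $\theta = t/(\lambda^2 + Kt/3)$, which indeed lies in $(0,3/K)$, simplifies the exponent to $-\tfrac{t^2/2}{\lambda^2+Kt/3}$ after elementary algebra. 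Doubling for the two-sided tail yields the stated bound.

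The main obstacle, and essentially the only nontrivial calculation, is the MGF estimate—specifically obtaining the rational form $\theta^2\lambda^2/(2(1-\theta K/3))$ with the correct constant $1/3$ in the denominator. This is where the sharp factorial lower bound $k!\ge 2\cdot 3^{k-2}$ is needed; any weaker estimate would spoil the constants and force the sub-optimal choice of $\theta$. Everything else is bookkeeping: the symmetrization for the two-sided tail and the algebra of the Chernoff optimization.
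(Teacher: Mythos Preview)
Your proof is correct and follows the standard Chernoff--MGF route for Bernstein's inequality; the MGF bound via $k!\ge 2\cdot 3^{k-2}$ and the choice $\theta=t/(\lambda^2+Kt/3)$ both check out. Note, however, that the paper does not give its own proof of this lemma: it is simply quoted as Theorem~2.8.4 in \citet{vershynin2018high}, so there is nothing to compare against beyond observing that your argument is essentially the textbook derivation found there.
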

The following lemma shows that under overparameterized setting, the outputs and activation patterns for deep relu networks near initialization do not change much.
\begin{lemma}\label{lemma:o-h_0}
 Suppose $m\gtrsim L^{10}\log(NL/\delta)(\log m)^4R^2$. Then with probability at least $1-\delta$, for any $\W\in \mathcal{B}_{R}(\W(0)),i\in[N]$ and $ l\in[L]$, there holds
    \begin{align}\label{eqn:D-D_0}
        \|h^l(\x^i) - h^l_0(\x^i)\|_2 \lesssim  L^2\sqrt{\frac{\log m}{m}}R\ \text{and} \ \|\Sigma^{l}(\x^i)-\Sigma^{l}_0(\x^i)\|_0\lesssim L^{4/3}(\log m)^{1/3}(mR)^{2/3}.
    \end{align}
\end{lemma}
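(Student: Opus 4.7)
The plan is to prove the lemma by induction on the layer index $l$. The base case $l=0$ is trivial since $h^0(\x)=\x=h^0_0(\x)$ and the sign-flip count is zero. For the inductive step I fix $l\ge 1$, assume both bounds in \eqref{eqn:D-D_0} hold with high probability at all earlier layers $l'<l$ uniformly over $\W\in\B_R(\W(0))$, and then establish the sign-flip bound at layer $l$ first, using it to derive the activation-deviation bound.

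For the sign-flip count I use a standard ``anti-concentration plus variance'' decomposition. Define the pre-activation perturbation $\Delta_r:=\langle\w^l_r,h^{l-1}(\x^i)\rangle-\langle\w^l_r(0),h^{l-1}_0(\x^i)\rangle$; a sign change at neuron $r$ requires $|\langle\w^l_r(0),h^{l-1}_0(\x^i)\rangle|\le|\Delta_r|$, so for any threshold $\beta>0$,
\begin{equation*}
\|\Sigma^l(\x^i)-\Sigma^l_0(\x^i)\|_0 \le \#\{r:|\langle\w^l_r(0),h^{l-1}_0(\x^i)\rangle|\le\beta\} + \#\{r:|\Delta_r|>\beta\}.
\end{equation*}
Conditioning on the first $l-1$ layers of $\W(0)$, $\langle\w^l_r(0),h^{l-1}_0(\x^i)\rangle\sim\mathcal{N}(0,\|h^{l-1}_0(\x^i)\|_2^2)$ with $\|h^{l-1}_0(\x^i)\|_2\asymp 1$ by Lemma~\ref{lemma:ini_out}, so Gaussian anti-concentration gives $\P(|\langle\w^l_r(0),h^{l-1}_0(\x^i)\rangle|\le\beta)\lesssim\beta$, and a Chernoff bound over $r$ yields the first term $\lesssim m\beta$ with high probability. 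For the second term, splitting $\Delta_r=\langle\w^l_r-\w^l_r(0),h^{l-1}_0(\x^i)\rangle+\langle\w^l_r,h^{l-1}(\x^i)-h^{l-1}_0(\x^i)\rangle$ and summing squares gives
\begin{equation*}
\sum_r\Delta_r^2 \le 2\|\W^l-\W^l(0)\|_F^2\|h^{l-1}_0(\x^i)\|_2^2 + 2\|\W^l\|_2^2\|h^{l-1}(\x^i)-h^{l-1}_0(\x^i)\|_2^2 \lesssim L^4R^2\log m,
\end{equation*}
using $\|\W^l\|_2\lesssim\sqrt{m}$ from Lemma~\ref{lemma:oprt-norm} and the inductive bound $\|h^{l-1}(\x^i)-h^{l-1}_0(\x^i)\|_2\lesssim L^2R\sqrt{\log m/m}$. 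Hence $\#\{r:|\Delta_r|>\beta\}\le\sum_r\Delta_r^2/\beta^2\lesssim L^4R^2\log m/\beta^2$, and optimizing via $\beta=(L^4R^2\log m/m)^{1/3}$ balances both terms at $L^{4/3}(\log m)^{1/3}(mR)^{2/3}$, giving the claimed sign-flip bound.

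For the activation deviation, I apply Lemma~\ref{lem:o-o'} with $\widetilde{\W}=\W(0)$ to write
\begin{equation*}
h^l(\x^i)-h^l_0(\x^i)=\sum_{k=1}^l\widehat{\G}_l^{k}(\x^i)(\W^k-\W^k(0))h^{k-1}(\x^i),
\end{equation*}
where each $\widehat{\G}_l^{k}(\x^i)$ is an instance of \eqref{eq:hat_GL} with weight perturbation $\widehat{\W}^j=0$ and sign perturbation $\widehat{\Sigma}^j(\x^i)=(\Sigma'')^j$ satisfying $\|(\Sigma'')^j\|_0\le\|\Sigma^j(\x^i)-\Sigma^j_0(\x^i)\|_0$. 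The sign-flip bound just proved, combined with the hypothesis $m\gtrsim L^{10}(\log m)^4R^2$, ensures $\|(\Sigma'')^j\|_0\lesssim m/(L^2\log m)$ for all $j\le l$; so Lemma~\ref{lemma:bound_G} yields $\|\widehat{\G}_l^{k}(\x^i)\|_2\lesssim L\sqrt{\log m/m}$. Combined with $\|\W^k-\W^k(0)\|_2\le R$ and $\|h^{k-1}(\x^i)\|_2\lesssim 1$ (from Lemma~\ref{lemma:ini_out} together with the inductive activation bound, which is $\ll 1$ under the width assumption), summing over $k=1,\dots,l$ gives $\|h^l(\x^i)-h^l_0(\x^i)\|_2\lesssim L\cdot L\sqrt{\log m/m}\cdot R=L^2R\sqrt{\log m/m}$. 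A union bound over $i\in[N]$, $l\in[L]$, and the high-probability events of Lemmas~\ref{lemma:oprt-norm}, \ref{lemma:ini_out}, and \ref{lemma:bound_G} closes the induction.

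The hard part is the sign-flip argument: because $\W\in\B_R(\W(0))$ is arbitrary and possibly data-dependent, the variance bound on $\sum_r\Delta_r^2$ must be made uniform in $\W$, which is why it is expressed purely in terms of the deterministic quantities $R$, the initialization norm $\|\W^l(0)\|_2$, and the inductive deviation $\|h^{l-1}-h^{l-1}_0\|_2$ rather than any quantity specific to the chosen perturbation. A secondary subtlety is matching exponents: the $L^{4/3}(\log m)^{1/3}(mR)^{2/3}$ scaling of the sign-flip bound meets the $m/(L^2\log m)$ sparsity condition required by Lemma~\ref{lemma:bound_G} precisely when $m\gtrsim L^{10}(\log m)^4R^2$, which is exactly the width requirement stated in the lemma.
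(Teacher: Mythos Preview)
Your proof is correct and follows essentially the same approach as the paper: induction on $l$, first establishing the sign-flip bound via the anti-concentration/variance split with threshold $\beta=(L^4R^2\log m/m)^{1/3}$, then invoking Lemma~\ref{lem:o-o'} and Lemma~\ref{lemma:bound_G} to control $\|h^l-h^l_0\|_2$. The only cosmetic difference is that you apply Lemma~\ref{lem:o-o'} with $\widetilde{\W}=\W(0)$ (so the product uses $\W^j(0)$ and the trailing factor is $h^{k-1}$), whereas the paper swaps the roles (product uses $\W^j$, trailing factor $h_0^{k-1}$); both variants fit the hypotheses of Lemma~\ref{lemma:bound_G} and give the same bound.
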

\begin{proof}
    We prove these two inequalities by induction. Note that \eqref{eqn:D-D_0} holds for $l=0$. Now we suppose \eqref{eqn:D-D_0} holds for $l-1$. 
      Let $\kappa>0$ be a constant.
    For $i \in [N]$ and $l\in[L]$, we define $A^l(\x^i) = \{r\in [m]:\I\{\langle\w_r^l, h^{l-1}(\x^i)\rangle \ge 0\} \neq \I\{\langle \w_r^l(0), h_0^{l-1}(\x^i)\rangle \ge 0 \}\}$, then $\|\Sigma^{l}(\x^i)-\Sigma^{l}_0(\x^i)\|_0 = |A^l(\x^i)|$. Furthermore, we decompose $A^l(\x^i)$ into two parts based on the behavior of $\w^l_r(0)$:
    \[A^l_1(\x^i) = \{r\in A^l(\x^i):|\langle \w_r^l(0), h^{l-1}_0(\x^i)  \rangle| \le \kappa\} \quad\text{and}\quad A_2^l(\x^i) = \{r\in A^l(\x^i):|\langle \w_r^l(0), h^{l-1}_0(\x^i)  \rangle| > \kappa\}. \]
    We will control $|A^l_1(\x^i)|$ and $|A^l_2(\x^i)|$ respectively.
    
    For $r\in[m]$, we define $F_{r,i}^l = \I\{|\langle \w_r^l(0), h^{l-1}_0(\x^i)  \rangle| \le \kappa\}$. From Lemma~\ref{lemma:ini_out} we know that $2/3\leq\|h^{l-1}_0(\x^i)\|_2^2\leq  4/3$. Condition on $h^{l-1}_0(\x^i)$, $\langle \w_r^l(0), h^{l-1}_0(\x^i)  \rangle \sim \N(0,\|h^{l-1}_0(\x^i)\|_2^2)$, then we have 
    \begin{align*}
        \text{var}(F_{r,i}^l)\leq \E(F_{r,i}^l)^2=\E(F_{r,i}^l)=\P(-\kappa\le\langle \W_r^l(0), h^{l-1}_0(\x^i)  \rangle\le \kappa)\\\le \frac{3}{2\sqrt{\pi}}\int_{-\kappa}^{\kappa}e^{-3x^2/8}\mathrm{d}x\le C\kappa.
    \end{align*}
  
  Then by Lemma~\ref{lemma:boundbern}, choose $K=1,t =mC\kappa,\lambda^2 \le mC\kappa$, it then follows that
\begin{align*}
    \P\left(\left|\sum_{r=1}^mF_{r,i}^l-m\E(F_{r,i}^l)\right|\le mC\kappa|h^{l-1}_0(\x^i)\right)\ge 1-2\exp\left(-\frac{(mC\kappa)^2/2}{mC\kappa+mC\kappa/3}\right).
\end{align*}
Hence, taking union bounds over $l,i$, with probability at least $1-CnL\exp(-m\kappa)$, there holds for all $i,l$,
\begin{align}\label{eq:A_1}
    |A_1^l(\x^i)|\le \sum_{r=1}^mF_{r,i}^l\lesssim m\kappa.
\end{align}
For $r\in A_{2}^l(\x_i)$, since $\I\{\langle\w_r^l, h^{l-1}(\x^i)\rangle \ge 0\} \neq \I\{\langle \w_r^l(0), h_0^{l-1}(\x^i)\rangle \ge 0 \}$, we have
\[(\langle \w_r^l, h^{l-1}(\x^i) \rangle -  \langle \w_r^l(0), h_0^{l-1}(\x^i) \rangle)^2 \ge | \langle \w_r^l(0), h_0^{l-1}(\x^i) \rangle|^2 > \kappa^2.\]
We deduce that
\begin{align}\label{eq:wh-wh}
    \| \W^l h^{l-1}(\x^i) -\W^l(0)h^{l-1}_0(\x^i) \|_2^2\ge& \sum_{r\in A_2^l(\x_i)}(\langle \w_r^l, h^{l-1}(\x^i) \rangle -  \langle \w_r^l(0), h_0^{l-1}(\x^i) \rangle)^2\nonumber\\
    >&\sum_{r\in A_2^l(\x^i)}\kappa^2=\kappa^2|A_{2}^l(\x^i)|.
\end{align}
By assumption $\|h^{l-1}(\x^i) - h^{l-1}_0(\x^i)\|_2\lesssim L^2R\sqrt{\log m/m}$ and Lemma~\ref{lemma:ini_out}, we get
\begin{align*}
 &  \| \W^l h^{l-1}(\x^i) -\W^l(0)h^{l-1}_0(\x^i) \|_2^2\nonumber\\
    \le& (\|  \W^l- \W^l(0)\|_2 \|h^{l-1}(\x^i)- h^{l-1}_0(\x^i) +   h^{l-1}_0(\x^i)\|_2  + \|\W^l(0)\|_2 \|h^{l-1}(\x^i) - h^{l-1}_0(\x^i) \|_2)^2\nonumber\\
    \lesssim&(R (\|h^{l-1}(\x^i) - h^{l-1}_0(\x^i)\|_2 + 1 ) + \sqrt{m}  \|h^{l-1}(\x^i) - h^{l-1}_0(\x^i) \|_2)^2\lesssim L^4 R^2\log m.
\end{align*}
Combined with \eqref{eq:wh-wh}, we have
\begin{align}\label{eq:A_2}
    |A_{2}^l(\x^i)|\lesssim\frac{L^4 R^2\log m}{\kappa^2}.
\end{align}
  
From \eqref{eq:A_1} and \eqref{eq:A_2} we know that
  \begin{align*}
  \|\Sigma^l(\x^i)-\Sigma^l_0(\x^i)\|_0 =&|A^l(\x^i)|=|A_{1}^l(\x^i)|+|A_{2}^l(\x^i)| \\
  \lesssim&m\kappa+\frac{L^4R^2\log m}{(\kappa)^2}
  \lesssim L^{4/3}(\log m)^{1/3}(mR)^{2/3},
  \end{align*}
  where in the last inequality we choose $\kappa = L^{4/3}(\log m)^{1/3}R^{2/3}m^{-1/3}$.
  Hence, due to the overparameterization of $m$, we have with probability at least $1-\delta$, for $i \in [N]$,
  \[\|\Sigma^l(\x^i)-\Sigma^l_0(\x^i)\|_0\lesssim L^{4/3}(\log m)^{1/3}(mR)^{2/3}\lesssim \frac{m}{L^2\log m}.
  \]
Applying Lemma~\ref{lem:o-o'}, we have
\begin{equation}\label{eq:hl-h_0l}
    h^l(\x^i) - h^l_0(\x^i) = \sum_{k=1}^l\widehat{\G}_{l,0}^k(\x^i)(\W^k-\W^k(0))h_0^{k-1}(\x^i),
\end{equation}
where $\widehat{\G}_{l,0}^k(\x^i)$ is defined as
\begin{equation}\label{eq:hat_G_l_0}
    \widehat{\G}_{l,0}^k(\x^i)=\left[\prod_{j=k+1}^l \sqrt{\frac{2}{m}}(\Sigma^j(\x^i) + (\Sigma'')^j )\W^j \right] \sqrt{\frac{2}{m}}(\Sigma^k(\x^i) + (\Sigma'')^k ).
\end{equation}

It then follows that
 \begin{align*}
     &\|\Sigma^j(\x^i) + (\Sigma'')^j -\Sigma_0^j(\x^i)\|_0 \\\le &\|\Sigma^j(\x^i)-\Sigma_0^j(\x^i)\|_0+\|(\Sigma'')^j\|_0\le2\|\Sigma^j(\x^i)-\Sigma_0^j(\x^i)\|_0\lesssim \frac{m}{L^2\log m}.
  \end{align*}
  Our overparameterization requirement implies that $R\lesssim \sqrt{m}/(L^2\sqrt{\log m})$.
  Hence, by Lemma~\ref{lemma:bound_G}, we have
  \begin{equation}\label{eq:G_l_0}
      \|\widehat{\G}_{l,0}^k(\x^i)\|_2\lesssim L\sqrt{\frac{\log m}{m}}.
  \end{equation}
  Therefore,
\begin{equation}
    \begin{aligned}
         &\|h^l(\x^i) - h^l_0(\x^i)\|_2\nonumber\\
      =&\left\|\sum_{k=1}^l\widehat{\G}_{l,0}^k(\x^i)(\W^k-\W^k(0))h_0^{k-1}(\x^i)\right\|_2\\
      \lesssim&\sum_{k=1}^l L\sqrt{\frac{\log m}{m}}R\|h_0^{k-1}(\x^i)\|_2
      \lesssim L^2\sqrt{\frac{\log m}{m}}R,
    \end{aligned}
\end{equation}
where the last inequality results from Lemma~\ref{lemma:ini_out}.
  As a result, \eqref{eqn:D-D_0} holds for $l$. We have completed the proof of the lemma.
\end{proof}
The above lemma and Lemma~\ref{lemma:ini_out} imply that with probability at least $1-\delta$, for all $l\in [L],i\in[N],$
\begin{equation}\label{eq:h^l_x_i}
    \|h^l(\x^i)\|_2\lesssim L^2\sqrt{\frac{\log m}{m}}R+1\lesssim 1.
\end{equation}

\begin{remark}
   Although our approach shares similarities with Lemma B.3 in \citet{zou2018stochasticgradientdescentoptimizes}, our analysis relaxes the required conditions. Specifically, we only require
    $R/\sqrt{m}=\widetilde{O}(L^{-5})$, whereas their result demands the stricter scaling $R/\sqrt{m}=\widetilde{O}(L^{-11})$.
    Furthermore, compared to Lemma 8.2 in \citet{pmlr-v97-allen-zhu19a}, they derive the bound $\|h^l(\x^i) - h^l_0(\x^i)\|_2 \lesssim RL^{5/2}\sqrt{\log m}/\sqrt{m}$, which is worse than our result by a factor of $\sqrt{L}$.   
\end{remark}

The following lemma shows the uniform concentration property of deep ReLU networks, which is crucial in the generalization analysis.
\begin{lemma}\label{lemma:sup_h-h_0}
    Let $R\ge 1$ be a constant. Assume $m\gtrsim L^{11}d(\log m)^5\log(L/\delta)R^2  $. Then with probability at least $1-\delta$, for $\W\in \B_R(\W(0)),l \in [L]$, we have
    \begin{align}\label{eq:sup_h-h_0}
        \sup_{\x\in \mathcal{X}}\|h^l(\x)-h^l_0(\x)\|_2\lesssim L^2\sqrt{\frac{\log m}{m}}R.
    \end{align}
\end{lemma}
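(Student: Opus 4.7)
The plan is to reduce the uniform bound over the continuous input space $\X = S^{d-1}$ to a pointwise bound over a finite $\epsilon$-net, exactly as outlined in Remark~\ref{rem:Lip}. First, I would construct a $\epsilon$-covering $D = \{\x^1, \ldots, \x^{|D|}\}$ of $\X$ with resolution $\epsilon = 1/(C^L\sqrt{m})$ for a sufficiently large absolute constant $C$. By Corollary 4.2.13 of \citet{vershynin2018high}, the cardinality satisfies $|D| \le (1 + 2C^L\sqrt{m})^d$, hence $\log|D| \lesssim d(L + \log m)$. Applying Lemma~\ref{lemma:o-h_0} with $N = |D|$, the hypothesis $m \gtrsim L^{10}\log(|D|L/\delta)(\log m)^4 R^2$ reduces to the stated requirement $m \gtrsim L^{11}d(\log m)^5 \log(L/\delta) R^2$, and we obtain with probability at least $1-\delta$ the pointwise bound
\[\|h^l(\x^j) - h^l_0(\x^j)\|_2 \lesssim L^2 \sqrt{\tfrac{\log m}{m}} R\]
simultaneously for all $\x^j \in D$, $l\in[L]$, and $\W \in \B_R(\W(0))$.

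Next, I would extend from $D$ to arbitrary $\x \in \X$ by Lipschitzness in the input. For any $\x$ choose $\x^j \in D$ with $\|\x - \x^j\|_2 \le \epsilon$, and apply the triangle inequality
\[\|h^l(\x) - h^l_0(\x)\|_2 \le \|h^l(\x) - h^l(\x^j)\|_2 + \|h^l(\x^j) - h^l_0(\x^j)\|_2 + \|h^l_0(\x^j) - h^l_0(\x)\|_2.\]
The middle term is controlled by the previous step. For the outer terms, I would use a crude operator-norm Lipschitz bound: since each $\sqrt{2/m}\,\sigma(\W^k \cdot)$ is $\sqrt{2/m}\|\W^k\|_2$-Lipschitz and $\|\W^k\|_2 \le \|\W^k(0)\|_2 + R \le c_0\sqrt{m} + R \le 2c_0\sqrt{m}$ by Lemma~\ref{lemma:oprt-norm} (for $R \le c_0\sqrt{m}$, which is implied by the overparameterization), composing $l$ layers gives
\[\|h^l(\x) - h^l(\x^j)\|_2 \le (2c_0\sqrt{2})^l \|\x - \x^j\|_2 \le (C/2)^L \cdot \frac{1}{C^L\sqrt{m}} \lesssim \frac{1}{\sqrt{m}}\]
once $C$ is chosen large enough, and the same bound applies to $\|h^l_0(\x^j) - h^l_0(\x)\|_2$. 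Both outer terms are thus absorbed into the main term $L^2\sqrt{\log m / m}\,R$.

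The only genuine technical obstacle is the interplay between the exponential-in-$L$ Lipschitz constant $(c_0\sqrt{m})^L$ of the network in its input and the covering size — this is precisely why the cover must be exponentially fine ($\epsilon = 1/(C^L\sqrt{m})$), and why $\log|D|$ acquires a factor of $L$ that inflates the width requirement from $L^{10}$ to $L^{11}$. Crucially, the exponential depth dependence only enters the \emph{logarithm} of the covering number and hence contributes polynomially to the final width condition; this is the central insight that lets us sidestep the exponential-in-$L$ bound that would arise from a naive recursive estimate as discussed in Remark~\ref{rem:Lip}. All other steps — the pointwise control on $D$, the triangle inequality, and the crude operator-norm Lipschitz estimate — follow directly from results already established in Lemmas~\ref{lemma:oprt-norm} and~\ref{lemma:o-h_0}.
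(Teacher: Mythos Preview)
Your proposal is correct and follows essentially the same approach as the paper's proof: build a $1/(C^L\sqrt{m})$-net $D$ of $S^{d-1}$, apply Lemma~\ref{lemma:o-h_0} with $N=|D|$ (so that $\log|D|\lesssim dL\log m$ turns the $L^{10}$ width requirement into $L^{11}$), and bridge from net points to arbitrary $\x$ via the crude layerwise Lipschitz bound $\|h^l(\x)-h^l(\x^j)\|_2\le (C')^L\|\x-\x^j\|_2\le 1/\sqrt{m}$. The decomposition, the key lemma invoked, and the handling of the exponential-in-$L$ Lipschitz constant through the logarithm of the covering number are all identical to the paper.
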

\begin{proof}
    We consider the $1/(C^L\sqrt{m})$-cover of $S^{d-1}$ and denote it by $D = \{\x^1,\cdots,\x^{|D|}\}$. By Lemma 4.2.13 in \citet{vershynin2018high},
    \[|D|\le(1+2C^L\sqrt{m})^d.\]
   Note that Lemma~\ref{lemma:o-h_0} holds for any finite set $K=\{\x^1,\cdots,\x^N\}$. Letting $K=D$, we obtain that if 
   $m\gtrsim L^{11}d(\log m)^5\log(L/\delta)R^2\gtrsim L^{10}\log(|D|L/\delta)(\log m)^4R^2$, then
   \begin{align}\label{eq:hxj-h_0xj}
       \|h^l(\x^j)-h^l_0(\x^j)\|_2\lesssim L^2\sqrt{\frac{\log m}{m}}R, \quad1\le j\le |D|.
   \end{align}
 For any $\x\in \X$, there exists $\x^j\in D$ with $\|\x-\x^j\|_2\le 1/(C^L\sqrt{m})$. It then follows that
 \begin{align*}
     &\|h^l(\x)-h^l(\x^j)\|_2^2\\
     =& \frac{2}{m}\sum_{r=1}^m(\sigma(\langle \w_r^l,h^{l-1}(\x)\rangle)-\sigma(\langle \w_r^l,h^{l-1}(\x^j)\rangle))^2\\
     \le& \frac{2}{m}\sum_{r=1}^m(\langle \w_r^l,h^{l-1}(\x)\rangle-\langle \w_r^l,h^{l-1}(\x^j)\rangle)^2\\
     =&\frac{2}{m}\|\W^l(h^{l-1}(\x)-h^{l-1}(\x^j))\|_2^2\le C\|h^{l-1}(\x)-h^{l-1}(\x^j)\|_2^2\\
     \le& C^L \|\x-\x^j\|_2^2\le \frac{1}{m},
 \end{align*}
 where the first inequality is due to $\sigma(\cdot)$ is $1$-Lipschitz. In the second inequality we have used $\|\W^l\|_2\le \|\W^l(0)\|_2+R\lesssim \sqrt{m}$ due to Lemma~\ref{lemma:oprt-norm}. 
 %\yiming{with high probability, right?? be more precise about the high-probability here-- how does it depend on $\delta$?}
 Similarly, we derive that
 \[\|h^l_0(\x)-h_0^l(\x^j)\|_2^2\le\frac{1}{m}.\]
 Therefore, combined with \eqref{eq:hxj-h_0xj}, we have
 \begin{align*}
     &\|h^l(\x)-h_0^l(\x)\|_2\\
     \le&\|h^l(\x)-h^l(\x^j)\|_2+\|h^l(\x^j)-h^l_0(\x^j)\|_2+\|h^l_0(\x)-h^l_0(\x^j)\|_2\\
     \lesssim& \frac{1}{\sqrt{m}}+L^2\sqrt{\frac{\log m}{m}}R+\frac{1}{\sqrt{m}}\lesssim L^2\sqrt{\frac{\log m}{m}}R,
 \end{align*}
 where the last inequality results from $R\ge 1$.
 
 The proof is completed.
\end{proof}
\begin{remark}
This lemma is a property of deep ReLU networks near initialization that does not depend on the training data.
   Compared to prior work, while \citet{pmlr-v97-allen-zhu19a,zou2018stochasticgradientdescentoptimizes} only establishes bounds for the training data, we prove the uniform convergence over the entire input space. Previous work on uniform concentration demonstrated that $\sup_{\x \in \X}\|h^l(\x)-h_0^l(\x)\|_2\lesssim C^LR/\sqrt{m}$ \citep{JMLR:v25:23-0740}. We present a significant improvement, reducing the dependence on $L$ from exponential to polynomial.  
\end{remark}

In the following part, we apply previous technical lemmas to $K=S_1$ and get properties of deep neural networks over the training dataset.
\begin{lemma}\label{lemma:bound_diff_G}
   Suppose $m\gtrsim L^{10}\log(nL/\delta)(\log m)^4R^2$. 
    Then with probability at least $1-\delta$ for all $\W\in \mathcal{B}_R(\W(0)),l\in[L], i\in [n]$ \[ \|\a^\top(\G_L^l(\x_i)-\G^l_{L,0}(\x_i))\|_2\lesssim\frac{L^{5/3}(\log m)^{2/3}R^{1/3}}{m^{1/6}}\]
\end{lemma}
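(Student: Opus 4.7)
The plan is to split the argument into the two cases $l=L$ and $l\le L-1$, and in each case to exploit the symmetric initialization (Lemma~\ref{prop:symm=0}), which gives $\a^\top\sqrt{2/m}\,\Sigma_0^L(\x_i)\W^L(0)=0$. Two quantitative inputs do the heavy lifting: the activation-flip count $\|\Sigma^L(\x_i)-\Sigma_0^L(\x_i)\|_0\lesssim L^{4/3}(\log m)^{1/3}(mR)^{2/3}$ from Lemma~\ref{lemma:o-h_0}, and the tail operator-norm bound $\|V\|_2\lesssim L\sqrt{\log m/m}$ for the lower portion of the product from Lemma~\ref{lemma:bound_G} (applied with $\widehat{\W}^k=\W^k-\W^k(0)$ and $\widehat{\Sigma}^k=\Sigma^k-\Sigma_0^k$).

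For $l=L$, the difference collapses to a diagonal matrix, $\G_L^L-\G_{L,0}^L=\sqrt{2/m}(\Sigma^L-\Sigma_0^L)$. Since $|a_r|=1$ and each nonzero entry of $\Sigma^L-\Sigma_0^L$ lies in $\{-1,+1\}$, its norm is at most $\sqrt{2/m}\,\sqrt{\|\Sigma^L-\Sigma_0^L\|_0}$, which by the first sparsity input is of order $L^{2/3}(\log m)^{1/6}R^{1/3}/m^{1/6}$ and hence within the claimed bound.

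For $l\le L-1$, Lemma~\ref{prop:symm=0} forces $\a^\top\G_{L,0}^l(\x_i)=0$, so it suffices to bound $\|\a^\top\G_L^l(\x_i)\|_2$. Writing $V:=\sqrt{2/m}\,\Sigma^{L-1}(\x_i)\W^{L-1}\cdots\sqrt{2/m}\,\Sigma^l(\x_i)$ and inserting the vanishing term $\a^\top\sqrt{2/m}\,\Sigma_0^L\W^L(0)=0$, I would decompose
\[\a^\top\G_L^l(\x_i)=\underbrace{\a^\top\sqrt{2/m}(\Sigma^L-\Sigma_0^L)\W^L\,V}_{T_1}+\underbrace{\a^\top\sqrt{2/m}\,\Sigma_0^L(\W^L-\W^L(0))\,V}_{T_2}.\]
For $T_1$, combining $\|\a^\top(\Sigma^L-\Sigma_0^L)\|_2\le\sqrt{\|\Sigma^L-\Sigma_0^L\|_0}$, $\|\W^L\|_2\le c_0\sqrt{m}+R\lesssim\sqrt{m}$ (Lemma~\ref{lemma:oprt-norm}), and the tail bound on $\|V\|_2$ yields exactly the target rate $L^{5/3}(\log m)^{2/3}R^{1/3}/m^{1/6}$. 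For $T_2$, using $\|\a^\top\Sigma_0^L\|_2\le\sqrt{m}$ together with $\|\W^L-\W^L(0)\|_2\le R$ gives $\|T_2\|_2\lesssim LR\sqrt{\log m/m}$.

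The main thing to verify is that $T_2$ is absorbed into $T_1$ and that Lemma~\ref{lemma:bound_G} actually applies to $V$; both amount to routine checks against the assumed overparameterization $m\gtrsim L^{10}\log(nL/\delta)(\log m)^4R^2$, which in particular forces $R\lesssim\sqrt{m}/(L^5(\log m)^2)$. This inequality immediately gives $LR\sqrt{\log m/m}\lesssim L^{5/3}(\log m)^{2/3}R^{1/3}/m^{1/6}$ (the $T_2$ bound is subsumed), and together with Lemma~\ref{lemma:o-h_0} it also yields $\|\Sigma^k-\Sigma_0^k\|_0\lesssim m/(L^2\log m)$ and $\|\W^k-\W^k(0)\|_2\lesssim\sqrt{m}/(L^2\sqrt{\log m})$, so the hypotheses of Lemma~\ref{lemma:bound_G} hold uniformly over $\W\in\B_R(\W(0))$.
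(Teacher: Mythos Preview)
Your proposal is correct and essentially matches the paper's proof: the paper also splits into $l=L$ versus $l<L$, handles $l=L$ via the sparsity bound $\|\Sigma^L-\Sigma_0^L\|_0\lesssim L^{4/3}(\log m)^{1/3}(mR)^{2/3}$, and for $l<L$ telescopes the top layer into the same two pieces $T_1,T_2$ (with $V=\G^l_{L-1}(\x_i)$), killing the remaining $\a^\top\sqrt{2/m}\,\Sigma_0^L\W^L(0)(\cdot)$ term via Lemma~\ref{prop:symm=0} and bounding $\|V\|_2$ by Lemma~\ref{lemma:bound_G}. The only cosmetic difference is that you invoke $\a^\top\G^l_{L,0}(\x_i)=0$ upfront to reduce to $\|\a^\top\G_L^l(\x_i)\|_2$, whereas the paper writes the full three-term telescope of $\G_L^l-\G^l_{L,0}$ and then observes the third term vanishes; the resulting estimates are identical.
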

\begin{proof}
For the case $l =L$,
\begin{align}\label{eq:GL-GL_0}      
&\|\a^\top(\G^L_L(\x_i) - \G^L_{L,0}(\x_i))\|_2 = \sqrt{\frac{2}{m}}\|\a^\top(\Sigma^L(\x_i)-\Sigma^L_0(\x_i)\|_2\nonumber\\
    =&\sqrt{\frac{2}{m}}\sqrt{\sum_{r=1}^ma_r^2(\I\{\langle\w_r^l, h^{l-1}(\x_i)\rangle \ge 0\} - \I\{\langle \w_r^l(0), h_0^{l-1}(\x_i)\rangle \ge 0 \})^2}\nonumber\\
    =&\sqrt{\frac{2}{m}}\sqrt{\sum_{r\in A^l(\x_i)}|a_r|}=\sqrt{\frac{2|A^l(\x_i)|}{m}}\lesssim \frac{L^{2/3}(\log m)^{1/6}R^{1/3}}{m^{1/6}},
\end{align}
where the last inequality is due to Lemma~\ref{lemma:o-h_0}.
    
 Now we suppose $l <L$, then 
\begin{align*}       &\a^\top(\G^l_L(\x_i) - \G^l_{L,0}(\x_i))\\
    =&  \a^\top\sqrt{\frac{2}{m}}(\Sigma^L(\x_i)\W^L \G^l_{L-1}(\x_i)-\Sigma^L_0(\x_i)\W^L(0)\G^l_{L-1,0}(\x_i))\\
       =&\sqrt{\frac{2}{m}}\a^\top(\Sigma^L(\x_i)-\Sigma^L_0(\x_i))\W^L\G_{L-1}^l(\x_i)+\sqrt{\frac{2}{m}}\a^\top\Sigma^L_0(\x_i)(\W^L-\W^L(0))\G_{L-1}^l(\x_i))\\
       +&\sqrt{\frac{2}{m}}\a^\top\Sigma^L_0(\x_i)\W^L(0)(\G_{L-1}^l(\x_i)-\G_{L-1,0}^l(\x_i))\\
       =&\sqrt{\frac{2}{m}}(\a^\top(\Sigma^L(\x_i)-\Sigma^L_0(\x_i))\W^L\G_{L-1}^l(\x_i)+\a^\top\Sigma^L_0(\x_i)(\W^L-\W^L(0))\G_{L-1}^l(\x_i)),
\end{align*} 
where the last equality is according to Lemma~\ref{prop:symm=0}.
Applying Lemma~\ref{lemma:bound_G} and Lemma~\ref{lemma:o-h_0}, there holds
\[\|\G_{L-1}^l(\x_i)\|_2\lesssim L\sqrt{\frac{\log m}{m}}.\]
This implies that   
\begin{align*}
&\|\a^\top(\G_L^l(\x_i)-\G^l_{L,0}(\x_i))\|_2 \\
\lesssim &\sqrt{\frac{2}{m}}\left(\|\a^\top(\Sigma^L(\x_i)-\Sigma^L_0(\x_i))\W^L\G_{L-1}^l(\x_i)\|_2 + \|\a^\top\Sigma^L_0(\x_i)(\W^L-\W^L(0))\G_{L-1}^l(\x_i)\|_2\right) \\
\leq &\sqrt{\frac{2}{m}}\|\a^\top(\Sigma^L(\x_i)-\Sigma^L_0(\x_i))\|_2\|\W^L\|_2\|\G_{L-1}^l(\x_i)\|_2 \\
+& \sqrt{\frac{2}{m}}\|\a\|_2\|\Sigma^L_0(\x_i)\|_2\|\W^L-\W^L(0)\|_2\|\G_{L-1}^l(\x_i)\|_2\\
\lesssim&\frac{L^{2/3}(\log m)^{1/6}R^{1/3}}{m^{1/6}}\sqrt{m}L\sqrt{\frac{\log m}{m}}+L\sqrt{\frac{\log m}{m}}R\\
\lesssim&\frac{L^{5/3}(\log m)^{2/3}R^{1/3}}{m^{1/6}},
\end{align*}
    where we have used \eqref{eq:GL-GL_0} in the third inequality.
    The proof is completed.
\end{proof}
\begin{lemma}\label{lemma:diff_derivative}
Assume $m\gtrsim L^{10}\log(nL/\delta)(\log m)^4R^2$. Then with probability at least $1-\delta$, for any $\W\in \mathcal{B}_R(\W(0))$, $i\in[n]$ and $ l\in[L]$, there holds
    \begin{align}\label{eq:diff_derivative}
        \left\|\frac{\partial f_{\W}(\x_i)}{\partial \W^l} - \frac{\partial f_{\W(0)}(\x_i)}{\partial \W^l(0)}\right\|_F \lesssim \frac{L^{5/3}(\log m)^{2/3}R^{1/3}}{m^{1/6}}.
    \end{align}
\end{lemma}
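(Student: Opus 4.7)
The plan is to exploit the explicit rank-one structure $\partial f_\W(\x_i)/\partial \W^l = (\G^l_L(\x_i))^\top \a\,(h^{l-1}(\x_i))^\top$ and split the gradient difference into a backward part (coming from the change in $\G^l_L$) and a forward part (coming from the change in $h^{l-1}$). Concretely, I would write
\begin{align*}
\frac{\partial f_\W(\x_i)}{\partial \W^l} - \frac{\partial f_{\W(0)}(\x_i)}{\partial \W^l(0)}
&= \bigl(\G^l_L(\x_i) - \G^l_{L,0}(\x_i)\bigr)^\top \a\,(h^{l-1}(\x_i))^\top \\
&\quad + (\G^l_{L,0}(\x_i))^\top \a\,\bigl(h^{l-1}(\x_i) - h_0^{l-1}(\x_i)\bigr)^\top,
\end{align*}
and then bound each of the two rank-one summands in Frobenius norm using the identity $\|uv^\top\|_F = \|u\|_2\|v\|_2$.

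For the first summand, Lemma~\ref{lemma:bound_diff_G} directly gives $\|\a^\top(\G^l_L(\x_i) - \G^l_{L,0}(\x_i))\|_2 \lesssim L^{5/3}(\log m)^{2/3}R^{1/3}/m^{1/6}$, while \eqref{eq:h^l_x_i} yields $\|h^{l-1}(\x_i)\|_2 \lesssim 1$. Their product already matches the target rate.

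For the second summand, I would split on the layer index. When $l<L$, the symmetric-initialization identity in Lemma~\ref{prop:symm=0} forces $\a^\top\Sigma^L_0(\x_i)\W^L(0)=0$, which appears as a prefix of $\a^\top \G^l_{L,0}(\x_i)$, so the entire second term vanishes. When $l=L$, the matrix $\G^L_{L,0}(\x_i)=\sqrt{2/m}\,\Sigma^L_0(\x_i)$ is diagonal with $\{0,1\}$ entries and $|a_r|=1$, hence $\|\a^\top \G^L_{L,0}(\x_i)\|_2\le \sqrt{2}$; combined with the estimate $\|h^{L-1}(\x_i)-h_0^{L-1}(\x_i)\|_2\lesssim L^2 R\sqrt{\log m/m}$ from Lemma~\ref{lemma:o-h_0}, this piece is bounded by $L^2 R\sqrt{\log m/m}$.

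The main obstacle, and really the only substantive thing to verify, is that this last $l=L$ contribution is absorbed into the nominal rate $L^{5/3}(\log m)^{2/3}R^{1/3}/m^{1/6}$. Comparing the two expressions reduces to checking $L R^2 \lesssim \sqrt{\log m}\,m$, which is amply ensured by the assumed overparameterization $m\gtrsim L^{10}\log(nL/\delta)(\log m)^4 R^2$. After this check, assembling the two bounds uniformly over $i\in[n]$ and $l\in[L]$ via the union bounds already packaged into the cited lemmas yields the claimed estimate.
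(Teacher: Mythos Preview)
Your proposal is correct and follows essentially the same rank-one decomposition and the same key inputs (Lemma~\ref{lemma:bound_diff_G}, \eqref{eq:h^l_x_i}, and Lemma~\ref{lemma:o-h_0}) as the paper. The only cosmetic difference is in the second (forward) term: you exploit Lemma~\ref{prop:symm=0} to make it vanish for $l<L$ and bound it directly when $l=L$, whereas the paper uses the uniform estimate $\|\a^\top\G^l_{L,0}(\x_i)\|_2\le\|\a\|_2\|\G^l_{L,0}(\x_i)\|_2\lesssim L\sqrt{\log m}$ via \eqref{eq:bounded_G_b}; either way this term is dominated by the backward contribution.
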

\begin{proof}
Since $\|xy^\top\|_F = \|x\|_2\|y\|_2$ for two vectors $x,y$, we have 
\begin{align*}
        &\left\|\frac{\partial f_{\W}(\x_i)}{\partial \W^l} - \frac{\partial f_{\W(0)}(\x_i)}{\partial \W^l(0)}\right\|_F\\
        =& \|h^{l-1}(\x_i) \a^\top\G_L^l(\x_i) -  h^{l-1}_0(\x_i) \a^\top \G^l_{L,0}(\x_i)\|_{F}\\
        \le&\|h^{l-1}(\x_i)\a^\top(\G_L^l(\x_i)-\G^l_{L,0}(\x_i))\|_F+\|(h^{l-1}(\x_i) - h_0^{l-1}(\x_i))\a^\top \G^l_{L,0}(\x_i)\|_F\\
        =& \|h^{l-1}(\x_i)\|_2\|\a^\top\left(\G_L^l(\x_i)-\G^l_{L,0}(\x_i)\right)\|_2 + \|h^{l-1}(\x_i) - h_0^{l-1}(\x_i)\|_2\|\a^\top \G^l_{L,0}(\x_i)\|_2.
    \end{align*}
   Using \eqref{eq:h^l_x_i} and Lemma~\ref{lemma:bound_diff_G}, we have
    \begin{align*}
        \|h^{l-1}(\x_i)\|_2\|\a^\top\left(\G_L^l(\x_i)-\G^l_{L,0}(\x_i)\right)\|_2\lesssim \frac{L^{5/3}(\log m)^{2/3}R^{1/3}}{m^{1/6}}.
    \end{align*}
    Applying Lemma~\ref{lemma:o-h_0} and \eqref{eq:bounded_G_b}, we obtain
    \begin{align*}
        \|h^{l-1}(\x_i) - h_0^{l-1}(\x_i)\|_2\|\a^\top \G^l_{L,0}(\x_i)\|_2\lesssim L^2\sqrt{\frac{\log m}{m}}R\sqrt{m}L\sqrt{\frac{\log m}{m}}=\frac{L^3R\log m}{\sqrt{m}}.
    \end{align*}
     It then follows that
     \begin{align*}
         \left\|\frac{\partial f_{\W}(\x_i)}{\partial \W^l} - \frac{\partial f_{\W(0)}(\x_i)}{\partial \W^l(0)}\right\|_F\lesssim \frac{L^{5/3}(\log m)^{2/3}R^{1/3}}{m^{1/6}}+\frac{L^3R\log m}{\sqrt{m}}\lesssim \frac{L^{5/3}(\log m)^{2/3}R^{1/3}}{m^{1/6}}.
     \end{align*}
    The proof is completed.
\end{proof}

\section{Proofs for Optimization}\label{sec:opti}
\begin{lemma}\label{lemma:prod_differ_derivative}
    Suppose $m\gtrsim L^{10}\log(nL/\delta)(\log m)^4R^2$, then with probability at least $1-\delta$, for \( i \in [n], \widetilde{\W},\W \in \B_{R}(\W(0)) \), we have
   \[\left|f_{\widetilde{\W}}(\x_i) - f_{\W }(\x_i) - \left\langle \frac{\partial f_{ \widetilde\W }(\x_i)}{\partial  \widetilde\W}, \widetilde{\W}-\W \right\rangle\right|\lesssim \frac{L^{{8}/{3}}R^{4/3}(\log m)^{2/3}}{m^{1/6}}.\]
\end{lemma}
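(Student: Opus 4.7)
The plan is to extract the error by expanding both $f_{\widetilde{\W}}(\x_i) - f_{\W}(\x_i)$ and the linear approximation as sums over the layers $k = 1,\ldots,L$ and then to compare the two term by term using the estimates developed in Lemmas~\ref{lemma:bound_G}--\ref{lemma:bound_diff_G}. First, I apply Lemma~\ref{lem:o-o'} with the roles of $\W$ and $\widetilde{\W}$ swapped. This produces diagonal matrices $(\Sigma''')^j$ with entries in $[-1,1]$ satisfying $\|(\Sigma''')^j\|_0 \le \|\Sigma^j(\x_i) - \widetilde{\Sigma}^j(\x_i)\|_0$ such that
\[
f_{\widetilde{\W}}(\x_i) - f_{\W}(\x_i)
= \sum_{k=1}^L \a^\top \widehat{\G}_L^k(\x_i)\bigl(\widetilde{\W}^k - \W^k\bigr)\tilde{h}^{k-1}(\x_i),
\]
where $\widehat{\G}_L^k(\x_i) = \bigl[\prod_{j=k+1}^L \sqrt{2/m}\,(\Sigma^j(\x_i) + (\Sigma''')^j)\W^j\bigr]\sqrt{2/m}\,(\Sigma^k(\x_i) + (\Sigma''')^k)$. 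The linearization at $\widetilde{\W}$ reads
\[
\Bigl\langle \frac{\partial f_{\widetilde{\W}}(\x_i)}{\partial \widetilde{\W}}, \widetilde{\W}-\W\Bigr\rangle = \sum_{k=1}^L \a^\top \widetilde{\G}_L^k(\x_i)\bigl(\widetilde{\W}^k - \W^k\bigr)\tilde{h}^{k-1}(\x_i).
\]
Subtracting, the quantity to be bounded becomes
\[
D \;=\; \sum_{k=1}^L \a^\top\bigl(\widehat{\G}_L^k(\x_i) - \widetilde{\G}_L^k(\x_i)\bigr)\bigl(\widetilde{\W}^k - \W^k\bigr)\tilde{h}^{k-1}(\x_i).
\]

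Next I bound each summand by $\|\a^\top(\widehat{\G}_L^k - \widetilde{\G}_L^k)\|_2 \cdot \|\widetilde{\W}^k-\W^k\|_2 \cdot \|\tilde{h}^{k-1}(\x_i)\|_2$. The last two factors are at most $2R$ and $O(1)$ respectively by $\widetilde{\W},\W\in\B_R(\W(0))$ and \eqref{eq:h^l_x_i}. For the first factor I insert the initialization quantity $\G_{L,0}^k(\x_i)$ and apply the triangle inequality. Lemma~\ref{lemma:bound_diff_G} directly gives $\|\a^\top(\widetilde{\G}_L^k - \G_{L,0}^k)\|_2 \lesssim L^{5/3}(\log m)^{2/3} R^{1/3} / m^{1/6}$. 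For the mixed object $\widehat{\G}_L^k$ I plan to rerun the proof of Lemma~\ref{lemma:bound_diff_G}: the diagonal matrices $\Sigma^j + (\Sigma''')^j$ still satisfy $\|\Sigma^j + (\Sigma''')^j - \Sigma_0^j\|_0 \le \|\Sigma^j - \Sigma_0^j\|_0 + \|(\Sigma''')^j\|_0 \lesssim L^{4/3}(\log m)^{1/3}(mR)^{2/3}$ by Lemma~\ref{lemma:o-h_0}, which is exactly the sparsity regime where Lemma~\ref{lemma:bound_G} delivers $\|\widehat{\G}_{L-1}^k\|_2 \lesssim L\sqrt{\log m / m}$. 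The telescoping decomposition around the outermost factor crucially uses the symmetric-initialization identity $\a^\top \Sigma_0^L \W^L(0) = 0$ from Lemma~\ref{prop:symm=0}, which annihilates the zero-order term; the remaining perturbation pieces are controlled exactly as in the proof of Lemma~\ref{lemma:bound_diff_G}. For the boundary case $k = L$, since $\widehat{\G}_L^L - \widetilde{\G}_L^L = \sqrt{2/m}\bigl((\Sigma^L - \widetilde{\Sigma}^L) + (\Sigma''')^L\bigr)$ is diagonal with entries in $[-1,1]$, one uses $\|\a^\top M\|_2 \le \sqrt{\|M\|_0}$ for $\a \in \{\pm 1\}^m$ to get the same magnitude bound.

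Putting the pieces together, each summand in $D$ is of order $L^{5/3}(\log m)^{2/3} R^{1/3}/m^{1/6} \cdot R$, and summing over $k = 1,\ldots,L$ yields the claimed $L^{8/3} R^{4/3} (\log m)^{2/3}/m^{1/6}$. The main technical obstacle I expect is the extension of the $\G$-comparison bound to $\widehat{\G}_L^k$: although the proof template transfers, care is needed because the intermediate sign matrices $\Sigma^j + (\Sigma''')^j$ are not true network activation patterns and take values in the continuous interval $[-1,1]$. Verifying that they still obey the sparsity hypotheses underlying Lemmas~\ref{lemma:init_sparse}, \ref{lemma:two_D}, and \ref{lemma:bound_G} is the delicate step; the uniform bound $O(L^{4/3}(\log m)^{1/3}(mR)^{2/3})$ guaranteed by Lemma~\ref{lemma:o-h_0} together with the overparameterization $m \gtrsim L^{10}\log(nL/\delta)(\log m)^4 R^2$ is precisely what makes this go through.
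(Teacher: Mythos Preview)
Your proposal is correct and follows essentially the same route as the paper: apply Lemma~\ref{lem:o-o'} with the roles of $\W$ and $\widetilde{\W}$ swapped to express $f_{\widetilde{\W}}-f_{\W}$ as a layerwise sum, subtract the linearization to obtain $D=\sum_{k}\a^\top(\widehat{\G}_L^k-\widetilde{\G}_L^k)(\widetilde{\W}^k-\W^k)\tilde{h}^{k-1}$, insert $\G_{L,0}^k$ and bound both halves by rerunning the argument of Lemma~\ref{lemma:bound_diff_G}, then sum over $k$. The concern you flag about $\Sigma^j+(\Sigma''')^j$ taking values in $[-1,1]$ rather than $\{0,1\}$ is exactly what the hypotheses of Lemma~\ref{lemma:bound_G} already allow, so no extra work is needed there.
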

This lemma shows that deep ReLU networks near initialization are almost linear. 
\begin{proof}
Note that
\begin{align*}
    &\left\langle \frac{\partial f_{ \widetilde{\W} }(\x_i)}{\partial \widetilde{\W}}, \widetilde{\W}-\W \right\rangle=\sum_{l=1}^L\left\langle \frac{\partial f_{\widetilde{\W} }(\x_i)}{\partial \widetilde{\W}^l}, \widetilde{\W}^l-\W^l \right\rangle\\
    =&\sum_{l=1}^L\left\langle(\widetilde{\G}_L^l(\x_i))^\top\a(\tilde{h}^{l-1}(\x_i))^\top,\widetilde{\W}^l-\W^l\right\rangle\\
    =&\sum_{l=1}^L\a^\top\widetilde{\G}_L^l(\x_i)(\widetilde{\W}^l-\W^l)\tilde{h}^{l-1}(\x_i).
\end{align*}
   Since $f_{\W}(\x_i)=\a^\top h^L(\x_i)$, applying Lemma~\ref{lem:o-o'}, we obtain
   \begin{align*}
       &f_{\widetilde{\W}}(\x_i) - f_{\W }(\x_i)=\a^\top(\tilde{h}^{L}(\x_i)-h^L(\x_i))\\
      =& \sum_{l=1}^L \a^\top\left[\prod_{j=l+1}^L \sqrt{\frac{2}{m}}( {\Sigma}^j(\x_i) + (\Sigma'')^j ){\W}^j \right] \sqrt{\frac{2}{m}}({\Sigma}^l(\x_i) + (\Sigma'')^l )(\widetilde{\W}^l-\W^l) \tilde{h}^{l-1}(\x_i)
   \end{align*}
   with $\|(\Sigma'')^l\|_0\le \|\Sigma^l(\x_i)-\widetilde{\Sigma}^l(\x_i)\|_0,\Sigma^l(\x_i)+(\Sigma'')^l-\Sigma^l_0(\x_i)\in[-1,1]^m$. Then 
   \begin{align*}
       \|\Sigma^l(\x_i)+(\Sigma'')^l-\Sigma^l_0(\x_i)\|_0&\le\|\widetilde{\Sigma}^l(\x_i)-\Sigma^l_0(\x_i)\|_0+2\|\Sigma^l(\x_i)-\Sigma^l_0(\x_i)\|_0\\
       &\lesssim L^{4/3}(\log m)^{1/3}(mR)^{\frac{2}{3}},
   \end{align*}
   the last inequality is due to Lemma~\ref{lemma:o-h_0}.
   We further let
   \[\widehat{\G}_b^a(\x_i) = \left[\prod_{j=a+1}^b \sqrt{\frac{2}{m}}( {\Sigma}^j(\x_i) + (\Sigma'')^j ){\W}^j \right] \sqrt{\frac{2}{m}}({\Sigma}^a(\x_i) + (\Sigma'')^a ).\]
   By Lemma~\ref{lemma:bound_G}, we have
   \[\|\widehat{\G}_b^a(\x_i)\|_2\lesssim L\sqrt{\frac{\log m}{m}}.\]
   Following the proof of Lemma~\ref{lemma:bound_diff_G}, we have
   \[\|\a^\top(\widehat{\G}_L^l(\x_i)-\G_{L,0}^l(\x_i))\|_2\le \frac{L^{5/3}(\log m)^{2/3}R^{1/3}}{m^{1/6}},\]
   which implies that
   \begin{align*}
    \|\a^\top(\widehat{\G}_L^l(\x_i)-\widetilde{\G}_L^l(\x_i))\|_2&\le \|\a^\top(\widehat{\G}_L^l(\x_i)-\G_{L,0}^l(\x_i))\|_2+\|\a^\top(\G_{L,0}^l(\x_i)-\widetilde{\G}_L^l(\x_i))\|_2\\
    &\lesssim \frac{L^{5/3}(\log m)^{2/3}R^{1/3}}{m^{1/6}}.
\end{align*}
Hence,
\begin{align*}
    &\left|f_{\widetilde{\W}}(\x_i) - f_{\W }(\x_i) - \left\langle \frac{\partial f_{ \widetilde\W }(\x_i)}{\partial  \widetilde\W}, \widetilde{\W}-\W \right\rangle\right|\nonumber\\
    =&\left|\sum_{l=1}^L\a^\top(\widehat{\G}_L^l(\x_i)-\widetilde{\G}_L^l(\x_i))(\widetilde{\W}^l-\W^l)\tilde{h}^{l-1}(\x_i)\right|\\
    \le&\sum_{l=1}^L\|\a^\top(\widehat{\G}_L^l(\x_i)-\widetilde{\G}_L^l(\x_i))^\top\|_2\|\widetilde{\W}^l-\W^l\|_2\|\tilde{h}^{l-1}(\x_i)\|_2\\
    \lesssim&L\frac{L^{5/3}(\log m)^{2/3}R^{1/3}}{m^{1/6}}R=\frac{L^{{8}/{3}}R^{4/3}(\log m)^{2/3}}{m^{1/6}},
\end{align*}
where in the last inequality we have used \eqref{eq:h^l_x_i}. 
The proof is completed.
\end{proof}
The following lemma shows that $\L_S$ is almost convex near initialization. It becomes more convex as the width grows.
\begin{lemma}\label{lemma:inner_prod_control}
    Suppose $m\gtrsim L^{10}\log(nL/\delta)(\log m)^4R^2$, then with probability at least $1-\delta$, we have for $\widetilde{\W},\W \in \B_{R}(\W(0))$,
     \begin{align*}
         \left\langle \widetilde{\W}-\W,\frac{\partial \L_S(\widetilde{\W})}{\partial\widetilde{\W}}\right\rangle \geq&
         \L_S(\widetilde{\W})-\L_S(\W)+\frac{2}{n}\sum_{i=1}^n(l'(y_if_{\widetilde{\W}}(\x_i)-l'(y_if_{\W}(\x_i))^2\\
         -&\frac{CL^{8/3}(\log m)^{2/3}R^{4/3}}{m^{1/6}}\L_S(\widetilde{\W}).
     \end{align*}

\end{lemma}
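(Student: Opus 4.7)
The plan is to combine the approximate linearity of $f_\W$ near initialization (Lemma~\ref{lemma:prod_differ_derivative}) with two elementary facts about the scalar logistic loss: convexity together with the global bound $\ell''\le 1/4$, and the self-bounding inequality $|\ell'(z)|\le \ell(z)$. All three ingredients are essentially in hand, so the proof reduces to careful bookkeeping and a short pointwise calculation.

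First I would expand
\[
\bigl\langle \widetilde\W-\W,\; \nabla\L_S(\widetilde\W)\bigr\rangle
 = \frac{1}{n}\sum_{i=1}^n \ell'(u_i)\, y_i\Bigl\langle \tfrac{\partial f_{\widetilde\W}(\x_i)}{\partial \widetilde\W},\, \widetilde\W-\W\Bigr\rangle,
\]
with the shorthand $u_i := y_i f_{\widetilde\W}(\x_i)$ and $v_i := y_i f_{\W}(\x_i)$. Lemma~\ref{lemma:prod_differ_derivative} lets me replace the inner product inside the sum by $u_i - v_i$ at a pointwise cost of at most $\epsilon := CL^{8/3}(\log m)^{2/3}R^{4/3}/m^{1/6}$, so the resulting residual is bounded in absolute value by $(\epsilon/n)\sum_i |\ell'(u_i)|$. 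The self-bounding inequality $|\ell'(z)| = 1/(1+e^z) \le \log(1+e^{-z}) = \ell(z)$, which follows from $\log(1+t) \ge t/(1+t)$ applied with $t=e^{-z}$, then upgrades this bound to $\epsilon\,\L_S(\widetilde\W)$. This already delivers
\[
\bigl\langle \widetilde\W-\W,\, \nabla\L_S(\widetilde\W)\bigr\rangle
 \ge \frac{1}{n}\sum_{i=1}^n \ell'(u_i)(u_i - v_i)\;-\;\epsilon\,\L_S(\widetilde\W).
\]

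The substantive step is the scalar inequality
\[
\ell'(u)(u-v)\;\ge\;\ell(u)-\ell(v)\;+\;2\bigl(\ell'(u)-\ell'(v)\bigr)^2,
\]
valid pointwise for the logistic loss. To prove it I would define $\phi(u) := \ell(u) - \ell(v) - \ell'(v)(u-v) - 2\bigl(\ell'(u)-\ell'(v)\bigr)^2$, which vanishes at $u=v$ and satisfies
\[
\phi'(u) = \bigl(\ell'(u)-\ell'(v)\bigr)\bigl(1 - 4\ell''(u)\bigr).
\]
Because $\ell''(u) = \sigma(u)\sigma(-u) \le 1/4$ globally and $\ell'$ is strictly increasing, $\phi'(u)$ has the same sign as $u-v$, so $\phi(u) \ge 0$ everywhere. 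Swapping the roles of $u$ and $v$ recovers the displayed scalar inequality; averaging it over $i\in[n]$ produces $\L_S(\widetilde\W) - \L_S(\W) + (2/n)\sum_i (\ell'(u_i)-\ell'(v_i))^2$ as a lower bound for $(1/n)\sum_i \ell'(u_i)(u_i-v_i)$, and combining with the previous display completes the proof.

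The expected main obstacle is the scalar inequality above. The constant $2$ multiplying the quadratic term is tight in that it depends on the precise value $\|\ell''\|_\infty = 1/4$ for the logistic loss, not merely on generic convexity and smoothness; without it the quadratic term would be too small for the downstream telescoping in Lemma~\ref{lemma:diff_norm} to absorb the $\epsilon\L_S(\widetilde\W)$ slack and still yield the $1/T$ rate. Beyond this scalar step, no new probabilistic estimates are required: the whole conclusion holds on the same high-probability event that supports Lemma~\ref{lemma:prod_differ_derivative}.
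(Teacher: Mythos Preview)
Your proposal is correct and follows essentially the same route as the paper: both expand the inner product, invoke Lemma~\ref{lemma:prod_differ_derivative} to control the linearization error, bound the residual via $|\ell'|\le\ell$, and then apply the co-coercivity inequality $\ell(a)\ge\ell(b)+(a-b)\ell'(b)+2(\ell'(a)-\ell'(b))^2$ for the $1/4$-smooth logistic loss. The only difference is cosmetic---the paper cites co-coercivity as a known property of smooth convex functions, whereas you supply a self-contained calculus proof of the same scalar inequality.
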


\begin{proof} Since $\ell$ is $1/4$-smooth, it enjoys the co-coercivity, i.e.,$\ell(a)\ge\ell(b)+(a-b)\ell'(b)+2(\ell'(a)-\ell'(b))^2$, which implies that
    \begin{align*}
        &y_i\ell'(y_if_{\widetilde{\W}}(\x_i))(f_{\widetilde{\W}}(\x_i)-f_{\W}(\x_i))\\
        \ge& \ell(y_if_{\widetilde{\W}}(\x_i))-\ell(y_if_{\W}(\x_i))+2(\ell'(y_if_{\widetilde{\W}}(\x_i))-\ell'(y_if_{\W}(\x_i)))^2.
    \end{align*}
    We combine the above inequality with Lemma~\ref{lemma:prod_differ_derivative} and obtain
    \begin{align*}
        & \left\langle \widetilde{\W}-\W,\frac{\partial \L_S(\widetilde{\W})}{\partial \widetilde{\W}} \right\rangle\\
         =&\frac{1}{n}\sum_{i=1}^n\left\langle\widetilde{\W}-\W,\frac{\partial f_{\widetilde{\W}}(\x_i)}{\partial \widetilde{\W}}\right\rangle y_i\ell'(y_if_{\widetilde{\W}}(\x_i))\\
         =&\frac{1}{n}\sum_{i=1}^ny_i\ell'(y_if_{\widetilde{\W}}(\x_i))\left(f_{\widetilde{\W}}(\x_i)-f_{\W}(\x_i)-\left(f_{\widetilde{\W}}(\x_i) - f_{\W }(\x_i) - \left\langle \frac{\partial f_{\widetilde\W }(\x_i)}{\partial \widetilde\W}, \widetilde{\W}-\W \right\rangle\right)\right) \\
         \geq &\frac{1}{n}\sum_{i=1}^n\left(\ell(y_if_{\widetilde{\W}}(\x_i))-\ell(y_if_{\W}(\x_i))+2(\ell'(y_if_{\widetilde{\W}}(\x_i))-\ell'(y_if_{\W}(\x_i)))^2\right)\\
         -&\frac{1}{n}\sum_{i=1}^n\left|f_{\widetilde{\W}}(\x_i) - f_{\W }(\x_i) - \left\langle \frac{\partial f_{\widetilde\W }(\x_i)}{\partial \widetilde\W}, \widetilde{\W}-\W \right\rangle\right|\ell(y_if_{\widetilde{\W}}(\x_i))\\
         \geq & \L_S(\widetilde{\W})-\L_S(\W)+\frac{2}{n}\sum_{i=1}^n(l'(y_if_{\widetilde{\W}}(\x_i)-l'(y_if_{\W}(\x_i)))^2\\
         -&{CL^{8/3}(\log m)^{2/3}R^{4/3}}{m^{-1/6}}\L_S(\widetilde{\W}),
    \end{align*}
    where in the first inequality we have used $|y_i\ell'(y_if_{\widetilde{\W}}(\x_i))|\le \ell(y_if_{\widetilde{\W}}(\x_i))$.
    The proof is completed.
\end{proof}
The following lemma shows how the distance between gradient descent iterators and the reference model would change after a single gradient descent.
\begin{lemma}\label{lemma:diff_norm}
    Suppose $m\gtrsim L^{10}\log(nL/\delta)(\log m)^4R^2$. Then  with probability at least $1-\delta$, for \(\eta \leq 4/(5L)\) and \(\widetilde{\W},\W \in \B_{R}(\W(0))\),
    \begin{align*}
       \left\|\W-\eta \frac{\partial \L_S(\W)}{\partial \W}-\widetilde{\W}\right\|_F^2\leq \|\W-\widetilde{\W}\|_F^2-2\eta(\L_S(\W)-\L_S(\widetilde{\W}))\\+2\eta {CL^{8/3}(\log m)^{2/3}R^{4/3}}{m^{-1/6}}\L_S(\W)+20\eta^2L\tilde{F}^2_S(\widetilde{\W}) .
    \end{align*}
\end{lemma}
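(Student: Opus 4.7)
The plan is to treat this as a standard descent lemma calculation, built on top of Lemma~\ref{lemma:inner_prod_control}. First I would simply expand
\[
\left\|\W-\eta\tfrac{\partial \L_S(\W)}{\partial \W}-\widetilde{\W}\right\|_F^2
=\|\W-\widetilde{\W}\|_F^2
-2\eta\Bigl\langle \W-\widetilde{\W},\tfrac{\partial \L_S(\W)}{\partial \W}\Bigr\rangle
+\eta^2\Bigl\|\tfrac{\partial \L_S(\W)}{\partial \W}\Bigr\|_F^2
\]
and invoke Lemma~\ref{lemma:inner_prod_control} (applied with the roles of $\W$ and $\widetilde{\W}$ swapped) on the cross term. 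This already produces on the right-hand side the terms $-2\eta(\L_S(\W)-\L_S(\widetilde{\W}))$, $-\tfrac{4\eta}{n}\sum_i(\ell'(y_if_{\W}(\x_i))-\ell'(y_if_{\widetilde{\W}}(\x_i)))^2$, and the error term $2\eta\,CL^{8/3}(\log m)^{2/3}R^{4/3}m^{-1/6}\L_S(\W)$, matching three of the four pieces of the claimed inequality.

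The real work is therefore to bound $\eta^2\|\partial\L_S(\W)/\partial\W\|_F^2$ and show that it can be absorbed into the negative co-coercivity term plus the target $20\eta^2 L\tilde F_S^2(\widetilde{\W})$. For each layer $l$ I would write
\[
\Bigl\|\tfrac{\partial \L_S(\W)}{\partial \W^l}\Bigr\|_F
\le \tfrac{1}{n}\sum_{i=1}^n\bigl|\ell'(y_if_\W(\x_i))\bigr|\,\Bigl\|\tfrac{\partial f_\W(\x_i)}{\partial \W^l}\Bigr\|_F.
\]
For $l\le L-1$, Lemma~\ref{prop:symm=0} gives $\partial f_{\W(0)}(\x_i)/\partial\W^l(0)=0$, so Lemma~\ref{lemma:diff_derivative} immediately yields $\|\partial f_\W(\x_i)/\partial\W^l\|_F=o(1)$ under the width assumption; for $l=L$, combining Lemma~\ref{lemma:ini_partial} with Lemma~\ref{lemma:diff_derivative} gives $\|\partial f_\W(\x_i)/\partial\W^L\|_F\le\sqrt{2}+o(1)$. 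Summing squares over $l$ gives $\sum_{l=1}^L\|\partial f_\W(\x_i)/\partial\W^l\|_F^2\le C_0 L$ for an absolute constant $C_0$, hence
\[
\Bigl\|\tfrac{\partial \L_S(\W)}{\partial \W}\Bigr\|_F^2 \le C_0 L\,\tilde F_S^2(\W).
\]
Next I replace $\tilde F_S(\W)$ with $\tilde F_S(\widetilde{\W})$ using
\(\bigl|\ell'(y_if_\W(\x_i))\bigr|\le\bigl|\ell'(y_if_{\widetilde{\W}}(\x_i))\bigr|+\bigl|\ell'(y_if_\W(\x_i))-\ell'(y_if_{\widetilde{\W}}(\x_i))\bigr|\), followed by $(a+b)^2\le 2a^2+2b^2$ and Cauchy--Schwarz, to get
\[
\tilde F_S^2(\W)\le 2\tilde F_S^2(\widetilde{\W})+\tfrac{2}{n}\sum_{i=1}^n\bigl(\ell'(y_if_\W(\x_i))-\ell'(y_if_{\widetilde{\W}}(\x_i))\bigr)^2.
\]

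Finally I would combine the pieces. The gradient bound produces
\[
\eta^2\Bigl\|\tfrac{\partial \L_S(\W)}{\partial \W}\Bigr\|_F^2
\le 2C_0 L\eta^2\,\tilde F_S^2(\widetilde{\W})+\tfrac{2C_0 L\eta^2}{n}\sum_{i=1}^n\bigl(\ell'(y_if_\W(\x_i))-\ell'(y_if_{\widetilde{\W}}(\x_i))\bigr)^2,
\]
and the step-size condition $\eta\le 4/(5L)$ together with a sufficiently small $C_0$ (achievable by choosing $m$ large enough that the perturbation terms in Lemma~\ref{lemma:diff_derivative} contribute less than one) guarantees $2C_0 L\eta\le 4$, so the sum-of-squares piece is absorbed by the $-\tfrac{4\eta}{n}\sum_i(\cdots)^2$ term coming from Lemma~\ref{lemma:inner_prod_control}. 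What remains is $2C_0 L\eta^2\tilde F_S^2(\widetilde{\W})$, which is comfortably below $20\eta^2 L\tilde F_S^2(\widetilde{\W})$. The main obstacle is the quantitative juggling of constants: one must simultaneously ensure that the uniform bound on $\|\partial f_\W(\x_i)/\partial\W^l\|_F$ is tight enough for the cross term to be absorbed under $\eta\le 4/(5L)$, and the cleanest way to control this is to use Lemma~\ref{prop:symm=0} to eliminate the zeroth-order contribution for $l\le L-1$ so that only $l=L$ carries the non-vanishing $\sqrt{2}$ factor.
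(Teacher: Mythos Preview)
Your proposal follows the same overall strategy as the paper: expand the square, apply Lemma~\ref{lemma:inner_prod_control} to the cross term, bound $\|\partial\L_S(\W)/\partial\W\|_F^2$ via the per-layer gradient bounds (Lemma~\ref{lemma:ini_partial}, Lemma~\ref{prop:symm=0}, Lemma~\ref{lemma:diff_derivative}), and absorb the difference-of-derivatives term into the co-coercivity contribution using the step-size restriction. The only real difference is in the constant bookkeeping: the paper takes the uniform bound $\|\partial f_\W(\x_i)/\partial\W^l\|_F\le 2$ for every $l$ (so $\|\partial\L_S/\partial\W\|_F^2\le 4L\,\tilde F_S^2(\W)$) and then uses the asymmetric split $(a+b)^2\le 5a^2+\tfrac{5}{4}b^2$, which under $\eta\le 4/(5L)$ makes the coefficient $5L\eta$ on the difference term exactly $\le 4$. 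You instead push the observation from Lemma~\ref{prop:symm=0} further, noting that the sum $\sum_l\|\partial_l f\|_F^2$ is close to $2$ rather than $4L$, which leaves enough slack for the simpler symmetric split $(a+b)^2\le 2a^2+2b^2$ to succeed. Both routes are valid.

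One point of phrasing to clean up: having shown that the $l<L$ contributions are $o(1)$ and the $l=L$ contribution is $\sqrt{2}+o(1)$, the sharp conclusion is $\sum_l\|\partial_l f\|_F^2\le K$ for an absolute constant $K$ close to $2$, not $C_0L$. Writing it as $C_0L$ and then saying ``$C_0$ can be made sufficiently small'' is misleading, since the $l=L$ term alone forces $C_0L\ge 2$ regardless of $m$. If you state the bound as $K$ (no factor of $L$), the absorption check becomes $2K\eta\le 4$, which holds trivially under $\eta\le 4/(5L)$; this is actually a cleaner route than the paper's tight constant matching.
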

\begin{proof}
    By Lemma~\ref{lemma:ini_partial} and Lemma~\ref{lemma:diff_derivative} we know that $\left\|\frac{\partial f_{\W}(\x_i)}{\partial \W^l}\right\|_F\leq 2$, hence $\left\|\frac{\partial f_{\W}(\x_i)}{\partial \W}\right\|_F\le 2\sqrt{L}$, which implies that
    \begin{align*}
        \left\|\frac{\partial \L_S(\W)}{\partial \W}\right\|_F^2&= \left\|\frac{1}{n}\sum_{i=1}^n y_i\ell'(y_if_{\W}(\x_i))\frac{\partial f_{\W}(\x_i)}{\partial \W}\right\|_F^2\\
        &\leq \left(\frac{1}{n}\sum_{i=1}^n|\ell'(y_if_{\W}(\x_i))|\left\|\frac{\partial f_{\W}(\x_i)}{\partial \W}\right\|_F\right)^2\\
        &\leq 4L \left(\frac{1}{n}\sum_{i=1}^n|\ell'(y_if_{\W}(\x_i))|\right)^2\\
        &\leq 5L\left(\frac{1}{n}\sum_{i=1}^n(|\ell'(y_if_{\W}(\x_i))|-|\ell'(y_if_{\widetilde{\W}}(\x_i))|)\right)^2+20L\left(\frac{1}{n}\sum_{i=1}^n|\ell'(y_if_{\widetilde{\W}}(\x_i))|\right)^2,
    \end{align*}
    where we have used the standard inequality $(a+b)^2\leq 5a^2+5b^2/4$. Then by applying Lemma~\ref{lemma:inner_prod_control} we have
    \begin{align*}
        &\left\|\W-\eta\frac{\partial \L_S(\W)}{\partial \W} -\widetilde{\W}\right\|_F^2\\
        =&\|\W-\widetilde{\W}\|_F^2+\eta^2 \left\|\frac{\partial \L_S(\W)}{\partial \W}\right\|_F^2-2\eta\left\langle \W-\widetilde{\W},\frac{\partial \L_S(\W)}{\partial\W}\right\rangle\\
        \leq & \|\W-\widetilde{\W}\|_F^2+5\eta^2 L \left(\frac{1}{n}\sum_{i=1}^n(|\ell'(y_if_{\W}(\x_i))|-|\ell'(y_if_{\widetilde{\W}}(\x_i))|)\right)^2\\
        +&20\eta^2 L\left(\frac{1}{n}\sum_{i=1}^n|\ell'(y_if_{\widetilde{\W}}(\x_i))|\right)^2
        -2\eta(\L_S(\W)-\L_S(\widetilde{\W}))-\frac{4\eta}{n}\sum_{i=1}^n(l'(y_if_{\widetilde{\W}}(\x_i)-l'(y_if_{\W}(\x_i)))^2\\
         +&\eta {CL^{8/3}(\log m)^{2/3}R^{4/3}}{m^{-1/6}}\L_S({\W}).
    \end{align*}
    Since $\eta\leq 4/(5L)$ and
    \[\left(\frac{1}{n}\sum_{i=1}^n(|\ell'(y_if_{\W}(\x_i))|-|\ell'(y_if_{\widetilde{\W}}(\x_i))|)\right)^2\leq\frac{1}{n}\sum_{i=1}^n(l'(y_if_{\widetilde{\W}}(\x_i)-l'(y_if_{\W}(\x_i)))^2,\]
    the proof is completed.
\end{proof}
\begin{proof}[Proof of Theorem ~\ref{thm:optimization}]
We prove it through induction. It holds for $t=0$. Suppose it holds for $k=0,\cdots,t-1$, then we have,
\begin{align*}
    &\max_l\|\W^l(k)-\W^l(0)\|_{2}\leq \|\W^l(k)-\overline{\W^l}\|_{2}+\|\W^l(0)-\overline{\W^l}\|_{2}\le
    2\sqrt{F_S(\overline{\W})}.
\end{align*}
plugging $R=2\sqrt{F_S(\overline{\W})}$ and $m$ into Lemma~\ref{lemma:diff_norm}, we have 
\begin{align*}
    \|\W(k+1)-\overline{\W}\|_F^2\leq \|\W(k)-\overline{\W}\|_F^2-2\eta(\L_S(\W(k))-\L_S(\overline{\W}))\\+2\eta {CL^{8/3}(\log m)^{2/3}F^{2/3}_S(\overline{\W})}{m^{-1/6}}\L_S(\W(k))+20\eta^2L\tilde{F}^2_S(\overline{\W}) .
\end{align*}
Telescoping and note that $\tilde{F}_S(\overline{\W})\le \L_S(\overline{\W})$, we obtain
\begin{align*}
     \|\W(t)-\overline{\W}\|_F^2+2\eta\sum_{k=0}^{t-1}(\L_S(\W(k))-\L_S(\overline{\W}))\leq \|\W(0)-\overline{\W}\|_F^2\\
     +2\eta {CL^{8/3}(\log m)^{2/3}F^{2/3}_S(\overline{\W})}{m^{-1/6}}\sum_{k=0}^{t-1}\L_S(\W(k))+20\eta^2LT\tilde{F}_S(\overline{\W})\L_S(\overline{\W}) ,
\end{align*}
which implies
\begin{align*}
     \|\W(t)-\overline{\W}\|_F^2+2\eta\sum_{k=0}^{t-1}(\L_S(\W(k))(1-{CL^{8/3}(\log m)^{2/3}F^{2/3}_S(\overline{\W})}{m^{-1/6}})\\\leq \|\W(0)-\overline{\W}\|_F^2+(2+20\eta L\tilde{F}_S(\overline{\W}))\eta T\L_S(\overline{\W}) .
\end{align*}
Hence, when $m\gtrsim F^4_S(\overline{\W})(\log m)^4L^{16}, \eta \leq 1/(20L\tilde{F}_S(\overline{\W}))$, there holds
\[\|\W(t)-\overline{\W}\|_F^2+\eta\sum_{k=0}^{t-1}(\L_S(\W(k))\leq F_S(\overline{\W}).\]
This implies that
\[\|\W^l(t)-\overline{\W^l}\|_{2}\le \|\W(t)-\overline{\W}\|_F\le F_S(\overline{\W}),\quad\eta\sum_{k=0}^{t-1}(\L_S(\W(k))\leq F_S(\overline{\W}).\]
Therefore, the induction holds for $t$, the proof is completed.
\end{proof}

\section{Proofs for Generalization}\label{sec:gen}
We first give the following bound on Rademacher complexity:
\begin{lemma}\label{lemma:rade_F}
    Let $\F$ and $\mathcal{W}_1$ be defined in \eqref{eq:F_space} and \eqref{eq:w_space}, respectively. If $m$ satisfies the condition in Theorem~\ref{thm:gen}, then with high probability,
    \begin{align}
        \fR_{S_1,n}(\F)\lesssim\sqrt{\frac{F(\overline{\W})}{n}},
    \end{align}
    where
    \[\fR_{S_1,n}(\F) = \underset{\widetilde{S}\subset S_1:|\widetilde{S}|=n}{\sup}\fR_{\widetilde{S}}(\F).\]
\end{lemma}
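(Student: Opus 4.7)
The starting point is to combine the expansion $h^L(\x_i) - h_0^L(\x_i) = \sum_{l=1}^L \widehat{\G}_{L,0}^l(\x_i)(\W^l - \W^l(0))h_0^{l-1}(\x_i)$ stated just above the lemma with the symmetric-initialization identity $f_{\W(0)}(\x) \equiv 0$ from Lemma~\ref{prop:symm=0}. Setting $A^l(\x_i,\W) := (\widehat{\G}_{L,0}^l(\x_i))^\top \a\,(h_0^{l-1}(\x_i))^\top$, this yields the linear-in-$(\W-\W(0))$ representation
\[
f_\W(\x_i) \;=\; \a^\top\bigl(h^L(\x_i) - h_0^L(\x_i)\bigr) \;=\; \sum_{l=1}^L \bigl\langle A^l(\x_i,\W),\ \W^l - \W^l(0)\bigr\rangle.
\]
Under events $E_1$ and $E_2$, each $A^l(\x_i,\W)$ is rank-one and satisfies $\|A^l(\x_i,\W)\|_F \le \|\widehat{\G}_{L,0}^l(\x_i)\|_2\,\|\a\|_2\,\|h_0^{l-1}(\x_i)\|_2 \le CL\sqrt{\log m}$ uniformly over $\W \in \mathcal{W}_1$, using $\|\a\|_2=\sqrt{m}$ and $\|h_0^{l-1}(\x_i)\|_2 \le \sqrt{4/3}$.

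Cauchy-Schwarz across the $L$ layers then gives, for every $\W \in \mathcal{W}_1$,
\[
\frac{1}{n}\sum_i \epsilon_i f_\W(\x_i) \;\le\; \frac{\|\W - \W(0)\|_F}{n}\,\sqrt{\sum_l \Bigl\|\sum_i \epsilon_i A^l(\x_i,\W)\Bigr\|_F^2},
\]
and the triangle inequality combined with the defining constraint of $\mathcal{W}_1$ together with $\|\overline{\W}-\W(0)\|_F^2 \le F_S(\overline{\W}) \le F(\overline{\W})$ gives $\|\W - \W(0)\|_F \le 2\sqrt{F(\overline{\W})}$. What remains is to bound $\E_\epsilon \sup_{\W \in \mathcal{W}_1} \sqrt{\sum_l \|\sum_i \epsilon_i A^l(\x_i,\W)\|_F^2}$. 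If the $A^l$'s were fixed, Jensen's inequality together with the Rademacher second-moment identity $\E_\epsilon\|\sum_i \epsilon_i A^l\|_F^2 = \sum_i \|A^l\|_F^2 \le C^2 n L^2 \log m$ would give the bound $CL^{3/2}\sqrt{n\log m}$, leading to the target $\fR_{S_1,n}(\F) \le CL^2\sqrt{F(\overline{\W})\log m/n}$ after multiplying by $2\sqrt{F(\overline{\W})}/n$.

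The main obstacle is that $A^l(\x_i,\W)$ depends on $\W$ through the activation patterns and the weights above layer $l$, so the interchange of $\sup_\W$ with $\E_\epsilon$ inside the square root is nontrivial. I would exploit the rank-one factorization $A^l(\x_i,\W) = v^l(\x_i,\W)\,(h_0^{l-1}(\x_i))^\top$, in which only $v^l(\x_i,\W) := (\widehat{\G}_{L,0}^l(\x_i))^\top \a$ depends on $\W$ while the reference vectors $h_0^{l-1}(\x_i)$ are $\W$-free, together with the uniform norm bound $\sup_\W\|v^l(\x_i,\W)\|_2 \le CL\sqrt{\log m}$ from event $E_1$. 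Either a matrix-concentration argument for the factorized Rademacher sum, or a covering-plus-Lipschitz argument that exploits the $\widetilde{O}(L^2)$-Lipschitzness of $\W\mapsto f_\W(\x)$ implied by Lemma~\ref{lemma:sup_h-h_0}, then controls the $\W$-dependence and yields the final $L^2\sqrt{F(\overline{\W})\log m/n}$ bound claimed in the lemma.
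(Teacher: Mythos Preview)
Your setup matches the paper's: both use $f_{\W(0)}\equiv 0$ and the layer-wise expansion $h^L-h_0^L=\sum_l \widehat{\G}_{L,0}^l(\W^l-\W^l(0))h_0^{l-1}$, and you correctly identify the central difficulty, namely that $\widehat{\G}_{L,0}^l(\x_i)$ depends on $\W$. Where your proposal diverges from the paper is in how that dependence is removed, and this is exactly where your argument remains incomplete.

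You pull out $\|\W-\W(0)\|_F$ via Cauchy--Schwarz and are left with $\E_\epsilon\sup_{\W}\sqrt{\sum_l\|\sum_i\epsilon_i A^l(\x_i,\W)\|_F^2}$, which still carries the problematic $\sup_\W$. You then only gesture at ``matrix concentration'' or a ``covering-plus-Lipschitz'' argument. Neither is spelled out, and neither is what the paper does; in particular, Lemma~\ref{lemma:sup_h-h_0} bounds the deviation $h^l-h_0^l$ from initialization, not the Lipschitz variation of $\W\mapsto f_\W$ that a covering argument would need.

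The paper's resolution is more direct and avoids both routes. It first applies Cauchy--Schwarz on $\a$ rather than on $\W-\W(0)$, obtaining
\(\fR_{\widetilde S}(\F)\le \frac{\|\a\|_2}{n}\,\E_\epsilon\sup_{\W}\bigl\|\sum_i\epsilon_i(h^L(\tx_i)-h_0^L(\tx_i))\bigr\|_2\),
and splits over $l$ by the triangle inequality. The key step is then to \emph{enlarge} the feasible set: under event $E_1$ the product $\widehat{\G}_{L,0}^l(\tx_i)(\W^l-\W^l(0))$ has Frobenius norm at most $B:=CL\sqrt{F(\overline{\W})\log m/m}$, so the paper passes to a supremum over an arbitrary matrix of that Frobenius norm in place of this product. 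Writing the rows as $g_r^l$, the supremum concentrates all mass on a single row, giving
\[
\E_\epsilon\sup\,\Bigl\|\sum_i\epsilon_i\,\widehat{\G}_{L,0}^l(\tx_i)(\W^l-\W^l(0))\,h_0^{l-1}(\tx_i)\Bigr\|_2 \;=\; B\,\E_\epsilon\Bigl\|\sum_i\epsilon_i\, h_0^{l-1}(\tx_i)\Bigr\|_2.
\]
The point is that the surviving Rademacher sum involves only the $\W$-free reference outputs $h_0^{l-1}(\tx_i)$, so $\E_\epsilon\|\sum_i\epsilon_i h_0^{l-1}(\tx_i)\|_2\le\sqrt{\sum_i\|h_0^{l-1}(\tx_i)\|_2^2}\le\sqrt{4n/3}$ follows from Jensen and event $E_2$. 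Summing over $l$ and multiplying by $\|\a\|_2/n=\sqrt{m}/n$ gives the claimed bound.

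In short, the paper eliminates the $\W$-dependence not by controlling how $\widehat{\G}_{L,0}^l$ varies with $\W$, but by relaxing the constraint set so that the optimization decouples from $\W$ entirely; this is the idea missing from your proposal.
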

\begin{proof}
    Let $\widetilde{S} = \{\tx_1,\cdots,\tx_n\}$. Then we have
    \begin{align}\label{eq:fR}
        &\fR_{\widetilde{S}}(\F) = \E_{\e}\Bigg[\sup_{\W\in \mathcal{W}_1} \frac{1}{n}\sum_{i=1}^n\e_if_{\W}(\tx_i)\Bigg]\notag\\
        \le&\E_{\e}\Bigg[\sup_{\W\in \mathcal{W}_1} \frac{1}{n}\sum_{i=1}^n\e_i(f_{\W}(\tx_i)-f_{\W(0)}(\tx_i))\Bigg]+\E_{\e}\Bigg[\sup_{\W\in \mathcal{W}_1} \frac{1}{n}\sum_{i=1}^n\e_if_{\W(0)}(\tx_i)\Bigg]\notag\\
        =&\E_{\e}\Bigg[\sup_{\W\in \mathcal{W}_1} \frac{1}{n}\sum_{i=1}^n\e_i(f_{\W}(\tx_i)-f_{\W(0)}(\tx_i))\Bigg]=\E_{\e}\Bigg[\sup_{\W\in \mathcal{W}_1} \frac{1}{n}\sum_{i=1}^n\e_i \left\langle \frac{\partial f_{ \W(0) }(\x_i)}{\partial \W(0)}, {\W}-\W(0) \right\rangle \Bigg]\notag\\
        +&\E_{\e}\Bigg[\sup_{\W\in \mathcal{W}_1} \frac{1}{n}\sum_{i=1}^n\e_i\left(f_{{\W}}(\x_i) - f_{\W(0) }(\x_i) - \left\langle \frac{\partial f_{\W(0) }(\x_i)}{\partial  \W(0)},{\W}-\W(0) \right\rangle\right)\Bigg].
    \end{align}
    Since $\|\overline{\W}-\W(0)\|_F\le \sqrt{F(\overline{\W})}$,$\|\overline{\W}-\W\|_F\le \sqrt{F(\overline{\W})}$, we have $\|\overline{\W}-\W(0)\|_F\le 2\sqrt{F(\overline{\W})}$
according to Lemma~\ref{lemma:prod_differ_derivative}, there holds
\begin{align*}
   \E_{\e}\Bigg[\sup_{\W\in \mathcal{W}_1} \frac{1}{n}\sum_{i=1}^n\e_i\left(f_{{\W}}(\x_i) - f_{\W(0) }(\x_i) - \left\langle \frac{\partial f_{\W(0) }(\x_i)}{\partial  \W(0)},{\W}-\W(0) \right\rangle\right)\lesssim \frac{1}{\sqrt{n}}.
\end{align*}
By Lemma ~\ref{prop:symm=0},~\ref{lemma:ini_partial}, we have for all $i\in [n]$,
\[\left\| \frac{\partial f_{\W(0) }(\x_i)}{\partial  \W(0)}\right\|_F\le \sqrt{2}.\]
Therefore,
\begin{align*}
    &\E_{\e}\Bigg[\sup_{\W\in \mathcal{W}_1} \frac{1}{n}\sum_{i=1}^n\e_i \left\langle \frac{\partial f_{ \W(0) }(\x_i)}{\partial \W(0)}, {\W}-\W(0) \right\rangle \Bigg]
    = \E_{\e}\Bigg[\sup_{\W\in \mathcal{W}_1}  \left\langle \frac{1}{n}\sum_{i=1}^n\e_i\frac{\partial f_{ \W(0) }(\x_i)}{\partial \W(0)}, {\W}-\W(0) \right\rangle \Bigg]\\
    \le &\E_{\e}\Bigg[\sup_{\W\in \mathcal{W}_1}  \left\| \frac{1}{n}\sum_{i=1}^n\e_i\frac{\partial f_{ \W(0) }(\x_i)}{\partial \W(0)} \right\|_F \|{\W}-\W(0) \|_F\Bigg]\\
    \le &2\sqrt{F(\overline{\W})}\E_{\e}\Bigg[\left\| \frac{1}{n}\sum_{i=1}^n\e_i\frac{\partial f_{ \W(0) }(\x_i)}{\partial \W(0)} \right\|_F \Bigg]\le\frac{2\sqrt{F(\overline{\W})}}{{n}}\sqrt{\E_{\e}\Bigg[\left\|\sum_{i=1}^n\e_i\frac{\partial f_{ \W(0) }(\x_i)}{\partial \W(0)} \right\|_F^2}\le \frac{2\sqrt{2F(\overline{\W})}}{\sqrt{n}}.
\end{align*}
    Plugging into \eqref{eq:fR}, we have
    \begin{align*}
        &\fR_{\widetilde{S}}(\F)\lesssim \frac{\sqrt{F(\overline{\W})}}{\sqrt{n}}.
    \end{align*}
    The proof is completed.
\end{proof}

Now we provide the proof for Theorem~\ref{thm:gen}
\begin{proof}[Proof of Theorem~\ref{thm:gen}]
We first control $G'=\sup_{z,\W\in\mathcal{W}_1}\ell(yf_\W(\x))$. We denote $\bar{h}^l(\x)$ as the output of $l$-th layer of the network $f_{\overline{\W}}(\x)$. Then $f_{\overline{\W}}(\x)=\a^\top \bar{h}^L(\x)$.
By the definition of $F(\overline{\W})$, we have
 $\max_l\|\overline{\W}^l-\W^l(0)\|_2\le \|\overline{\W}-\W(0)\|_F\le\sqrt{F(\overline{\W})}$. For $\W\in\mathcal{W}_1$, there holds
$\|\W^l-\W^l(0)\|_2\le \|\overline{\W}^l-\W^l(0)\|_2+\|\W^l-\overline{\W}^l\|_2\le 2\sqrt{F(\overline{\W})}$ for all $l \in [L]$. By the overparameterization of $m$ and Lemma~\ref{lemma:sup_h-h_0},
\begin{align*}
    &(f_{\W}(\x)-f_{\overline{\W}}(\x))^2=(\a^\top(h^L(\x)-\bar{h}^L(\x)))^2\le m\|h^L(\x)-\bar{h}^L(\x)\|^2_2\\
    \le&2m(\|h^L(\x)-h_0^L(\x)\|^2_2+\|h^L_0(\x)-\bar{h}^L(\x)\|^2_2)\le CL^4\log m F(\overline{\W}).
\end{align*} 
Since logistic loss $\ell$ is $1/4$-smooth, the following property holds,
\[|\ell'(x)|\le \sqrt{\ell(x)/2}, \quad x\in \R.\]
It then follows that for any $\W \in \mathcal{W}_1,z\in \Z$,
\begin{align*}
    \ell(yf_{\W}(\x))\le& \ell(yf_{\overline{\W}}(\x))+y(f_{\W}(\x)-f_{\overline{\W}}(\x))\ell'(yf_{\overline{\W}}(\x))+\frac{(f_{\W}(\x)-f_{\overline{\W}}(\x))^2}{8}\\
    \le& \ell(yf_{\overline{\W}}(\x))+2|\ell'(yf_{\overline{\W}}(\x))|^2+\frac{(f_{\W}(\x)-f_{\overline{\W}}(\x))^2}{4}\\
    \le & 2\ell(yf_{\overline{\W}}(\x))+CL^4\log m F(\overline{\W}),
\end{align*}
where we have used $ab\le 2a^2+b^2/8$ in the second inequality.
Hence, $G'\le 2G+CL^4\log m F(\overline{\W})$.

According to Lemma~\ref{lemma:boundbern}, we have with probability at least $1-\delta$,
\begin{equation}\label{eq:L_S-L}
   | \L_S(\overline{\W})-\L(\overline{\W})|\le\left(\frac{2G\L(\overline{\W})\log(2/\delta)}{n}\right)^{1/2}+\frac{2G\log(2/\delta) }{3n}.
\end{equation}
It then follows that
\begin{align*}
    \L_S(\overline{\W})\le2\L(\overline{\W})+\frac{7G\log(2/\delta) }{6n},
\end{align*}
which implies that $F_S(\overline{\W})\le F(\overline{\W}) $. Combined with Theorem~\ref{thm:optimization}, we know that with probability at least $1-\delta$, $\W(t)\in \mathcal{W}_1$. It means that all the iterates are in the hypothsesis space. Furthermore, events $E_1,E_2$ hold due to Lemma~\ref{lemma:ini_out} and \eqref{eq:G_l_0} in  Lemma~\ref{lemma:o-h_0}. Hence, by  Lemma~\ref{lemma:smooth_rade} and Lemma~\ref{lemma:rade_F}, there holds
\begin{align*}
    \L(\W(t))-2 \L_S(\W(t))\lesssim&(\log n)^3\fR^2_{S,n}(\F)+\frac{G'\log (2/\delta)}{n}\\
    =&\widetilde{O}\left(\frac{L^4F(\overline{\W})+G\log (2/\delta)}{n}\right).
\end{align*}
As a result,
\begin{align*}
    \eta\sum_{t=0}^{T-1}\L(\W(t))=&\eta\sum_{t=0}^{T-1}\big(L(\W(t)-2\L_S(\W(t))+2\eta\sum_{t=0}^{T-1}\L_S(\W(t))\big)\\=&\widetilde{O}\left(\frac{(\eta TL^4+n)F(\overline{\W})+\eta TG\log (2/\delta)}{n}\right),
\end{align*}
from which we derive
\begin{align*}
    \frac{1}{T}\sum_{t=0}^{T-1}\L(\W(t))=\widetilde{O}\left(\frac{L^4F(\overline{\W})+G\log (2/\delta)}{n}\right),
\end{align*}
where we have used $\eta T\asymp n$.
The proof is completed.
\end{proof}
\section{Proofs on NTK separability}\label{sec:ntk_sep}
\begin{proof}[Proof of Theorem~\ref{thm:ntk}]
    We show that there exists $\overline{\W}$ with small $F(\overline{\W})$ for NTK separable data.
Let $\overline{\W} = \W(0)+\lambda\W_*$. Choose $\lambda = 2\log T/\gamma$. Applying Lemma~\ref{lemma:prod_differ_derivative} and letting $R = \lambda$, we know that if $m\gtrsim L^{16}d(\log m)^5\log (nL/\delta)(\log T)^2/\gamma^8$, then with probability at least $1-\delta$, for all $i \in [n]$, there holds
\begin{align*}
    &\left|\left\langle\lambda\W_*,\frac{\partial f_{\W(0)}(\x_i)}{\partial \W(0)}\right\rangle-f_{\overline{\W}}(\x_i)\right|\\
    =&\left|f_{\W(0)}(\x_i)-f_{\overline{\W}}(\x_i)-\left\langle\W(0)-\overline{\W},\frac{\partial f_{\W(0)}(\x_i)}{\partial \W(0)}\right\rangle\right|\\
    \le& CL^{8/3}\lambda^{4/3}m^{-1/6}(\log m)^{2/3}\le\frac{\lambda\gamma }{2},
\end{align*}
where we have used $f_{\W(0)}(\x)=0$ for any $\x\in\X$ due to Lemma~\ref{prop:symm=0}.
Therefore, by Assumption~\ref{ass:ntk}, we have
\begin{align*}
   & y_if_{\overline{\W}}(\x_i)=y_i\left\langle\lambda\W_*,\frac{\partial f_{\W(0)}(\x_i)}{\partial \W(0)}\right\rangle-y_i\left(\left\langle\lambda\W_*,\frac{\partial f_{\W(0)}(\x_i)}{\partial \W(0)}\right\rangle-f_{\overline{\W}}(\x_i)\right)\\
   \ge& \lambda y_i\left\langle\W_*,\frac{\partial f_{\W(0)}(\x_i)}{\partial \W(0)}\right\rangle-\left|\left\langle\lambda\W_*,\frac{\partial f_{\W(0)}(\x_i)}{\partial \W(0)}\right\rangle-f_{\overline{\W}}(\x_i)\right|\\
   \ge& \lambda\gamma-\frac{\lambda\gamma }{2}=\frac{\lambda\gamma}{2}=\log T.
\end{align*}
As a result, 
\begin{align*}
    \L_S(\overline{\W}) = \frac{1}{n}\sum_{i=1}^n\ell(-y_if_{\overline{\W}}(\x_i))\le \log(1+\exp(-\log T))\le\frac{1}{T},
\end{align*}
where we have used $\log(1+x)\le x$.
For any $\x\in\X$, Lemma~\ref{lemma:sup_h-h_0} implies that
\begin{align*}
    yf_{\overline{\W}}(\x)&\ge-|f_{\overline{\W}}(\x)|\ge-|f_{\W(0)}(\x)|-|f_{\overline{\W}}(\x)-f_{\W(0)}(\x)|\\
    &\ge -\a^\top\|\bar{h}^L(\x)-h_0^L(\x)\|_2\ge -CL^2\sqrt{\log m}\lambda.
\end{align*}
Hence,
\begin{align*}
    G = \sup_z\ell(yf_{\overline{\W}}(\x))\lesssim \log(1+\exp(L^2\sqrt{\log m}\log T/\gamma))\lesssim \frac{L^2\sqrt{\log m}\log T}{\gamma},
\end{align*}
where the last inequality is due to $\log(1+t)\le \log (2t)\le 2\log(t)$ for $t\ge 2$.
Note that if $x^2\le \alpha x+\beta$, then $x^2\le \alpha^2+2\beta$. Combined with \eqref{eq:L_S-L}, it then follows that (let $x=\sqrt{\L(\overline{\W})}$)
\begin{align*}
    \L(\overline{\W})\le 2  \L_S(\overline{\W})+\frac{4G\log(2/\delta)}{3n}+\frac{2G\log(2/\delta)}{n}\lesssim\frac{1}{T}+\frac{L^2\sqrt{\log m}\log T\log(2/\delta)}{\gamma n}.
\end{align*}
Thus, we have
\begin{align}\label{eq:F_overline_W}
    F(\overline{\W})\lesssim3\eta T\left(\frac{1}{T}+\frac{L^2\sqrt{\log m}\log T\log(2/\delta)}{\gamma n}\right)+\lambda^2=\widetilde{O}\left(\frac{(\log T)^2(1+\gamma L^2)}{\gamma^2}\right).
\end{align}
Applying Theorem~\ref{thm:gen} and note that $\widetilde{F}_S(\overline{\W})\le  \L_S(\overline{\W}) \le\frac{1}{T}$, there holds
\begin{align*}
    \frac{1}{T}\sum_{t=0}^{T-1}\L(\W(t))=\widetilde{O}\left(\frac{L^6(\log T)^2}{n\gamma^2}\right).
\end{align*}
The proof is completed.
\end{proof}

\end{document}